\newtheorem{theorem}{Theorem}
\newtheorem{lemma}[theorem]{Lemma}
\newcommand{\oea}{\mbox{${(1 + 1)}$~EA}\xspace}
\newcommand{\NSGA}{\mbox{NSGA}\nobreakdash-II\xspace}
\DeclareMathOperator{\E}{E}
\renewcommand*{\d}{\,\mathrm{d}}
\newcommand{\R}{\ensuremath{\mathbb{R}}}
\newcommand{\N}{\ensuremath{\mathbb{N}}}
\newcommand{\Z}{\ensuremath{\mathbb{Z}}}
\let\originalleft\left
\let\originalright\right
\renewcommand{\left}{\mathopen{}\mathclose\bgroup\originalleft}
\renewcommand{\right}{\aftergroup\egroup\originalright}
\renewcommand\t{\top
}    
\title{Runtime Analysis for Multi-Objective Evolutionary Algorithms in Unbounded Integer Spaces}
\author{
  Benjamin Doerr\textsuperscript{\rm 1},
  Martin~S. Krejca\textsuperscript{\rm 1},
  Günter Rudolph\textsuperscript{\rm 2}
}
\begin{document}

\maketitle

\sloppy{
\begin{abstract}
  Randomized search heuristics have been applied successfully to a plethora of problems.
  This success is complemented by a large body of theoretical results.
  Unfortunately, the vast majority of these results regard problems with binary or continuous decision variables -- the theoretical analysis of randomized search heuristics for unbounded integer domains is almost nonexistent.
  To resolve this shortcoming, we start the runtime analysis of multi-objective evolutionary algorithms, which are among the most successful randomized search heuristics, for unbounded integer search spaces.
  We analyze single- and full-dimensional mutation operators with three different mutation strengths, namely changes by plus/minus one (unit strength), random changes following a law with exponential tails, and random changes following a power-law. The performance guarantees we prove on a recently proposed natural benchmark problem suggest that unit mutation strengths can be slow when the initial solutions are far from the Pareto front. When setting the expected change right (depending on the benchmark parameter and the distance of the initial solutions), the mutation strength with exponential tails yields the best runtime guarantees in our results --  however, with a wrong choice of this expectation, the performance guarantees quickly become highly uninteresting. With power-law mutation, which is an essentially parameter-less mutation operator, we obtain good results uniformly over all problem parameters and starting points.
  We complement our mathematical findings with experimental results that suggest that our bounds are not always tight. Most prominently, our experiments indicate that power-law mutation outperforms the one with exponential tails even when the latter uses a near-optimal parametrization. Hence, we suggest to favor power-law mutation for unknown problems in integer spaces.
\end{abstract}

\section{Introduction}

For more than thirty years, the mathematical runtime analysis of randomized search heuristics has supported the design and analysis of these important algorithms, both in single- and in multi-objective optimization~\cite{NeumannW10,AugerD11,Jansen13,ZhouYQ19,DoerrN20}. While in practice heuristics are successfully employed for all types of decision variables, the mathematical analysis is mostly concentrated on binary or continuous variables.
Discrete spaces with more than two variable values are considered far more infrequently. Theoretical works include runtime analyses for evolutionary algorithms, ant-colony optimizers, and estimation-of-distribution algorithms on categorical variables, e.g., \citet{ScharnowTW04,BaswanaBDFKN09,SudholtT12,DoerrHK12,BenJedidiaDK24,AdakW24}. The theoretical research for cardinal variables was started by \citet{DoerrJS11,DoerrP12,KotzingLW15} with analyses how the \oea optimizes multi-valued linear functions. \citet{DoerrDK18,DoerrDK19} showed that for multi-valued variables larger mutation rates can be advantageous.
Submodular functions with multi-valued discrete domain were studied by \citet{QianSTZ18submodular,QianZTY18}.
We are only aware of two analyses for search spaces consisting of unbounded integer variables.
The first is \cite{Rudolph23}, which is a single-objective analysis of subproblems of a multi-objective problem.
The second is \cite{HarderKLRR24IntegerValued}, which considers a single-objective problem and shows the benefit of larger mutation rates.
We are not aware of any true multi-objective works with multi-valued variables, despite considerable recent theoretical research on multi-objective heuristics \cite{ZhengLD22,DangOSS23aaai,CerfDHKW23,DoNNS23,DinotDHW23,ZhengD24,ZhengLDD24,BianRLQ24,RenBLQ24}.

Since multi-objective optimization is an area where heuristics, in particular, evolutionary algorithms, are intensively used and with great success (see, e.g., the famous \NSGA algorithm~\cite{DebPAM02} with more than $50\,000$ citations on Google Scholar), we start in this work the mathematical runtime analysis for truly multi-objective optimization problems in unbounded integer search spaces. For the benchmark problems proposed by Rudolph~\shortcite{Rudolph23}, we analyze the performance of the \emph{simple evolutionary multi-objective optimizer (SEMO)}~\cite{LaumannsTZWD02} and the \emph{Global SEMO (GSEMO)} \cite{Giel03}, the two most prominent evolutionary algorithms in the runtime analysis of multi-objective evolutionary algorithms. Our objective is to understand, via theoretical means, what are suitable mutation operators for unbounded integer domains. We propose three natural operators, changing variables (i)~by plus or minus one (unit mutation strength), (ii)~by a random value drawn from a symmetric distribution with exponential tails, and (iii)~by a random value drawn from a symmetric power-law distribution.

We conduct an extensive mathematical runtime analysis for these two algorithms with the three mutation operators (with general parameters, where applicable) on the benchmark with general width parameter $a$ of the Pareto front for a general initial solution $x^{(0)}$. Our results, presented in more detail later in this work when all ingredients are made precise, indicate the following. All algorithm variants can compute the full Pareto front of our benchmark problem in reasonable time. The unit mutation strength, not surprisingly can lead to a slow progress towards the Pareto front when the initial solution is far, but it also results in a slow exploration of the Pareto front after having reached it. The performance with the exponential-tail mutation strength depends heavily on the variance parameter~$q$ (which is essentially the reciprocal of the expected absolute change). With an optimal choice of $q$, depending on the benchmark parameter $a$ and the starting solution $x^{(0)}$, this operator leads to the best performance guarantee among our results. However, our runtime guarantees strongly depend on the relation of $q$, $a$, and $x^{(0)}$; hence a suboptimal choice of $q$ can quickly lead to very weak performance guarantees. The power-law mutation strength is a good one-size-fits-all solution. Being an essentially parameter-less operator, it achieves a good performance uniformly over all values of $a$ and $x^{(0)}$, clearly beating the unit mutation strength for larger instances or distances of the starting solution from the Pareto front.

We complement our theoretical results by an empirical analysis, aimed to understand how tight our guarantees are.
We observe that the best-possible parameter regime for the exponential-tail mutation is within the range of our mathematical findings.
However, surprisingly, the exponential-tail mutation is not able to outperform the power-law mutation, even with a near-optimal parametrization of the former.
This suggests that our bounds for the power-law mutation are not tight.
In fact, our experiments indicate that the actual expected runtime for this operator in the considered setting is linear in the size of the Pareto front, whereas our guarantees bound it by a polynomial with an exponent between~$1$ and~$2$, depending on the parametrization of the operator.
We speculate that the actual linear runtime is a result of the complex population dynamics, which, to the best of our knowledge, have not been studied in the detail necessary for an improvement in any theoretical study of the (G)SEMO algorithms.

Overall, our work shows that standard heuristics with appropriate mutations can deal well with certain multi-objective problems with unbounded integer decision variables. We strongly suggest the parameter-less power-law operator, as it has shown the best performance empirically and also has a uniformly good performance guarantee in a wide range of situations in our theoretical findings.

Our proofs can be found in the appendix.

\section{Preliminaries}
\label{sec:preliminaries}

The natural numbers~$\N$ include~$0$.
For $a, b \in \R$, let $[a .. b] = [a, b] \cap \N$, define $[a] \coloneqq [1 .. a]$, and let $\N_{\geq a} = [a, \infty) \cap \N$.

Given $n, d \in \N_{\geq 2}$, we call $f\colon \Z^n \to \R^d$ a \emph{$d$-objective function}, which we aim to minimize.
We call a point $x \in \Z^n$ an \emph{individual}, and~$f(x)$ the \emph{objective value of~$x$}.
For $i \in [n]$ and $j \in [d]$, let~$x_i$ denote the $i$-th component of~$x$, and let~$f_j(x)$ denote the $j$-th component of~$f(x)$.

The objective values of a $d$-objective function~$f$ follow a weak partial order, denoted by~$\preceq$.
For all $u, v \in \R^d$, we say that \emph{$u$ weakly dominates~$v$} ($u \preceq v$) if and only if for all $i \in [d]$, we have $u_i \leq v_i$.
We say that $u$ \emph{strictly} dominates~$v$ ($u \prec v$) if and only if at least one of these inequalities is strict.
We extend this notation to individuals, where a dominance holds if and only if it holds for the respective objective values.

We consider the minimization of~$f$, that is, we are interested in $\preceq$-minimal images.
The set of all objective values that are not strictly dominated, that is, the set $F^* \coloneqq \{f(y) \in \R^d \mid y \in \Z^n \land \nexists x \in \Z^n\colon f(x) \prec f(y)\}$, is the \emph{Pareto front of~$f$}.
Furthermore, the individuals that are mapped to the Pareto front, that is, the set $f^{-1}(F^*)$, is the \emph{Pareto set of~$f$}.

\subsection{Algorithmic Framework}
\label{sec:algorithm}

\begin{algorithm}[t]
  \caption{Algorithmic framework for evolutionary multi-objective minimization of a given $d$-objective function $f\colon \Z^n \to \R^d$.
    The framework requires an initial individual $x^{(0)} \in \Z^n$ and a mutation operator ``mutation''.
    If this operator modifies exactly one position, the resulting algorithm is the SEMO.
    If it modifies each position randomly and independently, the resulting algorithm is the GSEMO.
  }\label{algo:GSEMO}
  $P^{(0)}=\{x^{(0)}\}$\;
  $t \gets 0$\;
  \While{\emph{termination criterion not met}}{%
    \label{line:selectParent}
    choose $x^{(t)}$ from $P^{(t)}$ uniformly at random\;
    $y^{(t)} \gets \mathrm{mutation}(x^{(t)})$\;\label{line:mutation}
    $Q^{(t)} \gets P^{(t)} \smallsetminus \{z \in P^{(t)}\colon f(y^{(t)}) \preceq f(z)\}$\;
    \lIf{$\not\exists z \in Q^{(t)}\colon f(z) \prec f(y^{(t)})$}{%
      $P^{(t + 1)} \gets Q^{(t)} \cup \{y^{(t)}\}$%
    }
    \lElse{%
      $P^{(t + 1)} \gets Q^{(t)}$%
    }
    $t \gets t + 1$\;
  }
\end{algorithm}

We consider the framework of evolutionary multi-objective minimization (Algorithm~\ref{algo:GSEMO}), aimed at finding the Pareto front of a given $d$-objective function.
The algorithm acts iteratively and maintains a multi-set of individuals (the \emph{population}) that are not strictly dominated by any of the so-far found individuals.
In each iteration, the algorithm creates a new individual~$y$ from a random individual from the population by applying  an operation known as \emph{mutation}.
Afterward, all individuals weakly dominated by~$y$ are removed from the population, and~$y$ is added if it is not strictly dominated by any of the remaining individuals in the population.

\paragraph{Mutation.}
We consider \emph{single-} and \emph{full-dimensional} mutation.
Either acts on a \emph{parent} $x \in \Z^n$, requires a distribution~$D$ on~$\Z$ (the \emph{mutation strength}; see also \emph{Runtime Analysis}), and returns an \emph{offspring} $y \in \Z^n$.

Single-dimensional mutation chooses a single component $i \in [n]$ as well as an independent sample $Z \sim D$ and then sets $y_i = x_i + Z$.
All other components remain unchanged, that is, for all $j \in [n] \setminus \{i\}$, it holds that $y_j = x_j$.
Algorithm~\ref{algo:GSEMO} with single-dimensional mutation results in the \emph{SEMO} algorithm \cite{LaumannsTZWD02}.
Full-dimensional mutation does the following independently for each component $i \in [n]$ of~$x$:
With probability~$1/n$, draw an independent sample $Z_i \sim D$, and set $y_i = x_i + Z_i$.
With the remaining probability, set $y_i = x_i$.
Hence, in expectation, exactly one component of~$x$ is changed, while any number of components may be changed.
Algorithm~\ref{algo:GSEMO} with full-dimensional mutation results in the \emph{global SEMO} (GSEMO) \cite{Giel03}.

\paragraph{Runtime.}
The \emph{runtime} of Algorithm~\ref{algo:GSEMO} is the (random) number of function evaluations until the objective values of the population contain the Pareto front.
We do not re-evaluate individuals, but equal individuals are separately evaluated.
That is, the initial individual is evaluated once, and the algorithm evaluates exactly one solution (namely~$y^{(t)}$) each iteration.
Hence, the runtime is~$1$ plus the number of iterations until the Pareto front is covered.

\subsection{Benchmark Problem}
\label{sec:benchmark}

We consider the following bi-objective benchmark function, introduced by Rudolph~\shortcite{Rudolph23}.
Given a parameter $a \in \N$, the function $f\colon \Z^n \to \N^2$ is defined as
\begin{align*}
  x \mapsto
  \begin{pmatrix}
    |x_1 - a| + \sum_{i \in [2 .. n]} |x_i| \\
    |x_1 + a| + \sum_{i \in [2 .. n]} |x_i|
  \end{pmatrix} .
\end{align*}
This function aims at minimizing the distance to two target points, which is the same idea present in similar benchmarks (\textsc{OneMax} and \textsc{OneMinMax}~\cite{GielL10}) that are used as initial problems for related settings.

The Pareto set and front of~$f$ satisfy \cite{Rudolph23}
\begin{align*}
  X^* & = \{ (k, 0, \ldots, 0)^\t \in \Z^n \mid k \in [-a,a]\cap\Z \} \textrm{ and} \\
  F^* & = \{ (k, 2a - k)^\t \in \N^2 \mid k \in [0 .. 2a] \} .
\end{align*}
We note that $|F^*| = 2a + 1$, as this value plays a crucial role in our analyses \emph{(Runtime Analysis)}.

\subsubsection{Useful Properties of the Benchmark Problem}
\label{sec:runTime:usefulProperties}

We study useful properties of~$f$, partially in the context of Algorithm~\ref{algo:GSEMO}.
Throughout this section, we assume that $a \in \N$ and that $n \in \N_{\geq 2}$.
When we consider an instance of Algorithm~\ref{algo:GSEMO}, we allow for \emph{arbitrary} mutation.

Lemma~\ref{lem:alwaysComparableOutside} shows when a solution whose first coordinate is not in $(-a, a)$ is comparable (w.r.t.~$f$) to any other point.
\begin{lemma}
  \label{lem:alwaysComparableOutside}
  Let $x \in \Z^n$ and $y \in \Z_{\geq a} \times \Z^{n - 1}$ such that $f_1(x) \leq f_1(y)$.
  Then $f(x) \preceq f(y)$.

  Similarly, let $x \in \Z^n$ and $y \in \Z_{\leq -a} \times \Z^{n - 1}$ such that $f_2(x) \leq f_2(y)$.
  Then $f(x) \preceq f(y)$.
\end{lemma}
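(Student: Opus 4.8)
The plan is to reduce both claims to a single observation about the ``skew'' $g(x) \coloneqq f_2(x) - f_1(x)$. Writing out the definition of $f$, the common term $\sum_{i \in [2 .. n]} |x_i|$ cancels, so that $g(x) = |x_1 + a| - |x_1 - a|$ depends only on the first coordinate. A short case distinction on the signs of $x_1 \pm a$ (recall $a \geq 0$) shows that $g(x) = 2a$ whenever $x_1 \geq a$, that $g(x) = 2x_1 \in [-2a, 2a]$ for $x_1 \in [-a, a]$, and that $g(x) = -2a$ for $x_1 \leq -a$. In particular $g$ is nondecreasing in $x_1$ and satisfies $g(x) \leq 2a$ for every $x \in \Z^n$, with equality exactly when $x_1 \geq a$.

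For the first statement, I would first use $y_1 \geq a$ to get $g(y) = 2a$, hence $g(x) \leq g(y)$. Unfolding the definition of $g$ and rearranging, this reads $f_2(x) - f_2(y) \leq f_1(x) - f_1(y)$. The hypothesis $f_1(x) \leq f_1(y)$ makes the right-hand side at most $0$, so $f_2(x) \leq f_2(y)$. Together with $f_1(x) \leq f_1(y)$ this is exactly $f(x) \preceq f(y)$, since weak dominance in this bi-objective setting requires both coordinates of $f(x)$ to be at most those of $f(y)$.

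The second statement is the mirror image under the reflection $x_1 \mapsto -x_1$, which interchanges $f_1$ and $f_2$ and swaps the half-lines $x_1 \geq a$ and $x_1 \leq -a$. Concretely, the quantity to track is now $-g(x) = f_1(x) - f_2(x) = |x_1 - a| - |x_1 + a|$, which by the same case analysis is at most $2a$, with equality exactly when $x_1 \leq -a$. The assumption $y_1 \leq -a$ then gives $-g(y) = 2a \geq -g(x)$, i.e.\ $f_1(x) - f_1(y) \leq f_2(x) - f_2(y) \leq 0$ using the hypothesis $f_2(x) \leq f_2(y)$; hence $f_1(x) \leq f_1(y)$ and again $f(x) \preceq f(y)$.

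There is no genuinely hard step here; the only thing that requires care is identifying the right invariant, namely that the skew $f_2 - f_1$ is independent of the components $x_2, \dots, x_n$ and is globally maximized precisely on the half-line $x_1 \geq a$ on which $y$ is assumed to live. Once this is in place, everything reduces to rearranging one inequality and invoking the given bound on $f_1$ (resp.\ $f_2$). I would double-check the boundary cases $x_1 = \pm a$ to confirm that the piecewise expressions for $g$ agree there, but this is routine.
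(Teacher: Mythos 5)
Your proof is correct and follows essentially the same route as the paper's: both arguments hinge on the observation that the skew $f_2(x) - f_1(x) = |x_1 + a| - |x_1 - a|$ depends only on $x_1$, is at most $2a$ everywhere (you via case analysis, the paper via the triangle inequality), and equals $2a$ precisely on the half-line $x_1 \geq a$, from which the second objective comparison follows by rearranging. Your handling of the second claim as the mirror image under $x_1 \mapsto -x_1$ likewise matches the paper's symmetry argument $f(-z) = (f_2(z), f_1(z))^\top$, only written out explicitly rather than reduced to the first claim.
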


Lemma~\ref{lem:alwaysComparableOutside} implies that the population has at most one solution with its first component at most~$-a$ and at most one with its first component at least~$a$, formalized below.

\begin{lemma}\label{lem:aussen}
  Let $t \in \N$.
  Then $|P^{(t)} \cap (\Z_{\ge a}\times\Z^{n-1})| \le 1$ and $|P^{(t)} \cap (\Z_{\le -a}\times\Z^{n-1})| \le 1$.
\end{lemma}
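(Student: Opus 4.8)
The plan is to combine the comparability result of Lemma~\ref{lem:alwaysComparableOutside} with the key structural invariant maintained by Algorithm~\ref{algo:GSEMO}: at every time~$t$, any two distinct individuals in $P^{(t)}$ have pairwise $\preceq$-incomparable objective values. Once this invariant is in hand, the lemma follows immediately, because Lemma~\ref{lem:alwaysComparableOutside} shows that any two individuals lying in $\Z_{\ge a}\times\Z^{n-1}$ are necessarily comparable, so the population can contain at most one of them.

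First I would establish the invariant by induction on~$t$. The base case $P^{(0)}=\{x^{(0)}\}$ is trivial. For the inductive step I would track the two places where the population changes. In forming $Q^{(t)}$ the algorithm deletes every $z$ with $f(y^{(t)})\preceq f(z)$; since passing to a sub-multiset cannot create new comparabilities, $Q^{(t)}$ still consists of pairwise incomparable individuals. In the branch where $y^{(t)}$ is added, I would verify that $y^{(t)}$ is incomparable to each surviving $z\in Q^{(t)}$: the deletion step guarantees $f(y^{(t)})\not\preceq f(z)$, while the guard of the conditional guarantees that no $z\in Q^{(t)}$ satisfies $f(z)\prec f(y^{(t)})$. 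In the other branch $P^{(t+1)}=Q^{(t)}$, so incomparability is inherited directly.

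The subtle point, and the one I would be most careful about, is the handling of ties: because the population is formally a multi-set, I must rule out two individuals with equal objective value. This is exactly what deleting all $z$ with $f(y^{(t)})\preceq f(z)$ accomplishes, since an equality $f(z)=f(y^{(t)})$ triggers deletion; hence any individual whose objective value equals that of a newly accepted $y^{(t)}$ is removed before $y^{(t)}$ is inserted. Together with the inductive hypothesis this forces all objective values in $P^{(t+1)}$ to be pairwise distinct as well as pairwise incomparable, which is precisely what makes $f(z)\not\preceq f(y^{(t)})$ (not merely $f(z)\not\prec f(y^{(t)})$) hold for the surviving members.

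With the invariant established, I would finish by contradiction. Suppose $P^{(t)}$ contained two distinct individuals $u,v\in\Z_{\ge a}\times\Z^{n-1}$. Assuming without loss of generality that $f_1(u)\le f_1(v)$, Lemma~\ref{lem:alwaysComparableOutside} (applied with $x=u$ and $y=v$) gives $f(u)\preceq f(v)$, contradicting incomparability of distinct population members. Hence $|P^{(t)}\cap(\Z_{\ge a}\times\Z^{n-1})|\le 1$. The bound $|P^{(t)}\cap(\Z_{\le -a}\times\Z^{n-1})|\le 1$ follows by the identical argument using the second statement of Lemma~\ref{lem:alwaysComparableOutside} and the coordinate $f_2$.
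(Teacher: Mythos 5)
Your proof is correct and follows essentially the same route as the paper: the paper's proof is exactly the observation that $P^{(t)}$ contains only pairwise incomparable solutions (taken directly from the definition of Algorithm~\ref{algo:GSEMO}) combined with Lemma~\ref{lem:alwaysComparableOutside}, which makes any two points of $\Z_{\ge a}\times\Z^{n-1}$ (respectively $\Z_{\le -a}\times\Z^{n-1}$) comparable. The only difference is that you spell out by induction the incomparability invariant -- including the tie-handling via the weak-dominance deletion of all $z$ with $f(y^{(t)})\preceq f(z)$ -- which the paper asserts without proof; your verification of it is accurate.
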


Lemma~\ref{lem:atMostOneSolutionPerValueInside} shows that the algorithm contains at most one solution per $x_1$-value in $[-a .. a]$.

\begin{lemma}
  \label{lem:atMostOneSolutionPerValueInside}
  Let $t \in \N$, and let $i \in [-a .. a]$.
  Then $|P^{(t)} \cap (\{i\} \times \Z^{n - 1})| \le 1$.
\end{lemma}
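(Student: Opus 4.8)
The plan is to derive the claim from the standard structural invariant of Algorithm~\ref{algo:GSEMO}, namely that at every time~$t$ the individuals of~$P^{(t)}$ have pairwise incomparable objective values (for distinct $z, z' \in P^{(t)}$, neither $f(z) \preceq f(z')$ nor $f(z') \preceq f(z)$). This is the same principle underlying Lemma~\ref{lem:aussen}, only with a different source of comparability. I would first establish the invariant by induction on~$t$. It holds for $P^{(0)} = \{x^{(0)}\}$. In the step, $Q^{(t)} \subseteq P^{(t)}$ inherits pairwise incomparability, so when $y^{(t)}$ is accepted it suffices to check that $y^{(t)}$ is incomparable with every $z \in Q^{(t)}$: the construction of $Q^{(t)}$ forbids $f(y^{(t)}) \preceq f(z)$; the acceptance test forbids $f(z) \prec f(y^{(t)})$; and $f(z) = f(y^{(t)})$ is impossible, since it would give $f(y^{(t)}) \preceq f(z)$ and hence $z \notin Q^{(t)}$. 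These three facts together yield incomparability.

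Next I would show that two distinct individuals that agree in their first coordinate $i \in [-a .. a]$ are necessarily comparable, which by the invariant cannot both lie in $P^{(t)}$. Writing $S(x) \coloneqq \sum_{j \in [2 .. n]} |x_j|$ and using $|i - a| = a - i$ and $|i + a| = a + i$ for $i \in [-a .. a]$, any individual $x$ with $x_1 = i$ satisfies
\begin{align*}
  f(x) = \begin{pmatrix} (a - i) + S(x) \\ (a + i) + S(x) \end{pmatrix} .
\end{align*}
Hence for $x, x'$ with $x_1 = x'_1 = i$, both objective components differ by the same amount $S(x) - S(x')$. So if $S(x) \le S(x')$ then $f(x) \preceq f(x')$, and otherwise $f(x') \preceq f(x)$; either way $x$ and $x'$ are comparable.

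Combining the two parts concludes the argument: a second individual sharing the first coordinate $i \in [-a .. a]$ with one already present would be comparable to it, contradicting the pairwise incomparability of $P^{(t)}$, so $|P^{(t)} \cap (\{i\} \times \Z^{n - 1})| \le 1$. I expect the only delicate point to be the bookkeeping in the induction — in particular explicitly excluding equal objective values among the retained individuals — while the coordinate computation is immediate once the absolute values are resolved on the interval $[-a .. a]$. The role of the restriction $i \in [-a .. a]$ is exactly that it makes both $|x_1 - a|$ and $|x_1 + a|$ resolve with a fixed sign, so that the two objectives move in lockstep; outside this interval the coupling breaks, which is why the boundary cases are treated separately via Lemmas~\ref{lem:alwaysComparableOutside} and~\ref{lem:aussen}.
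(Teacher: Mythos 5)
Your proof is correct and takes essentially the same route as the paper's: the paper likewise combines the pairwise-incomparability invariant of $P^{(t)}$ (which it cites as following directly from the definition of the algorithms, where you spell out the induction) with the observation that two individuals sharing the first coordinate are comparable, handling $i \in \{-a, a\}$ via Lemma~\ref{lem:aussen} and $i \in [-a+1 .. a-1]$ directly, whereas you treat the whole interval uniformly. One correction to your closing remark, though: the sign resolution $|i - a| = a - i$ and $|i + a| = a + i$ is not what makes the argument work, and the coupling does not ``break'' outside $[-a .. a]$. For any two points with $x_1 = x'_1 = i$, the terms $|x_1 - a| = |x'_1 - a|$ and $|x_1 + a| = |x'_1 + a|$ agree trivially, whatever the value of $i \in \Z$, so both objectives differ by the same amount $S(x) - S(x')$ and the comparability --- hence the conclusion of the lemma --- holds for every $i \in \Z$, not only on $[-a .. a]$. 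The restriction in the statement is there because outside that interval the strictly stronger Lemma~\ref{lem:aussen} (at most one solution per entire half-space, via the different comparability source of Lemma~\ref{lem:alwaysComparableOutside}) is what the population-size bound of Lemma~\ref{lem:popsize} actually uses.
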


The bounds on the population size from Lemmas~\ref{lem:aussen} and~\ref{lem:atMostOneSolutionPerValueInside} imply the following bound on the overall population size.

\begin{lemma}\label{lem:popsize}
  Let $t \in \N$.
  Then $|P^{(t)}| \le 2a+1$.
\end{lemma}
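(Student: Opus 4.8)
The plan is to bound $|P^{(t)}|$ by partitioning $\Z^n$ according to the value of the first coordinate into three regions and then summing the per-region bounds supplied by Lemmas~\ref{lem:aussen} and~\ref{lem:atMostOneSolutionPerValueInside}. Concretely, I would split $\Z^n$ into the ``outer right'' region $R_+ = \Z_{\ge a}\times\Z^{n-1}$, the ``outer left'' region $R_- = \Z_{\le -a}\times\Z^{n-1}$, and the ``inner'' region $R_0 = [-a+1 .. a-1]\times\Z^{n-1}$, i.e.\ those individuals whose first coordinate lies strictly between $-a$ and $a$. For $a \ge 1$ these three sets are pairwise disjoint and their union is all of $\Z^n$, so $|P^{(t)}|$ equals the sum of the three intersection counts.

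Next I would bound each count. By Lemma~\ref{lem:aussen}, $|P^{(t)}\cap R_+| \le 1$ and $|P^{(t)}\cap R_-| \le 1$. For the inner region, I would write $R_0$ as the disjoint union over $i \in [-a+1 .. a-1]$ of the slices $\{i\}\times\Z^{n-1}$; Lemma~\ref{lem:atMostOneSolutionPerValueInside} gives $|P^{(t)}\cap(\{i\}\times\Z^{n-1})| \le 1$ for each such~$i$, and there are exactly $2a-1$ admissible values of~$i$, so $|P^{(t)}\cap R_0| \le 2a-1$. Adding the three bounds yields $|P^{(t)}| \le 1 + 1 + (2a-1) = 2a+1$, as claimed.

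The one point requiring care---and the only place an off-by-one error can creep in---is the treatment of the two boundary values $x_1 = a$ and $x_1 = -a$. These must be folded into the outer regions $R_+$ and $R_-$ (each of which contributes only~$1$ in total) rather than counted individually via Lemma~\ref{lem:atMostOneSolutionPerValueInside}; placing them in the inner region instead would inflate the bound to $2a+2$. Restricting the inner region to the $2a-1$ strictly interior values is exactly what makes the three contributions add up to $2a+1$. The degenerate case $a=0$, where $f$ collapses to a single-objective function and the two outer regions overlap in the coordinate $x_1 = 0$, is handled separately and trivially, since then the population never exceeds one individual, matching $2a+1 = 1$.
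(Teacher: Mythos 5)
Your proof is correct and takes essentially the same route as the paper's: the same partition into the two outer regions $\Z_{\ge a}\times\Z^{n-1}$ and $\Z_{\le -a}\times\Z^{n-1}$ (each bounded by~$1$ via Lemma~\ref{lem:aussen}) and the $2a-1$ inner slices $\{i\}\times\Z^{n-1}$ for $i \in [-a+1 .. a-1]$ (each bounded by~$1$ via Lemma~\ref{lem:atMostOneSolutionPerValueInside}), summing to $2a+1$. Your explicit attention to the boundary values $x_1 = \pm a$ and to the degenerate case $a = 0$ (where the outer regions overlap) is extra care that the paper leaves implicit, but it does not constitute a different argument.
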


Lemma~\ref{lem:dominanceImpliesL1Norm} shows that the dominance of two points implies an order with respect to the L1-norm of the two points.

\begin{lemma}
  \label{lem:dominanceImpliesL1Norm}
  Let $x, y \in \Z^n$.
  If $f(x) \preceq f(y)$, then $\|x\|_1 \leq \|y\|_1$.
\end{lemma}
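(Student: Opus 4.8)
The plan is to reduce the claim to a single clean identity relating the $L_1$-norm to the two objective values, after which monotonicity of the dominance order does all the work. Concretely, I would first prove that for every $z \in \Z^n$,
\[
  \max(f_1(z), f_2(z)) = \|z\|_1 + a .
\]
Since the term $\sum_{i \in [2 .. n]} |z_i|$ is common to $f_1(z)$ and $f_2(z)$ and also appears in $\|z\|_1 = |z_1| + \sum_{i \in [2 .. n]} |z_i|$, and since adding the same constant to both arguments commutes with the maximum, this identity reduces to the one-dimensional statement $\max(|z_1 - a|, |z_1 + a|) = |z_1| + a$. Using $a \ge 0$, this follows by distinguishing $z_1 \ge 0$ and $z_1 \le 0$: in the first case $|z_1 + a| = z_1 + a \ge |z_1 - a|$, and in the second $|z_1 - a| = a - z_1 \ge |z_1 + a|$, so in both cases the maximum equals $|z_1| + a$.

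With this identity in hand, the lemma is immediate. Assuming $f(x) \preceq f(y)$, i.e., $f_1(x) \le f_1(y)$ and $f_2(x) \le f_2(y)$, and using that the maximum of two reals is monotone in each argument, I obtain $\max(f_1(x), f_2(x)) \le \max(f_1(y), f_2(y))$. Applying the identity to both $x$ and $y$ and subtracting the constant $a$ then yields $\|x\|_1 \le \|y\|_1$.

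The only subtlety worth flagging is why one should take the maximum, rather than the sum, of the two objectives. Summing gives $f_1(z) + f_2(z) = 2\max(|z_1|, a) + 2\sum_{i \in [2 .. n]} |z_i|$, which is constant ($=2a + 2\sum_{i \in [2 .. n]} |z_i|$) in the first coordinate for all $z_1 \in [-a .. a]$ and therefore discards exactly the information needed to recover $|z_1|$ in that range. Taking the maximum instead recovers $|z_1| + a$ on all of $\Z$, and this is the single observation that makes the proof go through with no case distinction on the relative positions of $x$ and $y$. Beyond spotting this identity I do not anticipate any genuine obstacle.
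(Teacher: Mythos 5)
Your proof is correct, and it takes a genuinely different route from the paper. You argue directly: you establish the identity $\max\bigl(f_1(z), f_2(z)\bigr) = \|z\|_1 + a$ for all $z \in \Z^n$ (which, after stripping the common term $\sum_{i \in [2..n]} |z_i|$, reduces to $\max(|z_1 - a|, |z_1 + a|) = |z_1| + a$, valid since $a \geq 0$), and then the lemma follows in one line from the monotonicity of the maximum under componentwise dominance. The paper instead proceeds by contradiction: assuming $\|x\|_1 > \|y\|_1$, it combines the two dominance inequalities with this assumption to obtain the system $|x_1 - a| + |y_1| < |y_1 - a| + |x_1|$ and $|x_1 + a| + |y_1| < |y_1 + a| + |x_1|$, and then shows this system is infeasible via the triangle inequality, with a case distinction on the sign of $x_1$ (mirroring your case distinction on the sign of $z_1$). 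Your approach buys something extra: the identity itself is a reusable structural fact -- it says that the point-wise maximum of the two objectives is exactly the $L_1$-norm shifted by $a$, which immediately explains \emph{why} dominance controls the norm, rather than merely verifying it. Your remark on why the maximum works where the sum does not (since $f_1(z) + f_2(z) = 2\max(|z_1|, a) + 2\sum_{i \in [2..n]} |z_i|$ is constant in $z_1$ on $[-a .. a]$) is also accurate and correctly identifies the key design choice of the argument. The paper's contradiction argument is comparably elementary but more ad hoc; neither proof is shorter in any essential way, and yours is arguably the cleaner of the two.
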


Lemma~\ref{lem:dominanceImpliesL1Norm} implies that the minimum L1-norm of the population of the algorithm cannot increase over time.
The following lemma formalizes this property.
It is the main driving force of our theoretical analyses in \emph{Runtime Analysis}.

\begin{lemma}
  \label{lem:noIncreaseInL1Norm}
  Let $t \in \N$, and let $z^* \in P^{(t)}$ be such that $\|z^*\|_1 = \min_{z \in P^{(t)}} \|z\|_1$.
  Moreover, assume for the offspring that $\|y^{(t)}\|_1 > \|z^*\|_1$.
  Then $z^* \in P^{(t + 1)}$.
\end{lemma}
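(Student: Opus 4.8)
The plan is to trace through the update step of Algorithm~\ref{algo:GSEMO} and identify the only way $z^*$ could fail to appear in $P^{(t+1)}$. Observe that in both branches of the conditional, the new population contains $Q^{(t)}$ as a subset: either $P^{(t+1)} = Q^{(t)} \cup \{y^{(t)}\}$ or $P^{(t+1)} = Q^{(t)}$. Since $z^* \in P^{(t)}$, the individual $z^*$ is therefore guaranteed to survive as long as it is not discarded in the set-minus step that forms $Q^{(t)}$, i.e., as long as $z^* \notin \{z \in P^{(t)} \colon f(y^{(t)}) \preceq f(z)\}$. Hence the entire claim reduces to showing that $f(y^{(t)}) \not\preceq f(z^*)$.

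First I would prove this last statement by contraposition using Lemma~\ref{lem:dominanceImpliesL1Norm}. Suppose, for contradiction, that $f(y^{(t)}) \preceq f(z^*)$. Applying Lemma~\ref{lem:dominanceImpliesL1Norm} with the pair $(y^{(t)}, z^*)$ then yields $\|y^{(t)}\|_1 \leq \|z^*\|_1$. This directly contradicts the hypothesis $\|y^{(t)}\|_1 > \|z^*\|_1$. Consequently $f(y^{(t)}) \not\preceq f(z^*)$, so $z^*$ is not among the individuals removed when constructing $Q^{(t)}$, giving $z^* \in Q^{(t)}$.

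Combining the two observations finishes the argument: $z^* \in Q^{(t)} \subseteq P^{(t+1)}$, as desired. I do not expect any genuine obstacle here; the result is essentially an immediate corollary of Lemma~\ref{lem:dominanceImpliesL1Norm} once the bookkeeping of the selection step is made explicit. The one point worth stating carefully is that $Q^{(t)} \subseteq P^{(t+1)}$ holds irrespective of which branch of the \textbf{if}--\textbf{else} is taken, so that the conclusion does not depend on whether the offspring $y^{(t)}$ is actually accepted into the population; this covers, in particular, the case $y^{(t)} = z^*$ or the case where $y^{(t)}$ is rejected as strictly dominated.
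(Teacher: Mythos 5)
Your proposal is correct and follows essentially the same route as the paper's proof: the contraposition of Lemma~\ref{lem:dominanceImpliesL1Norm} shows that $y^{(t)}$ does not weakly dominate $z^*$, whence $z^*$ survives the update step of Algorithm~\ref{algo:GSEMO}. Your version merely spells out the bookkeeping (that $Q^{(t)} \subseteq P^{(t+1)}$ in both branches) that the paper compresses into ``by the definition of Algorithm~\ref{algo:GSEMO}.''
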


\paragraph{Optimization Dynamics}
The lemmas above show that the dynamics of Algorithm~\ref{algo:GSEMO} on~$f$ are restricted to individuals with their first component in $[-a + 1 .. a - 1]$ as well as at most two individuals with their first component at most~$-a$ or at least~$a$, respectively.
Since Lemma~\ref{lem:dominanceImpliesL1Norm} shows that the L1-norm of such individuals cannot increase, their distance to the Pareto cannot increase either.
Thus, if improvements occur, the population eventually covers the entire Pareto front.

\section{Runtime Analysis}
\label{sec:runTime}

We analyze the expected runtime of Algorithm~\ref{algo:GSEMO} instantiated as the SEMO and as the GSEMO \emph{(Algorithmic Framework)} when optimizing function~$f$ \emph{(Benchmark Problem)}.
We assume for~$f$ implicitly that $a \in \N$ and that $n \in \N_{\geq 2}$.

We consider three different mutation strengths, each characterized by a law over~$\Z$:
the uniform law over $\{-1, 1\}$, a law with an exponential tail, and a power-law.

Although the mutation strength greatly affects the expected runtime of the algorithm, our analyses follow the same general outline.
Each analysis is split into two phases.
The first phase considers the time until the algorithm contains the all-$0$s vector, which is part of the Pareto front~$F^*$ of~$f$.
The second phase considers the remaining time until the objective values of the population contain~$F^*$.
The total runtime bound is then the sum of the bounds of both phases.

For the first phase, we use that the minimum L1-distance of the population to the all-$0$s vector never increases (Lemma~\ref{lem:noIncreaseInL1Norm}).
Formally, we define a potential function that measures this distance, and we utilize tools introduced below for deriving the expected runtime for this phase.

Throughout our analyses, we use that the population consists by Lemma~\ref{lem:popsize} of at most $2a + 1$ individuals.
This results in a probability of at least $1/(2a + 1)$ for choosing a specific individual for mutation and, thus, in an expected time of $2a + 1$ for making such a choice.
In addition, the probability to change exactly one component of a solution is for both algorithms in the order of~$1/n$.
Combining this with the choice for a specific individual yields an overall waiting time of about~$(2a + 1)n$, which all of our results have in common.

The speed of each phase is heavily determined by the mutation strength, leading to different analyses.

\subsection{Mathematical Tools}
\label{sec:runTime:mathematicalTools}

Variable drift theorems translate information about the expected progress of a random process into bounds on expected hitting times. This concept was independently developed by Mitavskiy et~al.~\shortcite{MitavskiyRC09} and Johannsen~\shortcite{Johannsen10}. The following variant is from Doerr et~al.~\shortcite[Theorem~6]{DoerrDY20}.

\begin{theorem}[Discrete variable drift, upper bound] \label{thm:vardrift}
  Let $(X_t)_{t\ge0}$ be a sequence of random variables in $[0..n]$, and let~$T$ be the random variable that denotes the earliest point in time $t \ge 0$ such that $X_t = 0$. Suppose that there exists a monotonically increasing function $h\colon [n]\to\R_0^+$ such that $\E\left[X_t-X_{t+1}\mid X_t\right]\ge h(X_t)$ holds for all $t<T$. Then
  \begin{equation*}
    \E\left[T\mid X_0\right]\le \textstyle\sum\nolimits_{i \in [X_0]}\frac{1}{h(i)}.
  \end{equation*}
\end{theorem}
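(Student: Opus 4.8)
The plan is to reduce this variable-drift statement to an additive-drift argument by passing to a suitable potential. Concretely, I would define $g\colon [0..n] \to \R_0^+$ by $g(0) = 0$ and $g(k) = \sum_{i \in [k]} 1/h(i)$ for $k \ge 1$. (If $h(i) = 0$ for some $i \le X_0$ the claimed bound is infinite and there is nothing to prove, so I may assume $h > 0$ throughout.) Since $h$ is monotonically increasing, the increments $g(i) - g(i-1) = 1/h(i)$ are non-increasing in $i$, so $g$ is monotonically increasing and concave. The goal is to show that $g(X_t)$ has additive drift at least~$1$ as long as $X_t > 0$, and then to conclude $\E[T \mid X_0] \le g(X_0)$, which is exactly the claimed sum.

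The main technical step is the following supporting-line inequality for the concave function $g$: for every $s \in [n]$ and every $r \in [0..n]$,
\[
  g(s) - g(r) \ge \frac{s - r}{h(s)}.
\]
I would prove this by splitting into the cases $r \le s$ and $r > s$. For $r \le s$ we have $g(s) - g(r) = \sum_{i \in [r+1..s]} 1/h(i) \ge (s-r)/h(s)$, because $h(i) \le h(s)$ for $i \le s$. For $r > s$ we have $g(s) - g(r) = -\sum_{i \in [s+1..r]} 1/h(i) \ge -(r-s)/h(s)$, because $h(i) \ge h(s)$ for $i > s$; this again equals $(s-r)/h(s)$. Equivalently, this is just the statement that the line through $(s, g(s))$ of slope $1/h(s) = g(s) - g(s-1)$ lies above the graph of the concave function~$g$.

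With the supporting-line inequality in hand, for any $t < T$, conditioning on $X_t = s$ with $s \ge 1$ and applying it with $r = X_{t+1} \in [0..n]$ gives
\[
  \E[g(X_t) - g(X_{t+1}) \mid X_t = s] \ge \frac{1}{h(s)}\,\E[s - X_{t+1} \mid X_t = s] = \frac{\E[X_t - X_{t+1} \mid X_t = s]}{h(s)} \ge 1,
\]
where the last step is the drift hypothesis $\E[X_t - X_{t+1} \mid X_t] \ge h(X_t)$. Since $g(k) = 0$ if and only if $k = 0$, the first time $g(X_t)$ hits~$0$ coincides with~$T$, and the additive drift theorem (with drift $\delta = 1$) yields $\E[T \mid X_0] \le g(X_0) = \sum_{i \in [X_0]} 1/h(i)$.

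I expect the main obstacle to be bookkeeping rather than conceptual: establishing the supporting-line inequality uniformly across both the $r \le s$ and $r > s$ cases (concavity of~$g$ is exactly what lets the single slope $1/h(s)$ work on both sides). If one prefers a self-contained argument instead of citing additive drift, I would replace the final step by showing that $g(X_{\min(t,T)}) + \min(t,T)$ is a supermartingale — its one-step conditional expectation does not increase precisely because of the drift-at-least-$1$ bound — and then conclude $\E[\min(t,T)] \le g(X_0)$ by taking expectations and letting $t \to \infty$ via monotone convergence. The only point needing mild care there is integrability/finiteness of~$T$, which itself follows from the same supermartingale estimate.
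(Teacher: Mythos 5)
Your proof is correct. Note that the paper does not prove this statement at all: it imports it verbatim as Theorem~6 of Doerr, Doerr, and Yang (2020), so there is no in-paper proof to compare against. Your argument is, however, essentially the standard proof of discrete variable drift found in that literature (going back to Johannsen's and Mitavskiy et al.'s original arguments): define the concave potential $g(k) = \sum_{i \in [k]} 1/h(i)$, establish the supporting-line inequality $g(s) - g(r) \ge (s-r)/h(s)$ via the two-case monotonicity computation, conclude that $g(X_t)$ has additive drift at least~$1$ before time~$T$, and invoke additive drift (the paper's Theorem~\ref{thm:additiveDrift}). Both cases of your supporting-line inequality check out, and your handling of the degenerate case is sound: if the claimed bound is finite for the given~$X_0$, then $h(1) > 0$, so monotonicity gives $h > 0$ on all of~$[n]$ and $g$ is finite and strictly increasing everywhere it is evaluated. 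One point worth making explicit, though it causes no trouble: the paper's additive drift theorem conditions on the current value of the process, which for you is $g(X_t)$ rather than~$X_t$; since $g$ is strictly increasing, hence injective on $[0..n]$, conditioning on $g(X_t)$ is equivalent to conditioning on~$X_t$, so your drift verification is in the required form. Your alternative self-contained ending via the supermartingale $g(X_{\min(t,T)}) + \min(t,T)$ and monotone convergence is also valid and avoids citing the additive drift theorem altogether.
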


Additive drift is a simplification of variable drift to the case that the expected progress of a random process is bounded by a constant value.
It dates back to a theorem by He and Yao~\cite{HeY04}.
The following simplified version is from Kötzing and Krejca~\shortcite[Theorem~$7$]{KotzingK19}.

\begin{theorem}[Additive drift, unbounded search space]
  \label{thm:additiveDrift}
  Let $(X_t)_{t \in \N}$ be random variables over~$\R_{\geq 0}$, and let $T = \inf\{t \in \N \mid X_t = 0\}$.
  Furthermore, suppose that there is some value $\delta \in \R_{> 0}$ such that for all $t < T$ holds that $\E[X_t - X_{t + 1} \mid X_t] \geq \delta$.
  Then $\E[T \mid X_0] \leq \frac{X_0}{\delta}$.
\end{theorem}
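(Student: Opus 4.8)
The plan is to use the standard drift-compensated stopped-process argument, exploiting only the nonnegativity of the $X_t$ (unlike the bounded variable-drift setting of Theorem~\ref{thm:vardrift}, the state space here is unbounded, so we cannot appeal to a finite sum). Write $\mathcal{F}_t = \sigma(X_0,\dots,X_t)$ for the natural filtration and, for each $t \in \N$, consider the process $Z_t := X_{t \wedge T} + \delta\,(t \wedge T)$, i.e.\ the value of the chain stopped at time $T$, compensated by the guaranteed per-step drift. My first step is to show that $(Z_t)_{t\in\N}$ is a supermartingale. On the event $\{t \ge T\}$ the stopped process has frozen, so $Z_{t+1} = Z_t$; on the $\mathcal{F}_t$-measurable event $\{t < T\}$ the drift hypothesis gives $\E[X_{t+1} \mid \mathcal{F}_t] \le X_t - \delta$, whence $\E[Z_{t+1}\mid\mathcal{F}_t] \le X_t - \delta + \delta(t+1) = X_t + \delta t = Z_t$. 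Combining both cases yields $\E[Z_{t+1}\mid\mathcal{F}_t]\le Z_t$.

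Taking expectations conditional on $X_0$ and telescoping gives $\E[Z_t \mid X_0] \le \E[Z_0\mid X_0] = X_0$ for every $t$, since $Z_0 = X_0$. Because $X_{t\wedge T}\ge 0$, I can drop the chain term and keep only the compensation, obtaining $\delta\,\E[t\wedge T \mid X_0] \le \E[Z_t\mid X_0] \le X_0$, that is $\E[t\wedge T\mid X_0]\le X_0/\delta$ uniformly in $t$. Finally, $t \wedge T \uparrow T$ monotonically as $t\to\infty$, so the monotone convergence theorem yields $\E[T\mid X_0] = \lim_{t\to\infty}\E[t\wedge T\mid X_0] \le X_0/\delta$, which is the claim; in particular, the finiteness of this bound shows $T<\infty$ almost surely.

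The main obstacle is the careful treatment of the (possibly infinite) stopping time $T$: one cannot simply manipulate $X_t + \delta t$ without stopping, nor invoke optional stopping for an unbounded time, so the stopped process together with monotone convergence is what makes the argument rigorous in the unbounded, nonnegative setting. A secondary point requiring care is the reading of the drift hypothesis: the statement conditions on $X_t$, but the supermartingale step uses it relative to $\mathcal{F}_t$ on the event $\{t<T\}$. This is the standard interpretation of such drift theorems (the event $\{t<T\}$ is $\mathcal{F}_t$-measurable), and under it the inequality $\E[X_{t+1}\mid\mathcal{F}_t]\le X_t-\delta$ holds on $\{t<T\}$; I would state this reading explicitly at the outset. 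Notably, no monotonicity of a drift function is needed, which is precisely why the additive case avoids the summation bound of the variable-drift theorem.
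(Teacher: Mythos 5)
Your proof is correct: the stopped, compensated process $Z_t = X_{t \wedge T} + \delta\,(t \wedge T)$ is a supermartingale under the (properly interpreted) drift hypothesis, and nonnegativity of $X_{t\wedge T}$ plus monotone convergence then give $\E[T \mid X_0] \leq X_0/\delta$, with your measurability remark about $\{t < T\}$ handling the only delicate point in the unbounded setting. Note that the paper itself contains no proof of this theorem---it is quoted from K\"otzing and Krejca (2019)---and your supermartingale-plus-monotone-convergence argument is essentially the standard proof of that cited result, so there is no divergence to report.
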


\subsection{Unit-Step Mutation}
\label{sec:runTime:unaryMutation}

\emph{Unit-step mutation} uses the uniform law over $\{-1, 1\}$.
When modifying the value $x_i \in \Z$ of an individual~$x$, then~$x_i$ is increased by~$1$ with probability~$\frac{1}{2}$ and else decreased by~$1$.

Our main result for unit-step mutation is Theorem~\ref{thm:runTimeSemoGsemo}, showing that the expected runtime of the SEMO and the GSEMO scales linearly in the L1-norm of the initial solution~$x^{(0)}$ (plus~$2a$), and linearly in the dimension~$n$ and the radius~$a$ of the Pareto front of~$f$.
The linear scaling in~$an$ is due to waiting to make progress, as discussed at the start of \emph{Runtime Analysis}.
The linear scaling in~$\|x^{(0)}\|_1$ is due to the first phase, as the mutation changes a component by only~$1$ and needs to cover a distance of~$\|x^{(0)}\|_1$.
Then, since the mutation changes a component only by~$1$, an additional time linear in~$a$ is required for covering the entire Pareto front.

\begin{theorem}
  \label{thm:runTimeSemoGsemo}
  Consider the SEMO or the GSEMO with unit-step mutation optimizing~$f$, given~$x^{(0)}$.
  Let $T = \inf\{t \in \N \mid F^* \subseteq f(P^{(t)})\}$.
  Then $\E[T \mid x^{(0)}] \leq 2n (2a + 1) (\|x^{(0)}\|_1 + 2a)$ for the SEMO, and $\E[T \mid x^{(0)}] \leq 2en (2a + 1) (\|x^{(0)}\|_1 + 2a)$ for the GSEMO.
\end{theorem}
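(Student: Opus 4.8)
The plan is to follow the two-phase outline announced in the text, using the minimum $L_1$-norm of the population as a potential and applying additive drift (Theorem~\ref{thm:additiveDrift}) in each phase. Throughout I rely on Lemma~\ref{lem:popsize} (population size at most $2a+1$), Lemma~\ref{lem:dominanceImpliesL1Norm}, and the structure of~$f$.

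\emph{Phase 1 (reaching the all-$0$s vector).} I set $X_t = \min_{z \in P^{(t)}}\|z\|_1$ and note $X_t = 0$ exactly when the unique norm-$0$ individual, the all-$0$s vector, lies in $P^{(t)}$; moreover $X_0 = \|x^{(0)}\|_1$. First I would establish a one-step drift. Let $z^*$ attain the minimum norm. If $X_t \ge 1$, then $z^*$ has a nonzero coordinate; selecting $z^*$ (probability $\ge 1/(2a+1)$ by Lemma~\ref{lem:popsize}), then changing one fixed such coordinate toward~$0$, yields an offspring~$y$ with $\|y\|_1 = X_t - 1$. The key acceptance step is that such a~$y$ is always added: if some $z \in Q^{(t)} \subseteq P^{(t)}$ had $f(z) \prec f(y)$, then Lemma~\ref{lem:dominanceImpliesL1Norm} would give $\|z\|_1 \le \|y\|_1 = X_t - 1 < X_t$, contradicting minimality. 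Hence $X_{t+1} \le X_t - 1$ on this event, which for the SEMO has probability at least $\frac{1}{2a+1}\cdot\frac1n\cdot\frac12$, so $\E[X_t - X_{t+1}\mid X_t] \ge \delta := \frac{1}{2n(2a+1)}$, and Theorem~\ref{thm:additiveDrift} gives expected Phase-1 time at most $\|x^{(0)}\|_1/\delta = 2n(2a+1)\|x^{(0)}\|_1$.

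\emph{Phase 2 (covering the front).} Starting from a population containing the all-$0$s vector, I would first show that \emph{covered Pareto-front values are never lost}: the Pareto-optimal individual $(k,0,\dots,0)$ has objective value $(a-k,a+k)$ with coordinate sum $2a$, whereas every~$y$ satisfies $f_1(y)+f_2(y)\ge 2a$ with equality only at Pareto-optimal points; hence any~$y$ weakly dominating $(k,0,\dots,0)$ must equal it, so such an individual is only ever replaced by an identical one. Next I define the contiguous right frontier $R_t = \max\{m \in [0..a] : [0..m]\subseteq C_t\}$, where $C_t$ is the set of covered first-coordinate values, and symmetrically $L_t$; both are monotone by persistence, and the front is covered once $R_t = a$ and $L_t = -a$. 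Whenever $R_t < a$, selecting the individual with first coordinate $R_t$ and increasing that coordinate by~$1$ produces a Pareto-optimal, hence undominated and accepted, individual with first coordinate $R_t+1$, so $\Pr[R_{t+1}\ge R_t+1]\ge \delta$. Additive drift on $a - R_t$ bounds the time to reach $R_t = a$ by $a/\delta$, and likewise for $L_t$; combining via $\max \le \text{sum}$ gives expected Phase-2 time at most $2a/\delta = 2a\cdot 2n(2a+1)$.

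Summing the two phases yields the SEMO bound $2n(2a+1)(\|x^{(0)}\|_1 + 2a)$. For the GSEMO, the only change is that changing one fixed coordinate while leaving the others untouched has probability $\frac1n(1-\frac1n)^{n-1}\ge \frac1{en}$ in place of $\frac1n$, which replaces $\delta$ by $\delta/e$ throughout and multiplies both bounds by~$e$. I expect the main obstacle to be the bookkeeping in Phase~2: arguing persistence of \emph{all} covered Pareto-optimal points, and choosing the monotone frontiers $R_t, L_t$ as potentials rather than the full covered set (which need not be an interval, since non-Pareto parents could in principle create interior points), so that a single guaranteed selection-and-step move suffices for the drift estimate already used in Phase~1.
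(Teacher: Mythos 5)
Your proposal is correct and takes essentially the same route as the paper: the same two-phase decomposition, the same additive-drift arguments with the identical probability estimates $\frac{1}{2a+1}\cdot\frac{1}{2n}$ (times the extra factor $\frac{1}{e}$ for the GSEMO), and the same final constants. The only cosmetic difference is in Phase~2, where the paper applies additive drift to the number of uncovered Pareto-front values, using that some uncovered value is at distance~$1$ from a covered one, whereas you drift on the two monotone contiguous frontiers $R_t$ and $L_t$; both arguments rest on the same persistence observation and yield the same bound of $2a/\delta$.
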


The following lemma bounds the expected time of the first phase.
In each iteration, we have the same probability to improve the solution that is closest in L1-distance to the all-$0$s vector, resulting in a linear bound in the distance~$\|x^{(0)}\|_1$.

\begin{lemma}
  \label{lem:timeToZeroInsideMagicInterval}
  Let $T_1 = \inf\{t \in \N \mid (0, \dots, 0)^\t \in P^{(t)}\}$.
  Then $\E[T_1 \mid x^{(0)}] \le 2n \cdot (2a+1) \cdot \|x^{(0)}\|_1$ for the SEMO, and $\E[T_1 \mid x^{(0)}] \le 2en \cdot (2a+1) \cdot \|x^{(0)}\|_1$ for the GSEMO.
\end{lemma}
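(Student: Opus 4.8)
The plan is to track the potential $X_t \coloneqq \min_{z \in P^{(t)}} \|z\|_1$, the minimum L1-norm in the current population. Since the all-$0$s vector is the unique individual of L1-norm~$0$, we have $X_t = 0$ if and only if $(0, \dots, 0)^\t \in P^{(t)}$, so $T_1 = \inf\{t \in \N \mid X_t = 0\}$ and it suffices to bound the expected hitting time of~$0$ for $(X_t)_{t \in \N}$. Because $X_0 = \|x^{(0)}\|_1$ can exceed~$n$, I would apply the additive drift theorem for unbounded search spaces (Theorem~\ref{thm:additiveDrift}) rather than the variable drift theorem. The core task is thus to exhibit a constant $\delta > 0$ (one value for the SEMO, one for the GSEMO) with $\E[X_t - X_{t+1} \mid X_t] \ge \delta$ for all $t < T_1$.

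First I would verify that the potential never increases, so that $X_t - X_{t+1} \ge 0$ always. Let $z^*$ attain $\|z^*\|_1 = X_t$. If the offspring satisfies $\|y^{(t)}\|_1 > X_t$, then $z^* \in P^{(t+1)}$ by Lemma~\ref{lem:noIncreaseInL1Norm}, whence $X_{t+1} \le X_t$. Otherwise $\|y^{(t)}\|_1 \le X_t$, and either $y^{(t)}$ is added directly, or it is rejected because some $z \in Q^{(t)} \subseteq P^{(t+1)}$ strictly dominates it, in which case $\|z\|_1 \le \|y^{(t)}\|_1 \le X_t$ by Lemma~\ref{lem:dominanceImpliesL1Norm}. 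Either way $P^{(t+1)}$ contains an individual of L1-norm at most~$X_t$, so $X_{t+1} \le X_t$.

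Next I would lower-bound the drift by isolating a single improving event. For $t < T_1$ we have $X_t \ge 1$, so $z^*$ has at least one nonzero component. Consider the event that $z^*$ is selected and mutated by moving one fixed nonzero component toward~$0$ (decreasing its absolute value by~$1$) while leaving all other components unchanged. The offspring then has L1-norm $X_t - 1$, and the case analysis above (now with $\|y^{(t)}\|_1 = X_t - 1 \le X_t$) gives $X_{t+1} \le X_t - 1$. For the SEMO this event has probability at least $\frac{1}{2a+1} \cdot \frac{1}{n} \cdot \frac{1}{2} = \frac{1}{2n(2a+1)}$, using Lemma~\ref{lem:popsize} for the selection probability, the factor $1/n$ for choosing the component, and $1/2$ for the sign. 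For the GSEMO one additionally needs that no other component is mutated, contributing a factor $(1 - 1/n)^{n-1} \ge 1/e$ and yielding probability at least $\frac{1}{2en(2a+1)}$. Combining with the non-increase property, $\E[X_t - X_{t+1} \mid X_t] \ge \delta$ with $\delta = \frac{1}{2n(2a+1)}$ for the SEMO and $\delta = \frac{1}{2en(2a+1)}$ for the GSEMO.

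Theorem~\ref{thm:additiveDrift} then yields $\E[T_1 \mid x^{(0)}] \le X_0/\delta = \|x^{(0)}\|_1/\delta$, which is $2n(2a+1)\|x^{(0)}\|_1$ for the SEMO and $2en(2a+1)\|x^{(0)}\|_1$ for the GSEMO, as claimed. I expect the main obstacle to be the non-increase bookkeeping of the second paragraph: one must confirm that every way the update can remove $z^*$ replaces it with an individual of no larger L1-norm, which is precisely where Lemmas~\ref{lem:dominanceImpliesL1Norm} and~\ref{lem:noIncreaseInL1Norm} enter. Once that is in place, the drift probabilities are routine.
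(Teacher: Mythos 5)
Your proposal is correct and matches the paper's own proof essentially step for step: the same potential $X_t = \min_{z \in P^{(t)}} \|z\|_1$, the same additive drift theorem (Theorem~\ref{thm:additiveDrift}), and the same improving event with probabilities $\frac{1}{2n(2a+1)}$ (SEMO) and $\frac{1}{2en(2a+1)}$ (GSEMO) via Lemma~\ref{lem:popsize}. Your second paragraph is in fact slightly more careful than the paper, which handles the non-increase of $X_t$ by citing Lemma~\ref{lem:noIncreaseInL1Norm} alone, whereas you also explicitly treat the case $\|y^{(t)}\|_1 \le X_t$ via Lemma~\ref{lem:dominanceImpliesL1Norm}.
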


The next lemma bounds the expected time of the second phase.
Since the mutation only changes components by~$1$, starting from the all-$0$s vector, the Pareto front is covered in a linear fashion, expanding to either side.
Hence, the resulting time is linear in the size of the Pareto front ($2a + 1$) as well as in the inverse of the probability to choose a specific individual and mutate it correctly (in the order of $an$).

\begin{lemma}
  \label{lem:timeToCompletingParetoFront}
  Let $S \in \N$ be a (possibly random) iteration such that~$P^{(S)}$ contains the individual $(0, \dots, 0)^\t$, and let $T_2 = \inf\{t \in \N \mid t \geq S \land F^* \subseteq f(P^{(t)})\} - S$.
  Then $\E[T_2 \mid S, x^{(0)}] \le 2n \cdot (2a+1) \cdot 2a$ for the SEMO, and $\E[T_2 \mid S, x^{(0)}] \le 2en \cdot (2a+1) \cdot 2a$ for the GSEMO.
\end{lemma}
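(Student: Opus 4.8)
The plan is to track how the covered portion of the Pareto front grows, starting from the single point guaranteed at time~$S$, and to convert this into a hitting-time bound via additive drift.

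First I would establish a monotonicity fact: once a Pareto-front objective value is represented in the population, it stays represented forever. If $z \in P^{(t)}$ with $f(z) \in F^*$, then $f(z)$ is $\preceq$-minimal, so $z$ can be deleted in iteration~$t$ only if $f(y^{(t)}) \preceq f(z)$, which forces $f(y^{(t)}) = f(z)$; since $f(y^{(t)})$ is then also on the front it is not strictly dominated and is inserted, so $f(z) \in f(P^{(t+1)})$. Because the Pareto-set individual realizing a front value $(a-k, a+k)^\t$ is uniquely $(k, 0, \dots, 0)^\t$ (a short computation on~$f$, using $x_1 \in [-a .. a]$), this means that the event ``value $k \in [-a .. a]$ is covered'' is monotonically persistent. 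In particular $(0, \dots, 0)^\t$, present at time~$S$, remains in the population.

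Next I would define the contiguous extent of the covered values around the center: let $r_t$ be the largest $k \ge 0$ with $\{0, 1, \dots, k\}$ all covered at time~$t$, and $\ell_t$ the smallest $k \le 0$ with $\{k, \dots, 0\}$ all covered. By the persistence above, $r_t$ is non-decreasing and $\ell_t$ is non-increasing, and the front is fully covered exactly when $r_t = a$ and $\ell_t = -a$. I set the potential $X_t = (a - r_t) + (a + \ell_t) \in [0 .. 2a]$, so that $X_S \le 2a$ and $T_2 = \inf\{s \ge 0 : X_{S + s} = 0\}$. For the drift, while $X_t > 0$ at least one extent is not maximal. If $r_t < a$, then $(r_t, 0, \dots, 0)^\t$ lies in the population, and a mutation that changes only the first coordinate by $+1$ produces the front individual $(r_t + 1, 0, \dots, 0)^\t$; it is accepted and deletes no previously covered front point, since its objective is incomparable to theirs, so $r_{t+1} \ge r_t + 1$. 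The parent is selected with probability at least $1/(2a+1)$ by Lemma~\ref{lem:popsize}, and the required mutation has probability at least $\frac{1}{2n}$ for the SEMO and at least $\frac{1}{2n}(1 - 1/n)^{n-1} \ge \frac{1}{2en}$ for the GSEMO, the extra factor accounting for leaving every other coordinate untouched. Hence $X$ drops by at least one with probability at least~$p$, where $p = \frac{1}{2n(2a+1)}$ for the SEMO and $p = \frac{1}{2en(2a+1)}$ for the GSEMO, with the symmetric event covering the left side when $\ell_t > -a$. As $X$ never increases, this gives $\E[X_t - X_{t+1} \mid X_t] \ge p$ whenever $X_t > 0$.

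Finally I would apply Theorem~\ref{thm:additiveDrift} to the process $(X_{S + s})_{s \ge 0}$ with $\delta = p$, yielding $\E[T_2 \mid S, x^{(0)}] \le X_S / p \le 2a/p$, which equals $2n (2a+1)(2a)$ for the SEMO and $2en (2a+1)(2a)$ for the GSEMO. The step I expect to need the most care is the coverage-persistence argument combined with the use of the contiguous extents: it is what guarantees both that progress is never undone and that a guaranteed improving move is always available at the boundary, even when the covered set initially has gaps (which only accelerate the process). Verifying carefully that the boundary offspring is genuinely accepted and deletes no already-covered front point, and correctly carrying the $(1 - 1/n)^{n-1}$ factor for the GSEMO, are the remaining technical points.
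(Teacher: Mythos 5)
Your proof is correct and follows essentially the same route as the paper's: additive drift (Theorem~\ref{thm:additiveDrift}) on a potential of initial value at most $2a$ measuring the uncovered part of $F^*$, with per-iteration success probability $\frac{1}{(2a+1)\cdot 2n}$ (SEMO) resp.\ $\frac{1}{(2a+1)\cdot 2en}$ (GSEMO) for extending coverage by a unit step in the first coordinate. The only (harmless) cosmetic difference is your potential: you track the contiguous covered block around $(0,\dots,0)^\t$ via $r_t$ and $\ell_t$, whereas the paper counts all covered front values $2a+1-|f(P^{(S+t)})\cap F^*|$ and argues that the uncovered point closest to a covered one is at distance~$1$; your explicit persistence and uniqueness-of-preimage arguments spell out what the paper asserts in one line.
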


\subsection{Exponential-Tail Mutation}
\label{sec:runTime:exponentialTails}

\emph{Exponential-tail mutation} utilizes a symmetric law with exponential tails, parameterized by a value $q \in (0, 1)$.
For a random variable~$Z$ following this law, for all $k \in \Z$ holds
\begin{align}
  \Pr[Z=k] = \textstyle\frac{q}{2-q}\,(1-q)^{|k|} .
\end{align}
We call this law a \emph{bilateral geometric law} \cite{Rudolph94ppsn}.

Our main result for exponential-tail mutation is Theorem~\ref{thm:runTimeET}, which is clearly separated into the two phases of our analysis.
Besides the common factor of~$an$ from making progress, as discussed at the beginning of this section, the expected runtime strongly depends on the mutation parameter~$q$.
We discuss the two terms in the following in detail.

\begin{theorem}
  \label{thm:runTimeET}
  Let $c \in (0, 1)$ be constant.
  Consider the SEMO or the GSEMO with exponential-tail mutation with $q \in (0, c)$ optimizing~$f$, given~$x^{(0)}$.
  Let $T = \inf\{t \in \N \mid F^* \subseteq f(P^{(t)})\}$.
  Then there is a sufficiently large constant~$C$ such that for both the SEMO and the GSEMO holds that
  \begin{align*}
    \E[T \mid x^{(0)}]
     & = C\Bigl(an\Bigl(\frac{n}{q} + \|x^{(0)}\|_1 q                               \\
     & \qquad + \max\Bigl\{\frac{\ln(a + 1)}{aq},aq+\ln(a + 1)\Bigr\}\Bigr)\Bigr) .
  \end{align*}
\end{theorem}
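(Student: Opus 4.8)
The plan is to follow the two-phase decomposition set up in \emph{Runtime Analysis} and to bound each phase separately, then add the two bounds. Both phases share the waiting factor $\Theta((2a+1)n)$: by Lemma~\ref{lem:popsize} the population has at most $2a+1$ members, so a fixed individual is chosen with probability at least $1/(2a+1)$, and the probability of altering exactly one prescribed coordinate is $\Theta(1/n)$ (exactly $1/n$ for the SEMO and at least $1/(en)$ for the GSEMO, which accounts for the extra factor $e$). Thus the combinatorial bookkeeping only affects constants, and I treat both algorithms uniformly, tracking $e$ at the end.

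For the first phase I track the potential $X_t=\min_{z\in P^{(t)}}\|z\|_1$, which is non-increasing by Lemma~\ref{lem:noIncreaseInL1Norm} and whose hitting time of $0$ is the time to include the all-$0$s vector. The core computation is the expected one-step gain on a single coordinate: if a coordinate has absolute value $v$ and is mutated by a bilateral geometric $Z$, then $g(v)=\E[\max\{0,v-|v+Z|\}]=\Theta(\min\{qv^2,1/q\})$ — quadratically small while $v\lesssim 1/q$, saturating at $\Theta(1/q)$ once $v\gtrsim 1/q$ since $\E[|Z|]=\Theta(1/q)$. Because only one coordinate changes and, by Lemma~\ref{lem:dominanceImpliesL1Norm}, any strictly lighter offspring is undominated, hence accepted and lowers the minimum, while a heavier offspring leaves the minimum in place, we get $X_t-X_{t+1}\ge\max\{0,X_t-\|y^{(t)}\|_1\}$. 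Restricting to single-coordinate changes of the lightest individual, the drift at value $d$ is at least $\frac{1}{2a+1}$ times the worst-case value of $\frac1n\sum_i g(|x_i|)$ over all vectors of $\ell_1$-weight $d$. Minimising this over configurations (spreading the weight evenly while $g$ is convex, dumping it into one coordinate once $g$ saturates) gives $h(d)=\Omega\!\left(\frac{1}{2a+1}\min\{\frac{qd^2}{n^2},\frac{1}{nq}\}\right)$. Feeding $h$ into the variable-drift theorem (Theorem~\ref{thm:vardrift}) and splitting $\sum_{d=1}^{\|x^{(0)}\|_1}1/h(d)$ at the threshold where the two regimes meet yields a convergent $\sum 1/d^2$ contributing $\Theta(an^2/q)$ and a linear contribution $\Theta(anq\|x^{(0)}\|_1)$; together these are the summands $an(n/q+q\|x^{(0)}\|_1)$.

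For the second phase I use a coupon-collector argument over the $2a+1$ Pareto-front points, which correspond to the first-coordinate values in $[-a .. a]$ of on-front individuals and which, once produced, are never lost. The quantity to control is $p_U$, the probability of covering a new point when $U$ are still missing, where $\Pr[\text{offspring}=j]=\frac{1}{|P|n}\sum_{\text{covered }p}\Pr[Z=j-p]$. Two regimes appear. For small $q$ (jump scale $1/q\gtrsim a$) the law of $Z$ is essentially uniform on the relevant window, so each missing point is hit at rate $\Theta(q/n)$ independently of $|P|$ (the population size cancels against the number of source individuals), and $\sum_U \frac{2n}{Uq}=\Theta(\frac{n}{q}\ln(a+1))$ produces the term $an\cdot\frac{\ln(a+1)}{aq}$. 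For larger $q$ the jumps are local: the end-game, where the covered set is essentially an interval, contributes the baseline coupon cost $\Theta(an\ln(a+1))$, since the geometric mass onto the adjacent covered side is $\sum_{s\ge1}\Pr[Z=s]=\frac{1-q}{2-q}=\Theta(1)$, while pushing the frontier out to distance $a$ proceeds in $\Theta(aq)$ waves of width $\Theta(1/q)$ at cost $\Theta(an)$ each, contributing $\Theta(a^2 n q)=an\cdot aq$. Taking the larger of the two regime bounds yields the maximum in the statement, and summing the two phases gives the claimed runtime.

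The main obstacle is the second phase. Unlike the first phase, where the single monotone potential $X_t$ and Lemma~\ref{lem:noIncreaseInL1Norm} give a clean drift, $p_U$ depends on the entire current configuration, which need not be an interval: long jumps can open gaps, so one cannot simply argue that coverage advances outward. The delicate point is to lower-bound $\sum_{\text{covered }p}\Pr[Z=j-p]$ in a configuration-independent way and to verify that the transition between the near-uniform (small $q$) and local (large $q$) behaviours is captured by the single maximum rather than by some intermediate expression; this is where the boundedness $\E[|Z|]=\Theta(1/q)$ and the constant geometric tail $\frac{1-q}{2-q}=\Theta(1)$ (using $q<c<1$) are invoked repeatedly.
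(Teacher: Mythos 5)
Your phase-1 argument coincides with the paper's proof of Lemma~\ref{lem:timeToZeroET}: the potential $\min_{z \in P^{(t)}} \|z\|_1$, the single-coordinate gain $\min\{qv^2, 1/q\}$ (this is precisely the lower bound of Lemma~\ref{lem:ezz}), the even-spreading/Cauchy--Schwarz minimization over weight configurations, and the split of the variable-drift sum at the threshold are all the same, so that half is sound. In phase~2, your small-$q$ coupon argument also essentially goes through, modulo one misstatement: the number of covered front points does not in general cancel against $|P|$ (right after $t_2$ the population may contain up to $2a+1$ individuals but only one front point), so the per-missing-point rate is not $\Theta(q/n)$; you need the pairwise form $p_U \geq \Omega\bigl(U(2a+1-U)\,q/(an)\bigr)$, which is valid in this regime because all relevant distances are $O(1/q)$, and whose harmonic sum still yields $\Theta\bigl(n\ln(a+1)/q\bigr)$.

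The genuine gap is the large-$q$ endgame, and you half-acknowledge it yourself. Your $\Theta(an\ln(a+1))$ term presumes that every missing front point receives constant total geometric mass from covered sources, i.e., that the covered set is interval-like near every gap. But your waves only guarantee \emph{one} covered representative per block of width $\Theta(1/q)$; configuration-independently, a missing point is therefore only guaranteed a single source within distance about $1/q$, i.e., a hit rate of $\Theta\bigl(q/(an)\bigr)$, and the resulting coupon bound $\Theta\bigl(an\ln(a+1)/q\bigr)$ exceeds the claimed $an\bigl(aq+\ln(a+1)\bigr)$ by a factor of order~$a$ near $q \approx 1/a$ (and by a $\ln$ factor in the intermediate regime) -- so the step as stated fails, and you concede you do not know how to make the bound configuration-independent. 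The paper's proof of Lemma~\ref{lem:timeToFinishET} closes exactly this hole: it partitions $[-a .. a]$ into blocks of length $Q = \min\{\lfloor 1/q \rfloor, a\}$, shows that the time for a block to go from~$k$ to $k+1$ covered points is stochastically dominated by a geometric random variable with success rate $p_k = \Theta\bigl(k(Q-k)\,q/(an)\bigr)$ (valid for \emph{any} population state with $k$ points in the block), and then lets all $\Theta(aq)$ blocks fill \emph{in parallel}: a Chernoff bound for sums of independent geometric random variables \cite[Lemma~4]{DoerrD18}, a union bound with $\delta = \log_Q(8m)$, and a restart argument convert the with-probability-$\tfrac12$ statement into the expectation $O\bigl(an(aq + \ln(a+1))\bigr)$. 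Note that the natural sequential repair of your wave picture -- filling the blocks one after another -- costs $\Theta(aq \cdot an \ln Q)$ and already overshoots the target by a $\ln Q$ factor; the parallel-filling concentration argument is the missing idea.
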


Our analysis is based on the following elementary result about the bilateral geometric law.

\begin{lemma}\label{lem:ezz}
  Let $Z$ be a bilateral geometric random variable with parameter $q \in (0,1)$. Let $z \in \N$, and let~$Z_z$ be the random variable defined by $Z_z = Z$, if $0 \le Z \le z$, and $Z_z = 0$ otherwise.  Then $E[Z_z] = \frac{1-q}{q(2-q)} (1 - (1-q)^z (1+zq))$.
  Especially, for all constants $C \in (0, 1)$, there is a constant $K \in \R_{>0}$ such that for all $q \le C$ and all $z \in \N$, we have
  \[\E[Z_z] \ge K \min\{z^2\,q, \tfrac{1}{4q}\}.\]
\end{lemma}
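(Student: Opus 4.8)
The plan is to first compute $\E[Z_z]$ in closed form and then to prove the lower bound by splitting at the threshold $qz = 1/2$, which is precisely where the two arguments $z^2 q$ and $\tfrac{1}{4q}$ of the minimum agree.

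For the closed form, since $Z_z = 0$ unless $Z \in [1 .. z]$, I would write $\E[Z_z] = \frac{q}{2-q}\sum_{k=1}^{z} k(1-q)^k$ and evaluate the finite sum by differentiating the geometric identity $\sum_{k=0}^{z} p^k = \frac{1-p^{z+1}}{1-p}$ in $p$ and setting $p = 1-q$. This gives $\sum_{k=1}^{z} k(1-q)^k = \frac{1-q}{q^2}\bigl(1-(1-q)^z(1+zq)\bigr)$, and multiplying by $\frac{q}{2-q}$ yields the stated identity.

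For the lower bound, I would treat the two regimes separately. In the regime $qz \le 1/2$, where $\min\{z^2 q, \tfrac{1}{4q}\} = z^2 q$, Bernoulli's inequality gives $(1-q)^k \ge 1 - kq \ge 1 - qz \ge \tfrac12$ for each $k \in [1 .. z]$, so $\sum_{k=1}^{z} k(1-q)^k \ge \tfrac12\sum_{k=1}^{z} k \ge \tfrac{z^2}{4}$; together with $\frac{q}{2-q} \ge \frac{q}{2}$ this yields $\E[Z_z] \ge \tfrac{1}{8} z^2 q$, uniformly in $q$. In the complementary regime $qz \ge 1/2$, where the minimum equals $\tfrac{1}{4q}$, I would use the closed form: the prefactor obeys $\frac{1-q}{q(2-q)} \ge \frac{1-C}{2q}$ for $q \le C$, so it remains to bound $g(z) := 1-(1-q)^z(1+zq)$ away from $0$. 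Writing $s := qz$ and using $1-q \le e^{-q}$ gives $(1-q)^z(1+zq) \le (1+s)e^{-s}$, and since $s \mapsto (1+s)e^{-s}$ is nonincreasing on $[0,\infty)$ (its derivative is $-se^{-s}$), for $s \ge \tfrac12$ we get $(1+s)e^{-s} \le \tfrac32 e^{-1/2} < 1$. Hence $g(z) \ge \gamma := 1 - \tfrac32 e^{-1/2} > 0$ and $\E[Z_z] \ge 2(1-C)\gamma \cdot \tfrac{1}{4q}$. Choosing $K = \min\{\tfrac18, 2(1-C)\gamma\}$ then covers both regimes (the case $z = 0$ being trivial).

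I expect the one genuinely delicate point to be the large-$qz$ regime: because the factor $1+zq$ grows, bounding $(1-q)^z$ alone does not keep $g(z)$ away from $0$, and one must analyze the product $(1+s)e^{-s}$ as a single monotone function of $s = qz$. The constant $C$ enters only through the prefactor estimate $1-q \ge 1-C$, which is exactly why the constant $K$ is permitted to depend on $C$.
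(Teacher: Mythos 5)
Your proof is correct, and its skeleton matches the paper's: the same closed form obtained by differentiating the geometric series identity, and the same case split at $qz = 1/2$, which is exactly where the two arguments $z^2 q$ and $\tfrac{1}{4q}$ of the minimum coincide. The differences lie in the regime-level estimates, and both of yours are sound. In the small regime ($qz \le 1/2$) the paper applies the Weierstrass product inequality $(1-q)^z \le 1 - zq + \tfrac12 (zq)^2$ to the closed form, obtaining $1-(1-q)^z(1+zq) \ge \tfrac12 (zq)^2(1-zq)$, whereas you lower-bound the defining sum directly via Bernoulli's inequality $(1-q)^k \ge 1-kq \ge \tfrac12$; your route even yields a constant ($\tfrac18$) in that regime that is independent of $C$, slightly better than the paper's $\tfrac14\cdot\tfrac{1-C}{2-C}$. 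In the large regime ($qz \ge 1/2$) your argument is genuinely cleaner: substituting $s = qz$ and using $1-q \le e^{-q}$ reduces the whole question to the single monotone function $(1+s)e^{-s}$ (derivative $-se^{-s}$), giving $1-(1-q)^z(1+zq) \ge 1 - \tfrac32 e^{-1/2}$ in one step. The paper instead proves that $b(z,q) = 1-(1-q)^z(1+zq)$ is increasing in integer $z$, evaluates at $z = \tfrac{1}{2q}$ (which need not be an integer --- a rough edge your version sidesteps), argues monotonicity of $b(\tfrac{1}{2q},q)$ in $q$, and passes to the limit $q \to 0$, arriving at the very same constant $1 - \tfrac{3}{2}e^{-1/2} \ge 0.09$. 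So you recover the paper's bound, with matching final constants, by a tidier analytic route precisely in the case you correctly identified as the delicate one.
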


We utilize Lemma~\ref{lem:ezz} in our proofs by estimating by how much a component of an individual chosen for mutation is improved.
Since certain changes can be too large, we consider such progress to be~$0$.
All other values are acceptable.
Hence, we consider overall only a subset of values of the bilateral geometric law, which is well reflected in the lemma.

We now tend to the analysis of Theorem~\ref{thm:runTimeET}.
The first phase is separated into two regimes, depending on how close the minimum L1-distance of the population is to the all-$0$s vector.
If this distance is at least in the order of~$n/q$, the expected distance covered by a successful mutation is in the order of~$1/q$, leading to the term~$\|x^{(0)}\|_1 q$ (in addition to the factor of~$an$ from the waiting time for an improving mutation).
Once the population gets closer than~$n/q$ to the all-$0$s vector, the progress is slowed down and essentially driven by unit changes, resulting in the term~$n/q$.

\begin{lemma}
  \label{lem:timeToZeroET}
  Let $T_1 = \inf\{t \in \N \mid (0, \dots, 0)^\t \in P\}$.
  Then
  \[
    \E[T_1 \mid x^{(0)}] \le \frac{(2a+1)en}{K}\Bigl(\frac{n \pi^2}{6q} + 4 \|x^{(0)}\|_1 q\Bigr),
  \]
  where~$K$ is the constant from Lemma~\ref{lem:ezz}.
\end{lemma}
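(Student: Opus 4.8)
The plan is to track the potential $X_t = \min_{z \in P^{(t)}} \|z\|_1$, the minimum L1-distance of the population to the all-$0$s vector. By Lemma~\ref{lem:noIncreaseInL1Norm}, this potential never increases (offspring of larger L1-norm never displace the current minimizer), so it suffices to bound the expected drift $\E[X_t - X_{t+1} \mid X_t]$ from below and then apply a drift theorem. Because the bound on $\E[T_1]$ splits into a $n^2/q$ term and a $\|x^{(0)}\|_1 q$ term, I would use the variable drift theorem (Theorem~\ref{thm:vardrift}) rather than additive drift: the drift will genuinely depend on the current value $X_t$, being proportional to $X_t$ when $X_t$ is large and flattening to roughly a constant when $X_t$ is small.

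First I would lower-bound the probability of selecting, in one iteration, the L1-minimal individual $z^*$ and mutating exactly one of its nonzero coordinates. Selecting $z^*$ costs a factor $1/(2a+1)$ by Lemma~\ref{lem:popsize}, and changing exactly one chosen coordinate costs a factor in the order of $1/n$ (exactly $1/n$ for SEMO; at least $\frac{1}{e}\cdot\frac{1}{n}$ for GSEMO, explaining the $e$ in the bound). Conditioned on choosing a nonzero coordinate $i$ of $z^*$ and drawing $Z \sim D$, the coordinate improves by up to $|z^*_i|$ before overshooting past $0$; capping the improvement at overshoot is exactly the truncated variable $Z_z$ of Lemma~\ref{lem:ezz}. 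So the expected decrease of $\|z^*\|_1$ contributed by coordinate $i$ is at least $\E[Z_{|z^*_i|}] \ge K \min\{|z^*_i|^2 q,\ \tfrac{1}{4q}\}$. Since a uniformly chosen nonzero coordinate is hit, averaging over the (at most $n$) coordinates whose absolute values sum to $X_t = \|z^*\|_1$ gives the per-coordinate contribution, and multiplying by the selection/mutation probability yields a drift bound of the form
\[
  h(X_t) \;\ge\; \frac{K}{e(2a+1)n}\,\min\!\Bigl\{\tfrac{X_t^2}{n}\,q,\ \tfrac{1}{4q}\Bigr\}.
\]
Here the $X_t^2/n$ arises because, by convexity (power-mean / Cauchy--Schwarz), the sum of squared coordinate magnitudes is minimized when mass is spread evenly, giving $\sum_i |z^*_i|^2 \ge X_t^2/n$, and one such coordinate is selected with probability about $1/n$ among the nonzero ones.

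Then I would feed this $h$ into Theorem~\ref{thm:vardrift} and evaluate $\sum_{i \in [X_0]} 1/h(i)$. The sum splits at the threshold $i^* \approx \sqrt{n}/(2q)$ where the two branches of the minimum cross. For $i \le i^*$ the summand is proportional to $n/(i^2 q)$, and $\sum_i 1/i^2 \le \pi^2/6$, producing the $\frac{n^2\pi^2}{6q}$ term (after the $(2a+1)en$ prefactor). For $i > i^*$ the summand is the constant-in-$i$ quantity proportional to $q$, and summing at most $X_0 \le \|x^{(0)}\|_1$ such terms produces the $\|x^{(0)}\|_1 q$ term; the numerical constant $4$ reflects the $1/(4q)$ in Lemma~\ref{lem:ezz}. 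Collecting everything under the common prefactor $(2a+1)en/K$ gives the claimed bound.

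The main obstacle I anticipate is making the drift lower bound clean and honest. The subtle point is that $X_t = \|z^*\|_1$ is a sum over possibly many coordinates, and I need the worst-case configuration of coordinate magnitudes to still yield drift proportional to $X_t^2 q / n$ in the large-distance regime; this requires the convexity argument $\sum |z^*_i|^2 \ge (\sum |z^*_i|)^2/n = X_t^2/n$ together with correctly accounting for the $1/n$ probability of selecting the ``right'' coordinate, so that these two factors of $n$ combine properly. A second delicate point is verifying that $X_t$ is integer-valued in $[0..n']$ for some $n'$ so that the discrete variable drift theorem applies verbatim, and that the monotonicity hypothesis on $h$ holds (the $\min$ of an increasing and a constant function is increasing, so this is fine). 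Once the drift bound is pinned down, the evaluation of the variable-drift sum is routine.
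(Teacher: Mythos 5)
Your proposal is correct and follows essentially the same route as the paper's proof: the same potential $X_t = \min_{z \in P^{(t)}} \|z\|_1$, non-increase via Lemma~\ref{lem:noIncreaseInL1Norm}, the selection factor $1/(2a+1)$ from Lemma~\ref{lem:popsize}, the truncated expectation of Lemma~\ref{lem:ezz} combined with the AM--QM (Cauchy--Schwarz) bound $\sum_i |z^*_i|^2 \geq X_t^2/n$ to get the drift $\frac{K}{(2a+1)en}\min\bigl\{\frac{1}{4q}, X_t^2 \frac{q}{n}\bigr\}$, and the variable drift theorem with the sum split at $\lfloor \sqrt{n}/(2q) \rfloor$ using $\sum_{i\geq 1} i^{-2} = \pi^2/6$. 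The only cosmetic difference is that the paper makes the two drift regimes (some coordinate exceeding $1/(2q)$ versus all coordinates small) an explicit case distinction rather than folding them into a single minimum, and your parenthetical about the first term should read $n\pi^2/(6q)$ once the $(2a+1)en$ prefactor is factored out.
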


The expected runtime of the second phase depends on how~$1/q$ compares to~$a$.
We split the runtime into two parts.
The first part concerns covering a subset of the Pareto front that chooses individuals that are roughly~$1/q$ apart, resulting in about~$aq$ intervals of roughly equal size.
If $1/q > 2a + 1$, then we consider the entire Pareto front as a single interval.

The second part concerns covering all intervals.
Since uncovered points in each interval are at most apart by about~$1/q$, we wait for such a rate to be chosen.
This can happen for any interval, leading to independent trials.
We then use a Chernoff-like concentration bound that provides us with a runtime bound that holds with high probability.
Via a restart argument, this bound is turned into an expectation.
The concentration bound yields the logarithmic factors.

\begin{lemma}
  \label{lem:timeToFinishET}
  Assume that at some time $t_2$, the population of the algorithm contains the solution $(0, \dots, 0)$.
  Let $T_2 = \inf\{t \in \N \mid F^* \subseteq f(P^{(t_2+t)})\}$ the additional number of iterations until the Pareto front is computed.
  Then there is a sufficiently large constant $C \in \R_{> 0}$ such that
  \[
    \E[T_2] = C\Bigl(an \max\Bigl\{\frac{\ln(a + 1)}{aq},aq+\ln(a + 1)\Bigr\}\Bigr).
  \]
\end{lemma}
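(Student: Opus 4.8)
The plan is to view the second phase as a covering process on the index set $[-a .. a]$. Writing $p_k$ for the image of $(k, 0, \dots, 0)^\t$, Lemma~\ref{lem:atMostOneSolutionPerValueInside} together with the pairwise incomparability of the $p_k$ shows that once $p_k$ appears in the population it stays present forever; hence it suffices to bound the time until every $k \in [-a .. a]$ is \emph{covered}, starting from the covered center $k = 0$ (the image of the all-$0$s vector). A covered index $k$ produces a new covered index $k'$ whenever its representative $(k, 0, \dots, 0)^\t$ is selected (probability $\ge 1/(2a+1)$ by Lemma~\ref{lem:popsize}), only the first coordinate is touched (probability $\ge 1/(en)$ for the GSEMO and $1/n$ for the SEMO), and the mutation value equals $k' - k$. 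The only fact I need about the bilateral geometric law is that $\Pr[Z = d] = \frac{q}{2-q}(1-q)^{|d|}$ has order $q$ for every $|d| \le 1/q$, because $(1-q)^{1/q}$ is bounded below by a positive constant for $q \le c$; dually, $\Pr[|Z| \ge 1/q]$ is bounded below by a constant. Thus a jump bridging any distance up to $1/q$ succeeds with probability $\Omega(q)$, while a long jump of length $\ge 1/q$ succeeds with constant probability.

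I would first treat the regime $aq \le 1$, where $1/q$ exceeds the whole range. Here every covered index reaches every uncovered index in one step with probability $\Omega(q)$. Letting $U_t$ be the number of uncovered indices, the per-step probability of covering some new index is at least $\frac{1}{en}\cdot\frac{1}{|P^{(t)}|}\sum_{k,k'}\Omega(q)$, summed over the $|P^{(t)}|$ covered and $U_t$ uncovered indices; the selection factor $1/|P^{(t)}|$ cancels, leaving $\Omega(qU_t/n)$. Applying the variable-drift theorem (Theorem~\ref{thm:vardrift}) with $X_t = U_t$ and $h(U) = \Theta(qU/n)$ gives the harmonic sum $\sum_{U=1}^{2a}\Theta(n/(qU)) = \Theta\big(\tfrac{n\ln(a+1)}{q}\big) = \Theta\big(an\cdot\tfrac{\ln(a+1)}{aq}\big)$, which is exactly the first branch of the maximum.

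In the complementary regime $aq > 1$ I would run two sub-phases. Part~1 installs $\Theta(aq)$ landmarks at spacing in $[1/q, 2/q]$ spanning $[-a .. a]$: each new landmark is reached from the current frontier by a single jump of length in $[1/q, 2/q]$, which has constant success probability per useful step, so Part~1 is a sum of $O(aq)$ geometric waiting times of expectation $O((2a+1)n)$ each, i.e.\ $O(an \cdot aq)$; alternatively this follows from additive drift (Theorem~\ref{thm:additiveDrift}) on the frontier-to-boundary distance, whose expected advance per step is $\Omega(1/(anq))$ by Lemma~\ref{lem:ezz}. After Part~1 every uncovered index lies within $O(1/q)$ of a covered one and is therefore hit with probability $\Omega(q)$ per useful step. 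Part~2 fills the $\Theta(aq)$ intervals; the key is that fills in different intervals are near-independent trials, so I would fix a target length $L = \Theta(an \ln(a+1))$, use a Chernoff bound to show that within $L$ steps each single interval is completely filled except with probability small enough that a union bound over the $\Theta(aq)$ intervals still leaves the overall failure probability at most $1/2$, and then convert this high-probability guarantee into the claimed expectation by a geometric restart argument. The logarithm is precisely what the concentration bound costs, yielding the $\ln(a+1)$ summand and, together with Part~1, the second branch $aq + \ln(a+1)$.

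The main obstacle is Part~2. A direct accounting -- either waiting to hit each of the $\Theta(a)$ indices individually, or running a coupon-collector / variable-drift argument with the crude bound $|P^{(t)}| \le 2a+1$ -- pays a spurious factor (the selection penalty of order $a$, equivalently a factor of order $1/q$) and only yields $O\big(\tfrac{an \ln(a+1)}{q}\big)$. Recovering the tight $an \ln(a+1)$ genuinely requires exploiting that many intervals progress in parallel, so that the success probability aggregated over all active intervals does not carry the full population-size factor. Making the ``independent trials'' picture rigorous is the delicate step, since the population is shared and grows while the intervals are filled, and gaps split whenever an interior index is covered; this is exactly why a concentration-plus-restart argument is preferable to a single drift computation. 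Summing the two regimes and the two sub-phases and absorbing lower-order terms into the constant $C$ then gives the stated maximum.
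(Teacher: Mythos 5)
Your overall architecture is essentially the paper's: the paper sets $Q=\min\{\lfloor 1/q\rfloor, a\}$ and $m=\lceil a/Q\rceil$, first installs one individual per block of width $Q$ (your landmarks; both arguments use that a jump bridging a window of $\Theta(1/q)$ admissible targets, each of probability $\Omega(q)$, succeeds with constant conditional probability, costing $O(an\cdot aq)$ overall), and then fills all blocks in parallel via a Chernoff bound for sums of geometric random variables, a union bound over blocks, and a restart argument; your regime split $aq\le 1$ versus $aq>1$ is exactly the paper's case distinction $Q=a$ versus $Q=\lfloor 1/q\rfloor$. However, two of your steps fail as written. In the regime $aq\le 1$, your cancellation of the selection factor is incorrect: the eligible parents are the covered Pareto points, of which there are exactly $2a+1-U_t$, not $|P^{(t)}|$ -- the population can also contain non-Pareto individuals, and when $U_t$ is close to $2a$ almost every selection is wasted. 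The true drift is only $\Omega\bigl(q\,U_t(2a+1-U_t)/((2a+1)n)\bigr)$, a factor of order $a$ below your claim at $U_t\approx 2a$. The bound survives, since by partial fractions $\sum_{U=1}^{2a}\frac{2a+1}{U(2a+1-U)}=O(\ln a)$, but the corrected $h$ is no longer monotone, so Theorem~\ref{thm:vardrift} does not apply verbatim; instead use that $U_t$ is non-increasing (covered Pareto points are never removed, as you note) and sum the expected geometric waiting times directly.

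The more serious gap is in Part~2 of the regime $aq>1$. Your only stated rate -- every uncovered index lies within $O(1/q)$ of a covered one and is hit with probability $\Omega(q)$ per useful step -- gives, for a single interval with $k$ of its $Q\approx 1/q$ points covered, a progress probability of only $\Omega\bigl((Q-k)q/(an)\bigr)$ (one parent per hole), whose waiting-time sum is $\Theta\bigl(\frac{an\ln Q}{q}\bigr)$ per interval. This exceeds your target $L=\Theta(an\ln(a+1))$ by a factor of order $1/q$: the spurious factor you set out to avoid reappears \emph{inside} each interval, and no amount of cross-interval parallelism or concentration removes it. The missing idea (the paper's) is parallelism over parent--target \emph{pairs} within an interval: each of the $k$ covered points can serve as parent for each of the $Q-k$ holes, all at distance at most $Q\le 1/q$, so the rate is $p_k=\Omega\bigl(k(Q-k)q/(an)\bigr)$; the coupon-collector sum $\sum_{k=1}^{Q-1}\frac{1}{k(Q-k)}=O\bigl(\frac{\ln Q}{Q}\bigr)$ together with $Qq=\Theta(1)$ yields an expected per-interval fill time of $O(an\ln Q)$, and after stochastically dominating the fill time by a sum of independent geometric random variables, the Chernoff bound of \cite{DoerrD18} makes your whp-plus-union-plus-restart plan go through. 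Note also that you do not need fills in different intervals to be (near-)independent: the domination is valid for \emph{every} population state with exactly $k$ covered points in the given interval, so a plain union bound over the $2m$ blocks (the paper takes $\delta=\log_Q(8m)$ in the tail bound) suffices, which disposes of the delicacy you flagged.
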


\subsection{Power-Law Mutation}
\label{sec:runTime:powerLaw}

\emph{Power-law mutation} utilizes a symmetric power-law, parameterized by the power-law exponent $\beta \in (1, 2)$ and defined via the Riemann zeta function~$\zeta$.
For a random variable~$Z$ following this law, it holds for all $k \in \Z \setminus \{0\}$ that
\begin{equation*}
  \Pr[Z = k] = |k|^{-\beta} / \bigl(2\,\zeta(\beta)\bigr) .
\end{equation*}

This operator, introduced by \cite{DoerrLMN17}, was shown to be provably beneficial in various settings \cite{FriedrichQW18,CorusOY21tec,AntipovBD22,DangELQ22,DoerrQ23tec,DoerrR23,DoerrKV24,KrejcaW24}.

Our main result for power-law mutation is Theorem~\ref{thm:powerLawRunTimeSemoGsemo}.
Similar to the result for mutation strengths with exponential tails (Theorem~\ref{thm:runTimeET}), the expected runtimes of both phases are well separated.
We explain the details for each phase below.

\begin{theorem}
  \label{thm:powerLawRunTimeSemoGsemo}
  Consider the SEMO or the GSEMO with power-law mutation with constant $\beta \in (1, 2)$ optimizing~$f$, given~$x^{(0)}$.
  Let $T = \inf\{t \in \N \mid F^* \subseteq f(P^{(t)})\}$.
  Then for both the SEMO and the GSEMO, it holds that
  \begin{align*}
     & \E[T \mid x^{(0)}]                                                                                                                     \\
     & \leq (2a + 1) \cdot 2en\zeta(\beta) \left(2^{1/(2 - \beta)} + 2\frac{2 - \beta}{\beta - 1} \|x^{(0)}\|_1^{\beta - 1}\right)            \\
     & \quad + 4 \ln(2) en \frac{\zeta(\beta) (\beta - 1)}{1 - (3/2)^{1 - \beta}} \frac{1}{\left(1 - 2^{1 - \beta}\right)^2} (2a + 1)^{\beta} \\
     & \quad + (2a + 1) \cdot 2en \zeta(\beta) \bigl(\ln(a + 1) + 1\bigr) .
  \end{align*}
\end{theorem}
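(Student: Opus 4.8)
The plan is to follow the two-phase scheme common to all three operators, mirroring the split visible in the statement: the first line of the bound is Phase~1 (reaching the all-$0$s vector), and the last two lines are Phase~2 (covering the remaining Pareto front once the origin is present).

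For Phase~1 I would track the potential $X_t = \min_{z \in P^{(t)}} \|z\|_1$, which by Lemma~\ref{lem:noIncreaseInL1Norm} never increases, so it stays in $[0 .. \|x^{(0)}\|_1]$ and the discrete variable drift theorem (Theorem~\ref{thm:vardrift}) applies. To lower-bound the drift at $X_t = m$, let $z^*$ be a minimizer; it is selected with probability at least $1/(2a+1)$ by Lemma~\ref{lem:popsize}, and conditioning on mutating exactly one component (probability $\Theta(1/n)$, at least $1/(en)$ for the GSEMO) I move a single nonzero component of value $c$ strictly toward~$0$. Counting only jumps of magnitude $j \in [1 .. c]$ in the correct direction gives expected progress $\frac{1}{2\zeta(\beta)}\sum_{j=1}^{c} j^{1-\beta} \gtrsim \frac{c^{2-\beta}}{2\zeta(\beta)(2-\beta)}$. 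Since $t \mapsto t^{2-\beta}$ is concave with value $0$ at $0$, hence subadditive, summing over the nonzero components of $z^*$ yields total progress $\gtrsim m^{2-\beta}$; here Lemma~\ref{lem:noIncreaseInL1Norm} again guarantees that the remaining (unhelpful or multi-component) mutations cannot increase $X_t$, so this single good event already lower-bounds the drift. Thus $h(m) \gtrsim \frac{m^{2-\beta}}{(2a+1)\,en\,\zeta(\beta)(2-\beta)}$, and $\E[T_1] \le \sum_{i=1}^{\|x^{(0)}\|_1} 1/h(i) \propto (2a+1)\,en\,\zeta(\beta) \sum_i i^{\beta-2}$. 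Bounding $\sum_{i=1}^{M} i^{\beta-2} \le 1 + \frac{M^{\beta-1}-1}{\beta-1}$ produces exactly the constant term and the $\|x^{(0)}\|_1^{\beta-1}$ term of the first line.

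For Phase~2 I would exploit that the Pareto-set points $(k,0,\dots,0)$ are pairwise $\preceq$-incomparable and Pareto-optimal (the origin maps to $(a,a) \in F^*$), so once covered they persist forever. I would cover $[-a .. a]$ by a dyadic bisection across scales $\ell = 0,1,\dots,L$ with $L = \Theta(\log a)$, where at scale $\ell$ consecutive covered points are $\Theta(2^{L-\ell})$ apart and one reaches the next scale by bisecting every gap via a first-component jump of magnitude $\Theta(2^{L-\ell})$ landing in the gap's middle portion (whose probability, summed over admissible magnitudes, contributes the factor $\frac{1}{1-(3/2)^{1-\beta}}$). The quantitatively crucial point is that at scale $\ell$ the population already contains $\Theta(2^{\ell})$ individuals, so selecting a prescribed gap endpoint costs $\Theta(2^{\ell})$ rather than $(2a+1)$, while $\Theta(2^{\ell})$ gaps are bisected in parallel. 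For each scale I would bound the time to bisect all gaps by a union/Chernoff bound over these parallel (near-)geometric trials to obtain a high-probability statement, then convert it to an expectation by a restart argument, which is the source of the $\ln 2$ factor. Summing the per-scale bounds gives a series of the form $\sum_{\ell}(\ell+1)(2^{1-\beta})^{\ell}$, whose closed form $\frac{1}{(1-2^{1-\beta})^2}$ yields the $(2a+1)^{\beta}$ term; the final unit-gap filling, where jumps have magnitude $\Theta(1)$ and constant success probability but there are $\Theta(a)$ gaps among $\Theta(a)$ individuals, is a coupon-collector process contributing $\Theta((2a+1)\,en\,\zeta(\beta)\ln(a+1))$, the last term. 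Adding the two phases gives the claimed bound for both algorithms, the only difference being the $1/n$ versus $1/(en)$ probability of changing a single prescribed component.

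The Phase~1 drift estimate and the elementary sums are routine. The main obstacle is Phase~2: one must make the dyadic accounting rigorous, in particular that the population genuinely holds $\Theta(2^{\ell})$ well-spread points at scale $\ell$ so that the selection cost is $2^{\ell}$ and not $2a+1$ — this is exactly what turns a naive $(2a+1)^{1+\beta}$ into the claimed $(2a+1)^{\beta}$ — and one must control the parallel covering processes, which share the single mutation per iteration and are therefore \emph{not} independent. Taming this dependence with a Chernoff-type bound and the restart argument, while tracking the overshoot cases of the power-law jumps so that a bisecting step never removes an already covered Pareto point, is where the real work lies.
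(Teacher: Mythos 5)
Your Phase~1 argument matches the paper's proof of Lemma~\ref{lem:powerLawTimeToZeroInsideMagicInterval} (same potential $\min_{z \in P^{(t)}} \|z\|_1$, non-increase via Lemma~\ref{lem:noIncreaseInL1Norm}, per-component power-law drift summed via concavity/subadditivity, then Theorem~\ref{thm:vardrift}); up to bookkeeping of the small-$X_t$ case that produces the exact constant $2^{1/(2-\beta)}$, this part is sound.

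Phase~2, however, contains a genuine gap, and it sits precisely at the step you yourself flag as ``where the real work lies.'' You claim that at dyadic scale $\ell$ the population holds only $\Theta(2^{\ell})$ individuals, so that selecting a gap endpoint costs $\Theta(2^{\ell})$ rather than $2a+1$, and that this refinement is what turns a naive $(2a+1)^{1+\beta}$ into the claimed $(2a+1)^{\beta}$. Neither half of this is right. First, the population-size claim is not provable with the available tools: during Phase~2 the population may contain, besides the covered Pareto points, up to one incomparable off-front individual per $x_1$-value in $[-a+1\,..\,a-1]$ (Lemma~\ref{lem:atMostOneSolutionPerValueInside}), so $|P^{(t)}|$ can be as large as $2a+1$ at every scale; indeed the paper explicitly uses the pessimistic bound $1/(2a+1)$ throughout and names finer population-size dynamics as an open problem. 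Second, the refinement is unnecessary, and your own series betrays this. With the uniform selection probability $1/(2a+1)$ from Lemma~\ref{lem:popsize} and the landing \emph{window} of $\Theta(\ell(k))$ admissible target points in the middle portion of a gap of length $\ell(k) = (2a+1)/2^{k+1}$, the per-iteration probability of hitting a fixed unhit interval is $\Theta\bigl(\ell(k)^{1-\beta}/((2a+1)n)\bigr)$ (note $\ell(k)^{1-\beta}$, not $\ell(k)^{-\beta}$; the phantom factor $2a+1$ you fear arises only if one aims at a single target point). The per-scale cost, after the union bound over the $2^{k}$ unhit intervals and the restart argument, is then $\Theta\bigl((k+1)\,n\,(2a+1)^{\beta}(2^{1-\beta})^{k+1}\bigr)$, and summing over $k$ yields exactly the series $\sum_{k}(k+1)(2^{1-\beta})^{k+1}$ with closed form $2^{1-\beta}/(1-2^{1-\beta})^{2}$ and the claimed $(2a+1)^{\beta}$ term --- this is the paper's argument in Lemma~\ref{lem:powerLawTimeToCompletingParetoFront}. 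Under \emph{your} accounting (selection cost $2^{\ell}$, window probability $\ell(k)^{1-\beta}$), the per-scale cost would instead be proportional to $(\ell+1)(2^{2-\beta})^{\ell}$, an \emph{increasing} geometric series dominated by the top scale and giving roughly $n(2a+1)\log^{2}(2a+1)$, not the convergent series you quote; so your stated series is inconsistent with your stated mechanism. Finally, two of your subsidiary worries dissolve: the ``parallel, dependent'' covering processes are handled by a plain per-interval geometric tail bound plus a union bound (no independence needed), and overshoots can never remove covered Pareto points, since Pareto optima, being the unique preimages of their objective values, persist in the population forever.
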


We make use of the following theorem, which estimates sums of monotone functions via a definite integral.
This is useful, as our analyses involve many possible values for how to improve a specific individual.
These cases lead to sums over of powers of~$-\beta$, as the values follow a power-law.
Integrating such a polynomial is easier than determining the exact value of the discrete sum.
The theorem below shows that we make almost no error when considering the integral.

\begin{theorem}[{\cite[Inequality~(A.12)]{CormenLRS01IntroductionToAlgorithms}}]
  \label{thm:sumsToIntegrals}
  Let $g\colon \R \to \R$ be a monotonically non-increasing function, and let $\alpha, \beta \in \R$ with $\alpha \leq \beta$.
  Then
  \begin{equation*}
    \textstyle\int_{\alpha}^{\beta + 1} g(x) \d x
    \leq \sum\nolimits_{x = \alpha}^{\beta} g(x)
    \leq \int_{\alpha - 1}^{\beta} g(x) \d x .
  \end{equation*}
\end{theorem}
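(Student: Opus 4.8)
The plan is to reduce the two-sided inequality to a collection of elementary per-interval comparisons and then reassemble them by concatenating adjacent unit-length intervals. The only property of $g$ I would use is monotonicity, so the argument is purely about comparing the value $g(x)$ at an integer point with the integral of $g$ over a neighbouring unit interval. Throughout I treat the summation index as ranging over integers with unit step from $\alpha$ to $\beta$, so that (as in the original CLRS statement) the grid points $\alpha, \alpha + 1, \dots, \beta + 1$ partition $[\alpha, \beta + 1]$ exactly; this is precisely the situation in which the stated identities of the integration limits hold.

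First I would establish the two one-step estimates. Fix an integer $x$ in the summation range. Since $g$ is monotonically non-increasing, for every $t \in [x, x + 1]$ we have $g(t) \le g(x)$, and integrating over this unit interval gives $\int_x^{x + 1} g(t) \d t \le g(x)$. Symmetrically, for every $t \in [x - 1, x]$ we have $g(t) \ge g(x)$, which yields $g(x) \le \int_{x - 1}^{x} g(t) \d t$. These two inequalities are the heart of the proof: the first bounds a summand from below by the integral over the unit interval to its right, the second bounds it from above by the integral over the unit interval to its left.

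Next I would sum each estimate over $x \in \{\alpha, \dots, \beta\}$ and exploit that consecutive unit intervals concatenate into a single integral. Summing the lower estimate, the intervals $[x, x + 1]$ for $x = \alpha, \dots, \beta$ tile $[\alpha, \beta + 1]$, so $\sum_{x = \alpha}^{\beta} g(x) \ge \sum_{x = \alpha}^{\beta} \int_x^{x + 1} g(t) \d t = \int_{\alpha}^{\beta + 1} g(t) \d t$, the claimed lower bound. Summing the upper estimate, the intervals $[x - 1, x]$ tile $[\alpha - 1, \beta]$, so $\sum_{x = \alpha}^{\beta} g(x) \le \int_{\alpha - 1}^{\beta} g(t) \d t$, the claimed upper bound.

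The hard part will not be any deep inequality but rather the bookkeeping that makes the integration limits come out exactly: one must check that the shift by $\pm 1$ in each per-interval estimate is aligned with the direction of monotonicity, and that the chosen unit intervals reconstruct precisely $[\alpha, \beta + 1]$ (lower bound) and $[\alpha - 1, \beta]$ (upper bound), with no overlap and no gap. A secondary point worth one sentence is integrability: a monotone function is bounded and has at most countably many discontinuities on any finite interval, hence is Riemann integrable there, so every integral above is well defined. With these routine checks in place the result follows; as it is the classical CLRS inequality, no further machinery is required.
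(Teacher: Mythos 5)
The paper offers no proof of this statement at all---it is imported verbatim from CLRS (Inequality~(A.12))---so there is nothing to diverge from, and your argument is exactly the standard proof of that classical fact: the per-interval bounds $\int_x^{x+1} g \leq g(x) \leq \int_{x-1}^{x} g$ from monotonicity, summed and concatenated into the two integrals. Your proposal is correct, and you rightly flag the two points the paper's terse statement glosses over, namely that the limits only come out exactly when $\beta - \alpha$ is a nonnegative integer (for non-integer spacing the lower bound can fail, e.g.\ $g \equiv 1$, $\alpha = 0$, $\beta = 1/2$ gives $1 < 3/2$) and that monotone functions are Riemann integrable so all integrals are well defined.
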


We now consider Theorem~\ref{thm:powerLawRunTimeSemoGsemo}.
The expected runtime of the first phase, besides the factor~$an$, is of order~$\|x^{(0)}\|_1^{\beta - 1}$.
Ignoring the factor~$an$, this is because the algorithm makes an improvement of size~$k$ with probability~$k^{-\beta}$ and has up to about~$\|x^{(t)}\|_1$ choices for an improvement.
This leads to an expected improvement of~$k^{1 - \beta}$ per component of~$\|x^{(t)}\|_1$.
Integrating this expression results in an overall expected improvement of order~$\|x^{(t)}\|_1^{2 - \beta}$.
By the variable drift theorem (Theorem~\ref{thm:vardrift}), estimating the sum via an integral, this translates to an overall runtime of order~$\|x^{(0)}\|_1^{\beta - 1}$.

\begin{lemma}
  \label{lem:powerLawTimeToZeroInsideMagicInterval}
  Let $T_1 = \inf\{t \in \N \mid (0, \dots, 0)^\t \in P^{(t)}\}$.
  Then
  \begin{align*}
     & \E[T_1 \mid x^{(0)}]                                                                                                                    \\
     & \leq (2a + 1) \cdot 2en\zeta(\beta) \left(2^{1/(2 - \beta)} + 2\textstyle\frac{2 - \beta}{\beta - 1} \|x^{(0)}\|_1^{\beta - 1}\right) .
  \end{align*}
\end{lemma}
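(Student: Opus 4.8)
The plan is to apply the variable drift theorem (Theorem~\ref{thm:vardrift}) to a potential function that tracks the minimum L1-distance of the population to the all-$0$s vector. Define $X_t = \min_{z \in P^{(t)}} \|z\|_1$, so that $T_1$ is exactly the first time $X_t = 0$ (note that the all-$0$s vector is the unique point of L1-norm~$0$ reachable, and by Lemma~\ref{lem:noIncreaseInL1Norm} the potential is non-increasing along the run as long as we do not displace the current closest individual). The key quantity to lower-bound is the expected one-step drift $\E[X_t - X_{t+1} \mid X_t]$ for $t < T_1$, and then I would feed the resulting drift function $h$ into Theorem~\ref{thm:vardrift} and estimate the sum $\sum_{i \in [X_0]} 1/h(i)$ by an integral via Theorem~\ref{thm:sumsToIntegrals}.

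\textbf{Bounding the drift.} Fix an individual $z^*$ of minimum L1-norm $\|z^*\|_1 = X_t =: s > 0$. With probability at least $1/(2a+1)$ (Lemma~\ref{lem:popsize}) we select $z^*$ for mutation, and with probability in the order of $1/n$ (for the GSEMO, at least $(1/n)(1-1/n)^{n-1} \ge 1/(en)$; for the SEMO, exactly $1/n$) we mutate exactly one component. Given that a single component $z^*_i \neq 0$ is chosen, a step of size $k$ toward $0$ (with $1 \le k \le |z^*_i|$) occurs with probability $k^{-\beta}/(2\zeta(\beta))$ and reduces the L1-norm by~$k$, yielding an offspring that still has minimum norm in the population. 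Summing the contribution $k \cdot k^{-\beta} = k^{1-\beta}$ over the admissible range and then over all nonzero components gives, after bounding the per-component sum by the integral $\int_1^{|z^*_i|} k^{1-\beta}\,\d k = \frac{|z^*_i|^{2-\beta}-1}{2-\beta}$ (Theorem~\ref{thm:sumsToIntegrals}) and using concavity of $t \mapsto t^{2-\beta}$ to aggregate over components (so that $\sum_i |z^*_i|^{2-\beta}$ is minimized, for fixed $\sum_i |z^*_i| = s$, when mass is spread out, but is at least $s^{2-\beta}$ when concentrated on one coordinate), a drift bound of the form
\[
  h(s) = \frac{1}{(2a+1)\,en\,\zeta(\beta)} \cdot \frac{s^{2-\beta}}{\text{const}}
\]
for the expected progress. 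I would record the precise constant carefully so that the final $\frac{2-\beta}{\beta-1}$ factor emerges.

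\textbf{Summing the drift.} With $h(i) \propto i^{2-\beta}$, the variable drift theorem gives $\E[T_1 \mid x^{(0)}] \le \sum_{i=1}^{s_0} 1/h(i)$ with $s_0 = \|x^{(0)}\|_1$, and $\sum_{i=1}^{s_0} i^{\beta-2}$ is bounded via Theorem~\ref{thm:sumsToIntegrals} by $\int_0^{s_0} x^{\beta-2}\,\d x = \frac{s_0^{\beta-1}}{\beta-1}$ together with the boundary term for $i=1$. The $i=1$ term (equivalently the constant $2^{1/(2-\beta)}$) must be separated out, since $x^{\beta-2}$ is not integrable down to $0$ when $\beta < 2$; this is exactly where the additive constant $2^{1/(2-\beta)}$ in the statement comes from, presumably by splitting the sum at a threshold and bounding the initial terms directly. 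Collecting the prefactor $(2a+1)\,2en\zeta(\beta)$ and the two contributions yields the claimed bound.

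\textbf{The main obstacle} I anticipate is the aggregation step over multiple nonzero components: the drift is a sum over coordinates of concave terms, and I must argue that this sum is lower-bounded by the single-coordinate worst case $s^{2-\beta}$ (rather than exploiting that spreading mass over many coordinates would only help the algorithm). The correct worst case is when all of the L1-mass sits on one coordinate, giving progress of order $s^{2-\beta}$; verifying that concentration is genuinely the bottleneck and handling the $-1$ boundary terms in the per-component integral bound cleanly, without losing the right constant, is the delicate part. The secondary care point is making the single-component-flip probability argument uniform across SEMO and GSEMO so that the same $h$ (up to the factor $e$) works for both, which the shared $en$ prefactor in the statement already anticipates.
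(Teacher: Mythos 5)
Your proposal follows the paper's proof essentially step for step: the same potential $X_t = \min_{z \in P^{(t)}} \|z\|_1$ with Lemmas~\ref{lem:noIncreaseInL1Norm} and~\ref{lem:popsize}, the same per-component sum-to-integral drift estimate via Theorem~\ref{thm:sumsToIntegrals}, the same concavity argument identifying concentration of all L1-mass on one coordinate as the worst case (your parenthetical has the direction backwards --- for a concave function vanishing at $0$, spreading mass \emph{maximizes} the sum and concentration minimizes it at $s^{2-\beta}$, exactly as your ``main obstacle'' paragraph then correctly states), and the same variable-drift summation estimated by an integral. One small correction on a side remark: the additive constant $2^{1/(2-\beta)}$ does not come from non-integrability of $x^{\beta-2}$ at $0$ (for $\beta \in (1,2)$ that integral converges, since $\beta - 2 > -1$); it comes from the paper's case distinction in the drift itself, which uses the constant bound $1/\bigl(2en\zeta(\beta)\bigr)$ (the $k = 1$ term alone) whenever $X_t < 2^{1/(2-\beta)} - 1$, the threshold being precisely where $(X_t + 1)^{2-\beta} \geq 2$ lets the $-1$ in $(X_t + 1)^{2-\beta} - 1$ be absorbed into a factor $\tfrac{1}{2}$, so that the bound $\propto X_t^{2-\beta}$ only needs to hold above that threshold.
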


The second phase advances in several steps.
In each step, the number of points covered on the Pareto front of~$f$ roughly doubles and is spread evenly across the Pareto front.
If the maximum distance of consecutively uncovered points is~$\ell$, the probability to create a new point is at least~$\ell^{1 - \beta}$.
At the beginning, since we assume that we only have a single point on the Pareto front, $\ell$ is in the order of~$a$, and the probability to choose the correct point and mutate it correctly is in the order~$1/(an)$.
Hence, the waiting time for the first step is in the order of~$na^{\beta}$, which dominates the remaining time, as the each additional point on the Pareto front increases the chance of creating a new one.
Since the length of the Pareto front is not a power of~$2$, our result below has two terms.
The first one bounds the time that the doubling procedure above covers at least half of the Pareto front.
The second term bounds the time to cover the remaining points.

\begin{lemma}
  \label{lem:powerLawTimeToCompletingParetoFront}
  Let $S \in \N$ be a (possibly random) iteration such that~$P^{(S)}$ contains the individual $(0, \dots, 0)^\t$, and let $T_2 = \inf\{t \in \N \mid t \geq S \land F^* \subseteq f(P^{(t)})\} - S$.
  Then
  \begin{align*}
     & \E[T_2 \mid S, x^{(0)}]                                                                                                                 \\
     & \quad\le 4 \ln(2) en \frac{\zeta(\beta) (\beta - 1)}{1 - (3/2)^{1 - \beta}} \frac{1}{\left(1 - 2^{1 - \beta}\right)^2} (2a + 1)^{\beta} \\
     & \qquad + (2a + 1) \cdot 2en \zeta(\beta) \bigl(\ln(a + 1) + 1\bigr) .
  \end{align*}
\end{lemma}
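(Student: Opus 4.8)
The plan is to track, relative to the starting time $S$, which \emph{columns} $k \in [-a .. a]$ are \emph{covered}, meaning the population contains the Pareto-optimal individual $(k, 0, \dots, 0)^\t$, whose objective value is $(a - k, a + k)^\t$. The all-$0$s vector present at time $S$ covers the central column $k = 0$, i.e.\ the midpoint $(a, a)^\t$ of $F^*$, and the goal is to cover all $2a + 1$ columns. First I would record two structural facts. A covered column is never lost, because the only individual whose objective value weakly dominates a point of $F^*$ is the corresponding Pareto-optimal individual itself (by $\preceq$-minimality of $F^*$ together with the characterization $f^{-1}(F^*) = X^*$), and by Lemma~\ref{lem:atMostOneSolutionPerValueInside} each column hosts at most one individual, so ``covered'' is well defined. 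Moreover, a freshly created Pareto-optimal offspring is always accepted. Hence it suffices to lower-bound the rate at which new columns appear. Since $|P^{(t)}| \le 2a + 1$ (Lemma~\ref{lem:popsize}) and both the SEMO and the GSEMO change exactly the first coordinate and nothing else with probability at least $1/(en)$, picking a given parent and changing its first coordinate by a specific value $d$ has probability at least $\frac{1}{en(2a+1)} \cdot \frac{|d|^{-\beta}}{2\zeta(\beta)}$.

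For the first term I would run a doubling argument. Suppose $m$ columns are currently covered and spread so that the maximum gap between consecutive covered columns is $\ell = \Theta(a/m)$. To split such a gap I pick one of its endpoints and change the first coordinate by some $d$ landing in the central portion of the gap, say $d \in [\ell/3, \ell/2]$; summing the power-law weights over this range via Theorem~\ref{thm:sumsToIntegrals} bounds the per-iteration success probability from below by $p \gtrsim \frac{1}{en(2a+1)} \cdot \frac{\ell^{1 - \beta}\,(1 - (3/2)^{1-\beta})}{2\zeta(\beta)\,(\beta - 1)}$, which is exactly where the factors $1/(1 - (3/2)^{1-\beta})$ and $\beta - 1$ enter. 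Because any one of the $\Theta(m)$ gaps can be split in a single iteration, and distinct gaps are split by creating distinct new columns (disjoint events), doubling the number of covered columns is a coupon-collector process with $\Theta(m)$ targets, each hit with probability at least $p$ when targeted, costing expected time at most $H_m / p \approx (\ln m)\,en(2a+1)\,\ell^{\beta-1}$. Substituting $m = 2^i$ and $\ell = \Theta(a\,2^{-i})$ and summing the series $\sum_i i\,(2^{1-\beta})^i = 2^{1-\beta}/(1 - 2^{1-\beta})^2$ over the $\Theta(\log a)$ doubling steps yields the claimed $\Theta\bigl(n (2a+1)^\beta\bigr)$ bound, including the $1/(1 - 2^{1-\beta})^2$ factor.

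For the second term I would handle the phase after at least half of $F^*$ is covered, when every remaining column lies next to a covered one, i.e.\ the maximum gap is $O(1)$. Then a single unit step ($d = \pm 1$, probability $1/(2\zeta(\beta))$ each) fills a specific remaining column, so its per-iteration success probability is at least $\frac{1}{en(2a+1)} \cdot \frac{1}{2\zeta(\beta)}$. With at most $a + 1$ columns left, a final coupon-collector estimate gives expected time at most $2en(2a+1)\zeta(\beta)\,H_{a+1} \le (2a+1)\cdot 2en\zeta(\beta)\bigl(\ln(a+1)+1\bigr)$, matching the second term. Adding the two phases and using that each covered column, once present, persists, completes the bound for both algorithms.

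I expect the main obstacle to be making the doubling phase rigorous rather than heuristic. Summing single-gap waiting times sequentially only yields the weaker $O(na^2)$; obtaining $O(n(2a+1)^\beta)$ genuinely requires exploiting that all large gaps are filled in parallel, which forces a careful coupon-collector bound in which the number of gaps per level is controlled (so the harmonic factor stays $O(i)$) and the per-gap success probability is bounded uniformly in $\ell$. A further delicate point is that power-law mutation does not land at the exact midpoint of a gap, so the progress measure must be phrased as a threshold on the \emph{maximum} gap (equivalently, on the number of covered columns) rather than on exact positions, and one must verify that each doubling step indeed reduces this measure as required.
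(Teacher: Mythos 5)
Your proposal is correct in substance and follows essentially the same two-phase architecture as the paper's proof: a doubling phase yielding the $(2a+1)^{\beta}$ term (with the same constants $1-(3/2)^{1-\beta}$ from summing power-law weights over targets at distance $\Theta(\ell)$, and $(1-2^{1-\beta})^{-2}$ from the arithmetic--geometric series $\sum_k (k+1)x^{k+1}$), followed by an identical unit-step coupon-collector endgame giving $(2a+1)\cdot 2en\zeta(\beta)(\ln(a+1)+1)$. The one substantive difference lies exactly at the obstacle you flag yourself: you track \emph{dynamic} gaps between covered columns and need $\ell = \Theta(a/m)$ to hold throughout, which fails under uneven splits (landing in the middle third shrinks a gap only by a factor $2/3$, and repeated unlucky splits can leave a gap much larger than $a/m$). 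The paper sidesteps this entirely by fixing a \emph{static} dyadic partition in advance: in step $k$ it partitions $[-a..a]$ into $2^{k+1}$ intervals of size $\approx \ell(k) = (2a+1)/2^{k+1}$ and declares step $k$ complete only when \emph{every} interval is hit; since intervals are halved between steps, an unhit interval always neighbors a hit one, and the target for a jump is the whole unhit interval (distance between $\lceil\ell(k)\rceil$ and $2\lceil\ell(k)\rceil-1$), giving a success probability uniform over intervals at each level without any control of dynamic gap statistics. A second, minor difference: where you bound the expected step length by a coupon-collector harmonic sum $H_m/p$ (using disjointness of the split events, which is valid since distinct gaps require distinct offspring), the paper instead shows that within $s(k)=\ln(2^{k+1})/q(k)$ iterations all intervals are hit with probability at least $1/2$ (per-interval failure probability $2^{-(k+1)}$ plus Bernoulli), and converts this to an expectation of $2s(k)$ by a restart argument; both yield the same $\Theta((k+1)\ell(k)^{1-\beta})$ per-level cost. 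So your plan would go through once you replace the dynamic-gap invariant by the paper's fixed-partition device; as written, the step asserting $\ell = \Theta(a/m)$ is the gap, and it is not a cosmetic one, since without it the per-level success probability $p$ is not uniformly bounded and the series over levels does not telescope.
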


\section{Empirical Analysis}
\label{sec:experiments}

\begin{table}\centering
  \begin{tabular}{cr*{3}{r@{$\pm$}r}}
                                  & $1/q$ & \multicolumn{2}{r}{1st hit} & \multicolumn{2}{r}{cover} & \multicolumn{2}{r}{total}                      \\
    \toprule
    U                             &       & 510\,006                    & 25                        & 342\,916                  & 44 & 852\,922 & 11 \\ \hdashline
    \vphantom{\rule{0 pt}{1 em}}E & 5     & 73\,034                     & 8                         & 23\,115                   & 31 & 96\,148  & 10 \\
                                  & 10    & 25\,288                     & 9                         & 18\,346                   & 25 & 43\,634  & 11 \\
                                  & 20    & 9\,028                      & 8                         & 15\,050                   & 22 & 24\,078  & 14 \\
                                  & 50    & 2\,810                      & 11                        & 15\,237                   & 18 & 18\,048  & 16 \\
                                  & 100   & 1\,604                      & 34                        & 18\,401                   & 24 & 20\,004  & 23 \\
                                  & 200   & 1\,613                      & 63                        & 24\,295                   & 20 & 25\,908  & 20 \\
                                  & 500   & 3\,544                      & 104                       & 43\,693                   & 20 & 47\,236  & 23 \\ \hdashline
    \vphantom{\rule{0 pt}{1 em}}P &       & 1\,301                      & 47                        & 14\,263                   & 16 & 15\,565  & 15 \\
    \bottomrule
  \end{tabular}
  \caption{
    Means and standard deviations in percent of 1st hitting time (phase~$1$), Pareto set cover time (phase~$2$), and total runtime (\# evaluations of~$f$) for scenario~$1$ for the GSEMO optimizing~$f$ with the mutation operators: unit-step (U), exponential-tail (E), and power-law (P).
    Column $1/q$ refers to the \emph{step size} of parameter~$q$ of E.
    For~P, we chose $\beta = \frac{3}{2}$.
    The runs were started with $a = 200$ and $x^{(0)} = (0, 100 a)$, with~$50$ independent runs per row for $n=2$.
  }\label{tab:means+sd_n=2}
\end{table}

\begin{figure}
  \includegraphics[width=\columnwidth]{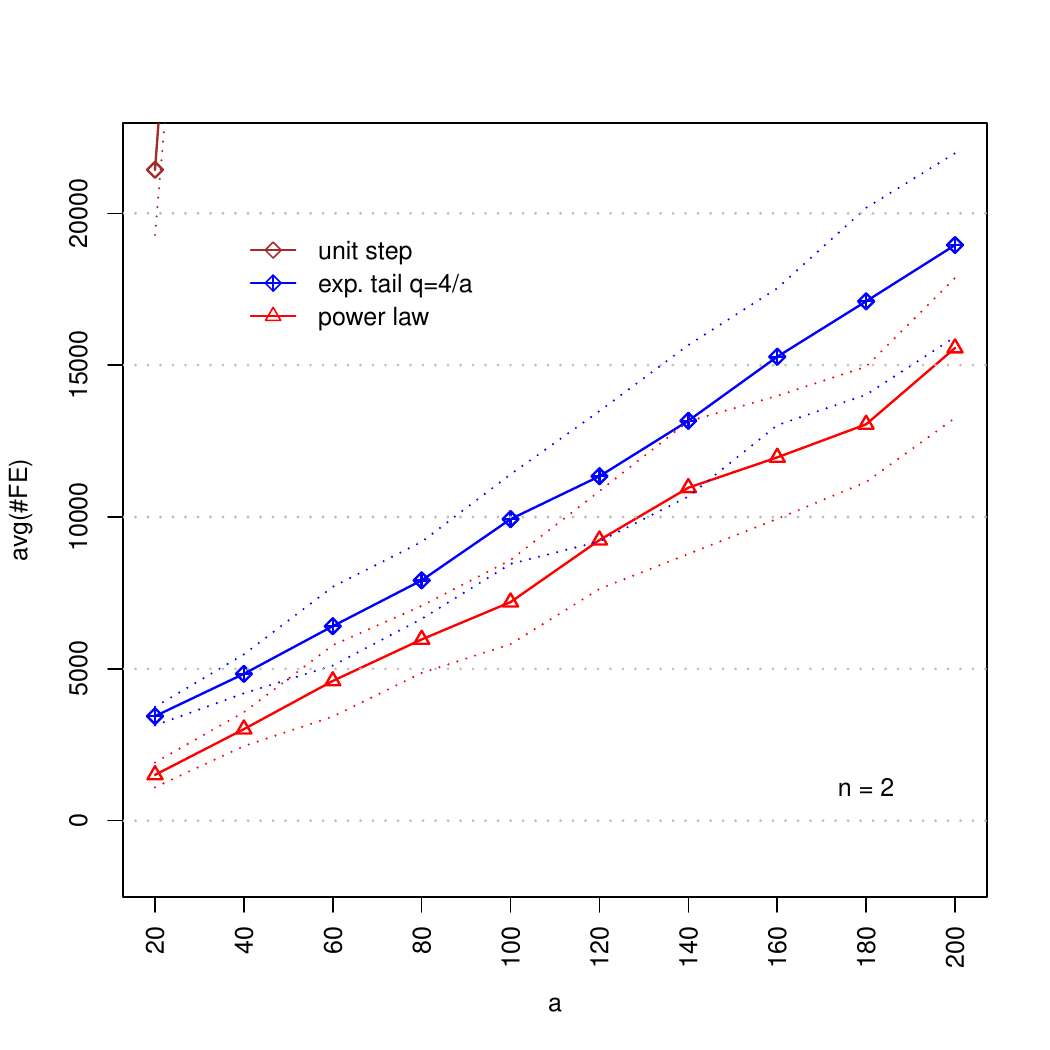}
  \caption{
    The results of scenario~$2$.
    Average evaluations of~$f$ for varying~$a$ for the GSEMO optimizing~$f$ with the mutation operators:
    unit-step (diamonds), exponential-tail (cross diamonds) with $\frac{1}{q} = \frac{a}{4}$, and power-law (triangles) with $\beta = \frac{3}{2}$.
    Each point is based on~$50$ independent runs, with $x^{(0)} = (0, 100 a)$.
    The dotted lines depict the std. deviations.
  }\label{fig:a-plot}
\end{figure}

We aim to assess how far our theoretical upper bounds are from actual, empirical values.
We also aim to understand whether the exponential-tail law actually has a parametrization that is favorable to the power-law mutation, as suggested by our theoretical results.
To this end, we first discuss what runtime behavior we expect, based on our theoretical bounds.
Then we explain our experimental setup and discuss and present our findings.
For the sake of simplicity, we only consider the GSEMO here, as it is the more general algorithm, although benchmark~$f$ is simple enough such that the SEMO would also be sufficient.
We note that when optimizing~$f$, the expected runtime of the GESMO should be worse by a factor of at most~$e$ compared to the SEMO.
Our code is publicly available \cite{DoerrKR24AaaiCodeRepo}.

Similar to our discussion at the start of \emph{Runtime Analysis}, we split the run of the GSEMO into two phases:
The first phase counts the number of function evaluations until the population contains an individual on the Pareto front for the first time.
The second phase counts the remaining number of function evaluations until the Pareto front is covered.

\paragraph{Theoretical considerations.}
In order to compare our theoretical results easily, we assume that the L1-norm of the initial point~$x^{(0)}$ is in the order of the parameter~$a$ of benchmark~$f$.
Furthermore, we assume that the problem size~$n$ is constant, as we consider small values for~$n$ in our experiments, due to the search space being unbounded in any case.
Then the expected runtime for unit-step mutation (Theorem~\ref{thm:runTimeSemoGsemo}) is in the order of~$a^2$.
For exponential-tail mutation with parameter~$q$ (Theorem~\ref{thm:runTimeET}), it is in the order of $a^2 q + a \max\{\frac{\ln(a + 1)}{aq},aq+\ln(a + 1)\}$.
Last, for power-law mutation with power-law exponent~$\beta$ (Theorem~\ref{thm:powerLawRunTimeSemoGsemo}), it is in the order of $a^\beta$.
We see that choosing $q = \frac{1}{a}$ minimizes the maximum expression for the runtime of the exponential-tail mutation, resulting in a runtime bound in the order of $a \ln(a)$.
For this setting, the exponential-tail mutation is fastest and has a quasi-linear runtime.
It is followed by the runtime of power-law mutation, which is a polynomial with degree of~$\beta$, which is between~$1$ and~$2$.
Last, we have the unit-step mutation with a quadratic runtime.

\paragraph{Experimental setup.}
We aim to recreate a setting as described above, placing an emphasis on both phases though, as we do not know how tight our theoretical bounds actually are.
To this end, we choose $n = 2$ and $x^{(0)} = (0, 100 a) \eqqcolon (0, y_0)$.
Furthermore, we choose $\beta = \frac{3}{2}$, which is generally a good choice \cite{DoerrLMN17}.
Our choice for~$n$ is based on all of our results holding for any value of $n \in \N_{\geq 2}$, and a smaller choice of~$n$ lets us run more experiments.
Nonetheless, we note that we consider larger values of~$n$ in the appendix, and the observations are qualitatively the same as they are for $n = 2$, just with an even clearer distinction between the different mutation operators.

We consider two different scenarios:
The first scenario aims to determine a good value~$q$ for the exponential-tail mutation.
To this end, we choose $a = 200$ as well as $\frac{1}{q} \in \{5, 10, 20, 50, 100, 200, 500\}$, which covers a broad, quickly increasing, thus diverse, range of values for~$\frac{1}{q}$.

For each parameter combination of each scenario, we log the number of function evaluations of~$50$ independent runs, always for both phases, even for identical parameter values.

The second scenario observes the runtime behavior of all three mutation operators with respect to~$a$, given a good value for~$q$ determined by scenario~$1$.
We choose $y_0 = 100a$.

\paragraph{First scenario.}
Table~\ref{tab:means+sd_n=2} depicts our results.
For unit step-mutation, we see that it is by far the worst operator for either phase.
For exponential-tail mutation, we see that the choice of~$q$ has an apparently convex impact on the runtime for both phases, with the minimum taken over different values of~$q$.
This conforms mostly with our theoretical insights, where a too small value of~$\frac{1}{q}$ takes needlessly long (like in the case of unit steps), but a too large value of~$\frac{1}{q}$ requires too wait long for actually useful values to appear.
While we expected the choice $\frac{1}{q} = a = 200$ to be best, it is actually $\frac{1}{q} = 50$.
This is mostly due to our theoretical considerations ignoring constant factors and due to the term $\ln(a + 1)$ in the runtime being multiplicative in one case and additive in the other.

More surprisingly, the mean runtime of the power-law mutation for either of the two phases is better than the mean of the exponential-tail distribution for any of our choices of~$1/q$.
This suggests that our runtime estimates are not tight.

\paragraph{Second scenario.}
Given the results from the first scenario, we fix $\frac{1}{q} = \frac{a}{4}$ here.
We now range~$a$ from~$20$ to~$200$ in steps of~$20$.
Our results are depicted in Figure~\ref{fig:a-plot}.

We see that the runtime for exponential-tail mutation is essentially linear, which is what we expected.
However, we also see that the runtime for power-law mutation is roughly linear, which is far better than our theoretical bounds.

One explanation for this discrepancy is that our theoretical analyses always assume that the algorithm's population is maximal, that is, it contains $2a + 1$ individuals.
This assumption seems to be too pessimistic, given preliminary empirical results.
In phase~$1$, if a mutation performs a larger change to an individual, this new individual is likely to strictly dominate multiple solutions in the current population, thus reducing the population size.
This potentially speeds up the first phase.
In phase~$2$, for similar reasons, the initial population can be small and only grows large once almost the entire Pareto front is covered.
Working with smaller populations in between can reduce the runtime of the second phase.

\section{Conclusion}
\label{sec:conclusion}

In this work, we initiated the runtime analysis of multi-objective evolutionary algorithms for unbounded integer spaces.
To this end, we considered variants of the well-known SEMO and GSEMO algorithms.
For each algorithm, we analyzed three different distributions of their mutation strengths.
We derived runtime guarantees for a simple bi-objective problem with Pareto front of size $\Theta(a)$.
Our theoretical results show a complex parameter landscape, depending on the different characteristics of the problem~-- namely, the problem size and~$a$~-- and of the algorithm~-- namely, the mutation strength.

For all reasonable problem parameter choices, the unit-step mutation is the worst update strength, since the progress in each dimension is always bounded by the lowest possible value.
The comparison of the other two mutation strengths is more delicate.
Our theoretical results suggest that the exponential-tail mutation can outperform the power-law mutation if its parameter~$q$ is chosen carefully with respect to the problem parameters.
However, in our experiments the power-law mutation always gave results superior to the exponential-tail mutation using a tuned parameter.
Moreover, our experiments indicate a linear total runtime for both the exponential-tail and the power-law mutation (for certain parameter settings), which is a better runtime behavior for the power-law algorithm than what our upper bounds guarantee.
We speculate this is a consequence of our pessimistic assumption of the algorithms' population size always being maximum.
However, we note that, to the best of our knowledge, this is how essentially any theoretical consideration of the (G)SEMO up to date operates, for example \cite{BianQT18ijcaigeneral,DoerrZ21aaai,DangOSS23aaai}.
Overall, our empirical analysis indicates that either our theoretical guarantees are not tight for the power-law algorithm or that the asymptotic effects are only witnessed for larger problem sizes, noting that our empirical observations seem to hold even more clearly for larger values of~$n$.
In any case, as both our theoretical and empirical results indicate that the power-law mutation has the best expected runtime for a wide range of problem parameters and starting points, and it does not require a careful parameter choice, our general recommendation is to prefer this algorithm for the optimization of problems with unbounded integer variables and no further problem-specific knowledge.

An interesting next step is to analyze whether our theoretical bounds are tight.
We speculate that a more careful study of the algorithms' population dynamics is required.
Theoretical analyses of this level of detail have not been conducted for the (G)SEMO so far.
Hence, more refined analysis techniques than the state of the art seem to be necessary.

Another interesting direction is to prove lower bounds for our considered settings.
Such analyses can shed more light onto certain behavioral aspects of the algorithms, and they require a deeper understanding of the dynamics of the population size.
Thus, they are challenging to derive but can lead to insights that suggest how to improve our upper bounds.

In addition, it would be interesting to analyze other multi-objective evolutionary algorithms, e.g., the very prominent \NSGA \cite{DebPAM02} as well as the NSGA-III \cite{DebJ14}, SPEA2 \cite{ZitzlerLT01}, and the SMS-EMOA \cite{BeumeNE07}.
Moreover, these algorithms have a more complex procedure of updating their population in comparison to that of the (G)SEMO.
This can further prove more challenging.
However, a comparison to other multi-objective algorithms would greatly improve our current theoretical knowledge of the unbounded integer domain.

}

\cleardoublepage
\subsection*{Acknowledgments}
This research benefited from the support of the FMJH Program Gaspard Monge for optimization and operations research and their interactions with data science.

\bibliography{ich_master,alles_ea_master,rest}

\cleardoublepage
\section*{Appendix}
The section titles here share the same names as in the submission.

\subsection*{Useful Properties of the Benchmark Problem}

\begin{proof}[Proof of Lemma~\ref{lem:alwaysComparableOutside}]
  We begin by showing the first claim.
  To this end, note that by the triangle inequality and by $a \geq 0$, we have $|x_1 + a| = |x_1 - a + 2a| \leq |x_1 - a| + 2a$, which is equivalent to $|x_1 + a| - |x_1 - a| \leq 2a$.
  Furthermore, as $y_1 \geq a$ by assumption, it holds that $|y_1 + a| - |y_1 - a| = 2a$.
  Combining these two statements yields $|x_1 + a| - |x_1 - a| \leq |y_1 + a| - |y_1 - a|$.

  Moreover, note that for any $z \in \Z^n$, by the definition of~$f$, it holds that $f_2(z) = f(z)_1 + |z_1 + a| - |z_1 - a|$.
  Using this as well as the assumption $f_1(x) \leq f_1(y)$ and the inequality from the previous paragraph yields that $f_2(x) = f_1(x) + |x_1 + a| - |x_1 - a| \leq f_1(y) + |y_1 + a| - |y_1 - a| = f_2(y)$, proving the first claim.

  For the second claim, note that, by the definition of~$f$, for all $z \in \Z^n$, it holds that $f(-z) = (f_2(z), f_1(z))^\t$.
  Hence, $f_1(-x) \leq f_1(-y)$, and the first claim yields $f(-x) \preceq f(-y)$, that is, $(f_2(x), f_1(x))^\t \preceq (f_2(y), f_1(y))^\t$ and thus $f(x) \preceq f(y)$.
\end{proof}

\begin{proof}[Proof of Lemma~\ref{lem:aussen}]
  By the definition of the algorithms, $P^{(t)}$ only contains incomparable solutions.
  By Lemma~\ref{lem:alwaysComparableOutside}, any two solutions from $\Z_{\ge a}\times\Z^{n-1}$ are comparable, and so are any from $\Z_{\le -a}\times\Z^{n-1}$.
\end{proof}

\begin{proof}[Proof of Lemma~\ref{lem:atMostOneSolutionPerValueInside}]
  For $i \in \{-a, a\}$, the claim follows from Lemma~\ref{lem:aussen}.
  For $i \in [-a + 1 .. a - 1]$ and for all $x, y \in \{i\} \times \Z^{n - 1}$ with $\|x\|_1 \leq \|y\|_1$, it holds that $|x_1 - a| = |y_1 - a| = |i - a|$ as well $|x_1 + a| = |y_1 + a| = |i + a|$ and thus $f_1(x) \leq f_1(y)$ as well as $f_2(x) \leq f_2(y)$.
  Consequently,~$x$ and~$y$ are comparable.
  Since, by the definition of the algorithms, $P^{(t)}$ only contains incomparable solutions, it follows that~$P^{(t)}$ contains at most one solution from $\{x, y\}$.
\end{proof}

\begin{proof}[Proof of Lemma~\ref{lem:popsize}]
  By Lemma~\ref{lem:aussen}, $P^{(t)}$ contains at most one solution with $x_1$-value at most~$-a$ and at most one solution with $x_1$-value at least~$a$.
  Furthermore, by Lemma~\ref{lem:atMostOneSolutionPerValueInside}, $P^{(t)}$ contains for each value $i \in [-a + 1 .. a - 1]$ at most one solution with $x_1$-value~$i$.
  This results in at most $2 + a - 1 - (-a + 1) + 1 = 2a + 1$ solutions.
\end{proof}

\begin{proof}[Proof of Lemma~\ref{lem:dominanceImpliesL1Norm}]
  We prove this statement by contradiction.
  Hence, assume that $\|x\|_1 > \|y\|_1$.
  By the definition of~$f$ and the assumption $f(x) \preceq f(y)$, it follows that
  \begin{align}
    \label{eq:dominanceImpliesL1Norm:xLessThany}
     & |x_1 - a| + \|x\|_1 \leq |y_1 - a| + \|y\|_1 - |y_1| + |x_1| \textrm{ and} \\
    \notag
     & |x_1 + a| + \|x\|_1 \leq |y_1 + a| + \|y\|_1 - |y_1| + |x_1| .
  \end{align}
  Furthermore, by the assumption $\|x\|_1 > \|y\|_1$, it follows that
  \begin{align}
    \label{eq:dominanceImpliesL1Norm:yLessThanx}
     & |x_1 - a| + \|y\|_1 < |x_1 - a| + \|x\|_1 \textrm{ and} \\
    \notag
     & |x_1 + a| + \|y\|_1 < |x_1 + a| + \|x\|_1 .
  \end{align}
  Combining equations~\eqref{eq:dominanceImpliesL1Norm:xLessThany} and~\eqref{eq:dominanceImpliesL1Norm:yLessThanx}, canceling $\|y\|_1$, and rearranging some terms yields
  \begin{align*}
     & |x_1 - a| + |y_1| < |y_1 - a| + |x_1| \textrm{ and} \\
     & |x_1 + a| + |y_1| < |y_1 + a| + |x_1| .
  \end{align*}
  We show that this system of inequalities is infeasible, resulting in the contradiction that concludes the proof.
  Recall that $a \geq 0$.
  First, if $x_1 \leq 0$, then the first inequality simplifies to $|y_1| + a < |y_1 - a|$, which is a contradiction, since, by the triangle inequality, it holds that $|y_1 - a| \leq |y_1| + a$.
  Last, if $x_1 \geq 0$, then the second inequality simplifies to $|y_1| + a < |y_1 + a|$, which is also a contradiction, again by the triangle inequality, concluding the proof.
\end{proof}

\begin{proof}[Proof of Lemma~\ref{lem:noIncreaseInL1Norm}]
  The contraposition of Lemma~\ref{lem:dominanceImpliesL1Norm} implies that $f(y^{(t)}) \npreceq f(z^*)$.
  Thus, by the definition of Algorithm~\ref{algo:GSEMO}, it follows that $z^* \in P^{(t + 1)}$.
\end{proof}

\subsection*{Unit-Step Mutation}

\begin{proof}[Proof of Lemma~\ref{lem:timeToZeroInsideMagicInterval}]
  For both algorithms, we aim to apply the additive drift theorem (Theorem~\ref{thm:additiveDrift}) to the process that considers the smallest L1-distance of an individual in the population to the all-$0$s vector~$O$, that is, we consider $(X_t)_{t \in \N} \coloneqq (\min_{x \in P^{(t)}} \|x\|_1)_{t \in \N}$.
  Note that $X_{T_1} = 0$.
  We consider the natural filtration of $(P^{(t)})_{t \in \N}$.
  Note that~$X$ is adapted to this filtration.
  Furthermore, for the following, let $t \in \N$ and assume that $t < T_1$ is true.

  For either algorithm, by Lemma~\ref{lem:noIncreaseInL1Norm}, $X_t$ cannot increase.
  Hence, it suffices to consider the cases in which~$X_t$ decreases.
  For~$X_t$ to decrease, it is sufficient that the algorithm chooses the individual that is closest to~$O$ and changes exactly one of the values that is not~$0$ yet.
  For both algorithms, by Lemma~\ref{lem:popsize}, choosing the correct individual has a probability of at least $1/(2a + 1)$.
  Still for both algorithms, the probability that the mutation chooses one of the positions that need to be improved and chooses the correct direction for improvement is at least $1/(2n)$.
  For the GSEMO only, we furthermore consider the event that the mutation does not change any of the other probabilities, which occurs with probability at least $(1 - 1/n)^{n - 1} \geq 1/e$.
  All of these probabilities are mutually independent.
  Hence, for the SEMO, we get $\E[X_t - X_{t + 1} \mid P^{(t)}] \geq 1/((2a + 1) \cdot 2n)$, and for the GSEMO $\E[X_t - X_{t + 1} \mid P^{(t)}] \geq 1/((2a + 1) \cdot 2en)$.
  Applying the additive drift theorem, noting that $x^{(0)} \in P^{(0)}$, which implies that $X_0 = \|x^{(0)}\|_1$, concludes the proof.
\end{proof}

\begin{proof}[Proof of Lemma~\ref{lem:timeToCompletingParetoFront}]
  For either algorithm, we aim to apply the additive drift theorem (Theorem~\ref{thm:additiveDrift}) to the random process that measures the difference of the size of the current Pareto front to the size of the maximum-cardinality Pareto front~$F^*$ of~$f$.
  In other words, we consider $(X_t)_{t \in \N} \coloneqq (2a + 1 - |f(P^{(S + t)} \cap F^*|)_{t \in \N}$, noting that $X_0 \leq 2a$, as~$P^{(S)}$ contains at least the all-$0$s vector, which is in~$F^*$.
  Furthermore, note that $X_{T_2} = 0$.
  We consider the filtration that is the smallest possible containing the natural filtration of $(P^{(S + t)})_{t \in \N}$, of~$S$, and of~$x^{(0)}$, noting that~$X$ is adapted to this filtration.
  Last, in the following, let $t \in \N$, and assume that $S + t < T_2$ is true.

  Note that~$X$ cannot increase, as points on the global Pareto front are never removed, due to the definition of the algorithms.

  For either algorithm, since we assume that $X_t \geq 1$, due to our assumption on~$t$, there is at least one point in~$F^*$ that is not in~$P^{(t)}$.
  We consider one such point~$p$ that is closest to a point in $P^{(t)} \cap F^*$.
  Note that such a point~$p$ is in distance~$1$ in the first component, as all other components are~$0$.
  In order for either algorithm to add this point to its current population, the algorithm needs to pick an individual from $P^{(t)} \cap F^*$ closest to~$p$, the independent probability of which is at least $1/(2a + 1)$, due to Lemma~\ref{lem:popsize}.
  Afterward, the mutation needs to choose to change the first position in the correct way.
  For either algorithm, the independent probability to do so is $1/(2n)$.
  Last, only for the GSEMO, no other position must be changed during mutation, which occurs independently with probability $(1 - 1/n)^{n - 1} \geq 1/e$.
  Overall, for the SEMO, we get $\E[X_t - X_{t + 1} \mid P^{(S + t)}, S, x^{(0)}] \geq 1/\bigl((2a + 1) \cdot 2n\bigr)$, and for the GSEMO $\E[X_t - X_{t + 1} \mid P^{(S + t)}, S, x^{(0)}] \geq 1/\bigl((2a + 1) \cdot 2en\bigr)$.
  Applying the additive drift theorem, using the bound on~$X_0$ given at the beginning of the proof, and taking the conditional expected value of the result with respect to~$S$ and~$x^{(0)}$ completes the proof.
\end{proof}

\begin{proof}[Proof of Theorem~\ref{thm:runTimeSemoGsemo}]
  For either algorithm, let~$T_1$ and~$T_2$ be defined as in Lemmas~\ref{lem:timeToZeroInsideMagicInterval} and~\ref{lem:timeToCompletingParetoFront}, respectively, letting~$S$ from Lemma~\ref{lem:timeToCompletingParetoFront} be~$T_2$.
  Then it follows that $T = T_1 + T_2$.
  By linearity of expectation and by the law of total probability, we get $\E[T \mid x^{(0)}] = \E[T_1 \mid x^{(0)}] + \E[\E[T_2 \mid T_2, x^{(0)}] \mid x^{(0)}]$.
  The result then follows by Lemmas~\ref{lem:timeToZeroInsideMagicInterval} and~\ref{lem:timeToCompletingParetoFront}.
\end{proof}

\subsection*{Exponential-Tail Mutation}

Our analysis makes use of the following lemma.

\begin{lemma}
  \label{lem:specialGeometricSeries}
  For $q \in (0, 1)$ and $z \in \N$, it holds that
  \begin{align*}
    \sum_{k=0}^z k\,p^{k-1}        & = \frac{1-p^z\,(1+z\,(1-p))}{(1-p)^2} \textrm{ and}              \\
    \sum_{k = 0}^{z} (k + 1) p^{k} & = \frac{1 - p^{z + 1} \bigl(2 + z(1 - p) - p\bigr)}{(1 - p)^2} .
  \end{align*}
\end{lemma}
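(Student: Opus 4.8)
The plan is to derive both identities from the elementary finite geometric series formula
\begin{equation*}
  \sum_{k=0}^z p^k = \frac{1 - p^{z+1}}{1 - p},
\end{equation*}
valid for every $p \neq 1$ and every $z \in \N$, where $p$ denotes the free variable appearing in the two claimed formulas. Since the left-hand side is a polynomial in $p$ and the right-hand side is a rational function agreeing with it away from $p = 1$, both sides agree as functions on the open interval $(0, 1)$, so I may differentiate the identity with respect to $p$. Differentiating the left-hand side term by term gives $\sum_{k=0}^z k\,p^{k-1}$, which is exactly the first sum in the lemma. Differentiating the right-hand side via the quotient rule and collecting terms over the common denominator $(1-p)^2$ yields $\bigl(1 - (z+1)p^z + z\,p^{z+1}\bigr)/(1-p)^2$. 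It then remains to check algebraically that $1 - p^z\bigl(1 + z(1-p)\bigr) = 1 - (z+1)p^z + z\,p^{z+1}$, which follows by simply expanding $p^z\bigl(1 + z(1-p)\bigr)$; this establishes the first identity.

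For the second identity, I would apply the same differentiation idea to the shifted sum $\sum_{k=0}^z p^{k+1} = p\cdot\frac{1-p^{z+1}}{1-p} = \frac{p - p^{z+2}}{1-p}$, whose derivative in $p$ is $\sum_{k=0}^z (k+1)p^k$, the second sum. Differentiating $\frac{p - p^{z+2}}{1-p}$ and simplifying over $(1-p)^2$ gives numerator $1 - (z+2)p^{z+1} + (z+1)p^{z+2}$, and I would then verify that this matches the stated numerator $1 - p^{z+1}\bigl(2 + z(1-p) - p\bigr)$ by expanding $p^{z+1}\bigl(2 + z - zp - p\bigr)$. Alternatively, the second identity follows from the first without further calculus, by writing $\sum_{k=0}^z (k+1)p^k = p\sum_{k=0}^z k\,p^{k-1} + \sum_{k=0}^z p^k$ and substituting the first identity together with the geometric series, then combining the two fractions over $(1-p)^2$.

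A fully calculus-free alternative is a direct induction on $z$ for each identity: the base case $z = 0$ is immediate, and the inductive step amounts to adding the next term and verifying a single rational-function identity in $p$; this route may be preferable if one wishes to avoid any justification of term-by-term differentiation. In either approach there is no genuine conceptual obstacle — the lemma is a standard pair of finite-sum evaluations — so the only point requiring care is the bookkeeping in the algebraic simplifications, namely keeping track of the powers $p^z$, $p^{z+1}$, and $p^{z+2}$ when clearing the denominator $(1-p)^2$ and matching the collected numerators to the compact forms stated in the lemma.
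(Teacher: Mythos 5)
Your proposal is correct and follows essentially the same route as the paper: differentiating the finite geometric series $\sum_{k=0}^z p^k = \frac{1-p^{z+1}}{1-p}$ for the first identity, and obtaining the second via $\sum_{k=0}^z (k+1)p^k = p\sum_{k=0}^z k\,p^{k-1} + \sum_{k=0}^z p^k$ (the decomposition you list as an alternative is exactly the paper's argument, and your shifted-sum differentiation is a trivially equivalent variant). The algebra in both of your routes checks out, so nothing further is needed.
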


\begin{proof}[Proof of Lemma~\ref{lem:specialGeometricSeries}]
  As well known, $\sum_{k=0}^z p^k = \frac{1-p^{z+1}}{1-p}$. Differentiation w.r.t.\ $p$ on both sides leads to
  \begin{align*}
    \sum_{k=0}^z k\,p^{k-1} & = \frac{1-p^{z+1}-(z+1)\,p^z\,(1-p)}{(1-p)^2} \\
                            & = \frac{1-p^z\,(1+z\,(1-p))}{(1-p)^2},
  \end{align*}
  proving the first equation.

  For the second equation, we use the first one and get
  \begin{align*}
     & \sum_{k = 0}^{z} (k + 1) p^{k}
    = p \sum_{k = 0}^{z} k p^{k - 1} + \sum_{k = 0}^{z} p^k                                              \\
     & = p \frac{1 - p^z \bigl(1 + z (1-p)\bigr)}{(1 - p)^2} + \frac{1 - p^{z + 1}}{1 - p}               \\
     & = \frac{p \bigl(1 - p^z \bigl(1 + z (1-p)\bigr) - 1 + p^{z + 1}\bigr) + 1 - p^{z + 1}}{(1 - p)^2} \\
     & = \frac{1 - p^{z + 1} \bigl(2 + z(1 - p) - p\bigr)}{(1 - p)^2} ,
  \end{align*}
  proving the second equation.
\end{proof}

\begin{proof}[Proof of Lemma~\ref{lem:ezz}]
  Let $p=1-q$ temporarily to ease notation.
  Using the first equation from Lemma~\ref{lem:specialGeometricSeries} and multiplying with $p\,(1-p)/(1+p)$ yields
  \begin{align*}
    E[Z_z] & = \sum_{k=0}^z k\,p_k = \frac{1-p}{1+p}\,\sum_{k=0}^z k\,p^k \\
           & = \frac{p\,(1-p^z\,(1+z\,(1-p)))}{(1-p)\,(1+p)}.
  \end{align*}
  Replacing $p=1-q$ finally delivers the desired result for $E[Z_z]$.

  As for the bounds, first note that $\frac{1-q}{2-q}$ is monotonically decreasing on $q\in[0,1]$ so that $\frac{1-q}{2-q}\ge\frac{1 - C}{2 - C}$ for $q\le C$. Using the (strong) Weierstrass product inequality we get $(1-q)^z\le 1-zq + \frac{1}{2}\,(zq)^2$ and finally
  \begin{align*}
    1-(1-q)^z\,(1+zq) & \ge 1-(1-zq + \tfrac{1}{2}(zq)^2)\,(1+zq) \\
                      & = \tfrac{1}{2}(zq)^2\,(1-zq).
  \end{align*}
  Assume $\min\{z^2\,q, \tfrac{1}{4q}\} = z^2\,q$ so that  $z^2\,q \le\frac{1}{4q} \Leftrightarrow 1-zq\ge\frac{1}{2}$. Insertion of all bounds yields
  \begin{align*}
    E[Z_z] & = \frac{1-q}{q(2-q)} (1 - (1-q)^z (1+zq))                                     \\
           & \ge \frac{1}{q}\cdot\frac{1 - C}{2 - C}\cdot\frac{1}{2}(zq)^2\cdot\frac{1}{2} \\
           & = \frac{1}{4}\cdot\frac{1 - C}{2 - C}\,z^2\,q
  \end{align*}
  with $K= \frac{1}{4}\cdot\frac{1 - C}{2 - C}$.

  Now assume $\min\{z^2\,q, \tfrac{1}{4q}\} = \tfrac{1}{4q}$, so that $z\ge\frac{1}{2\,q}$. In order to get the bound $\ge K\,\frac{1}{4\,q} \geq K\,\frac{1}{4\,C}$ we must show that
  $b(z,q) \coloneqq 1-(1-q)^z\,(1+q\,z)$ is bounded from below by a constant independent from $q$ and $z$. Note that, regardless how we choose $q\le C$, the value of $z$ must be at least $\frac{1}{2\,q}$. If we fix $q$ and increase $z$ then $b(z+1,q) \ge b(z,q)$ since $b(z+1,q)-b(z,q) = (1-q)^z\cdot q^2\,(1+z) > 0$. Thus, with $z=\frac{1}{2\,q}$ we choose the lowest admissible value for $z$ and hence also the smallest value of $b(z,q)$, namely $b(\frac{1}{2\,q},q) = 1-\frac{3}{2}\,(1-q)^{1/(2\,q)}$, which is monotonically increasing in $q$ as can be seen from the derivative. As a consequence, we get the smallest value for the limit $q\to 0$: $\lim_{q\to 0} b(1/(2\,q),q) = 1-\frac{3}{2}\cdot\frac{1}{\sqrt{e}} \ge 0.09 = \frac{9}{100}$. Putting all together, we obtain
  \begin{align*}
    E[Z_z] \ge \frac{1}{q}\cdot\frac{1 - C}{2 - C}\cdot\frac{9}{100},
  \end{align*}
  where the right-hand side is constant.
\end{proof}

\begin{proof}[Proof of Lemma~\ref{lem:timeToZeroET}]
  For any population $P$, let $d(P) = \min\{\|x\|_1 \mid x \in P\}$. For any iteration~$t$, let $P_t$ be the population of the SEMO or GSEMO at the start of some iteration~$t$. In the following, we analyze the effect of one iteration. Hence fix any time $t$ and any outcome of the population $P_t$ (hence we condition on $P_t$ in the following).

  We estimate the expected progress (the \emph{drift}) $\E[d(P_t) - d(P_{t+1})]$. Let $x \in P_t$ with $\|x\|_1$ minimal. By Lemma~\ref{lem:noIncreaseInL1Norm} we know that if $x \notin P_{t+1}$, then in the $t$-th iteration an individual $y$ was generated with $\|y\|_1 \le \|x\|_1$. Consequently, we have $d(P_{t+1}) \le d(P_t)$ with probability one. By Lemma~\ref{lem:popsize}, we have $|P_t| \le 2a+1$. Consequently, with probability at least $\frac{1}{2a+1}$, the parent chosen in this iteration equals~$x$. Conditional also on this event, we estimate the drift by regarding separately two cases. To this aim, let $K$ be the constant which exists according to Lemma~\ref{lem:ezz}.

  Case 1: If there is an $i \in [1..n]$ such that $|x_i| > \frac{1}{2q}$, then with probability at least $\frac 1n$ (SEMO) or $\frac 1n (1-\frac 1n)^{n-1}\ge \frac 1 {en}$ (GSEMO) exactly the $i$-th component of $x$ is changed in the mutation. If this happens, by Lemma~\ref{lem:ezz}, the resulting offspring $y$ satisfies $\E[\|y\|_1] \le \|x\|_1 - K \frac{1}{4q}$.
  Consequently, in this first case we have $\E[\|y\|_1] \le \|x\|_1 - \frac{K}{en}\frac{1}{4q}$  regardless of whether the SEMO or GSEMO is used.

  Case 2:
  If $|x_i| \le \frac{1}{2q}$ for all $i \in [1..n]$, then we argue as follows. We only regard the progress made from mutating a single position. If this position is $i \in [1..n]$, then by Lemma~\ref{lem:ezz} again, the expected progress $\|y\|_1 - \|x\|_1$ is at least $K |x_i|^2 q$. The probability that exactly the $i$-th position is mutated, is again  at least $\frac 1n$ (SEMO) or $\frac 1n (1-\frac 1n)^{n-1}\ge \frac 1 {en}$ (GSEMO). Consequently, we have $E[\|y\|_1] \le \|x\|_1 - \frac{Kq}{en} \sum_{i=1}^n |x_i|^2$. By the classic inequality relating the arithmetic and the quadratic mean (a special case of the Cauchy--Schwarz inequality), we have $\frac{Kq}{en} \sum_{i=1}^n |x_i|^2 \ge \frac{Kq}{en^2} (\|x\|_1)^2$.

  Putting all together, including the probability of selecting $x$ as parent, we have
  \begin{align*}
    \E[d(P_t) - d(P_{t+1})] & \ge \frac{K}{(2a+1)en} \min\left\{\frac{1}{4q}, (\|x\|_1)^2 \frac{q}{n}\right\} \\
                            & \coloneqq h(P_t).
  \end{align*}
  Since $T_1$ is the first time that $d(P_t) = 0$, we can apply the variable drift theorem (Theorem~\ref{thm:vardrift}) to the process $d(P_t)$ and obtain
  \begin{align*}
    E[T_1] & = \E[\min\{t \mid d(P_t) = 0\}]                                                                                                                     \\
           & \le \sum_{i=1}^{\|x^{(0)}\|_1} \frac{1}{h(i)}                                                                                                       \\
           & =\frac{(2a+1)en}{K}\left(\sum_{i=1}^{\lfloor\sqrt{n}/(2q)\rfloor} \frac{n}{qi^2} + \sum_{i=\lfloor\sqrt{n}/(2q)\rfloor+1}^{\|x^{(0)}\|_1} 4q\right) \\
           & \le \frac{(2a+1)en}{K}\left(\frac{n \pi^2}{6q} + (\|x^{(0)}\|_1 - \lfloor\sqrt{n}/(2q)\rfloor) \cdot 4q\right)                                      \\
           & \le \frac{(2a+1)en}{K}\left(\frac{n \pi^2}{6q} + 4\|x^{(0)}\|_1 q\right),
  \end{align*}
  where we used that $\sum_{i=1}^\infty \frac 1 {i^2} = \frac{\pi^2}{6}$.
\end{proof}

\begin{proof}[Proof of Lemma~\ref{lem:timeToFinishET}]
  Let $Q = \min\{\lfloor \frac 1q \rfloor,a\}$. Let $m = \lceil \frac aQ \rceil$. For $i \in [1..m]$, let $A_i = [(i-1)Q+1..iQ] \cap [1..a]$ and $A_{-i} = -A_i$. Let $A_0 = \{0\}$. Then $\bigcup_{i=-m}^m A_i = [-a..a]$. For $i \in [-m..m]$, let $\overline A_i = \{(x_1, 0, \dots, 0) \in \R^n \mid x_1 \in A_i\}$. Then $f(\bigcup_{i=-m}^m \overline A_i) = F^*$.

  Our proof strategy is to first analyze the time until for each $i \in [-m..m]$ there is at least one $x \in P^{(t)}$ such that $x \in \overline A_i$ and then analyze the time until all $\overline A_i$ are contained in the population. We note that all elements of the $\overline A_i$ are Pareto optima, and the unique ones with their objective value, so once such a solution is contained in the population of the (G)SEMO, it will stay there forever.

  For the first part, consider $i, j \in [-m..m]$ and a time $t\ge t_2$ such that $|i-j|=1$, $P^{(t)} \cap \overline A_i \neq \emptyset$, and $P^{(t)} \cap \overline A_j = \emptyset$. Note that $j \neq 0$. We estimate the probability that $P^{(t+1)} \cap \overline A_j \neq \emptyset$. Let $x \in P^{(t)} \cap \overline A_i$. By symmetry, we can assume, without loss of generality, that $j = i+1$. By construction, there is a $k \in [1..Q]$, determined by the position of $x$ in $\overline A_i$, such that $A_j = [x_1+k..x_1+Q-1]$. Consequently, if $x$ is selected as parent for the mutation operation, if only the first component of $x$ is modified, and this by increasing it by between $k$ and $k+Q-1$, then the offspring $y$ is contained in $\overline A_j$. By Lemma~\ref{lem:popsize}, the probability of this event is at least $\frac{1}{2a+1} \frac 1{en} Q L$, where $L$ is such that a bilateral geometric random variable $Z$ with parameter $q$ satisfies $\Pr[Z=i] \ge L$ for all $i \in [k..k+Q-1]$. Recalling that $q \le c$, we note that $\Pr[Z=i] = \frac{q}{2-q}(1-q)^i \ge \frac{q}{2-q}(1-q)^{2Q-1} \ge \frac{q(1-q)}{2-q}((1-q)^{(1/q)-1})^2 \ge \frac{q(1-q)}{2-q}e^{-2} \ge q(1 - c)/((2 - c)e^2)$ for all $i$ considered, so $L = q(1 - c)/((2 - c)e^2)$ is a feasible choice (note that here we used that $(1-q)^{(1/q) - 1} \ge \frac 1e$ for all $q \in (0,1)$).
  Consequently, the expected time until $P^{(t+1)} \cap \overline A_j \neq \emptyset$ is the reciprocal of this probability, that is, $\frac{2 - c}{1 - c} (2a+1) e^3 n \frac{1}{Qq}$.
  Since we start with $P^{(t_2)} \cap \overline A_0 \neq \emptyset$, using the above argument $2m$ times (for suitable values of $i$ and $j$), we see that the expected $t'$ to have $P^{(t_2+t')} \cap \overline A_i \neq \emptyset$ for all $i \in [-m..m]$ is $2m \frac{2 - c}{1 - c} (2a+1) e^3 n \frac1{Qq}$.

  We now analyze how the population fills up the blocks $\overline A_i$, $i \in [-m..m]$. Note that we can assume that $Q \ge 2$ as otherwise we are done already. We first regard an arbitrary fixed block $\overline A_i$, $i \neq 0$. Assume that at some time $t$, this block contains at least one individual from $P^{(t)}$. Fixing such an initial situation, we analyze the time $T'_i$ until the population $P^{(t+T'_i)}$ contains $\overline A_i$. Let $x$ be some individual in the current population and in $\overline A_i$ and let $y$ be an element of $\overline A_i$ not contained in the current population. For both the SEMO and the GSEMO, the probability that a mutation operation with $x$ as parent mutates $x$ into $y$ is at least $\frac{1}{en} \Pr[Z = y_1-x_1] = \frac{1}{en} \frac{q}{2-q} (1-q)^{|y_1-x_1|} \ge \frac{1}{en} \frac{q}{2-q} (1-q)^{(1/q)-1} \ge \frac{1}{2 e^2 n} q$; here $Z$ denotes a bilateral geometric random variable with parameter $q$ and the last estimate uses again that $(1-q)^{(1/q) - 1} \ge \frac 1e$ for all $q \in (0,1)$.
  As this bound is independent of $x$ and $y$, a simple union bound over the $k$ choices of $x$ and the $Q-k$ choices of $y$, with the lower bound of $\frac{1}{2a+1}$ for picking a particular $x$ as parent, yields that with probability at least
  \[
    p_k \coloneqq k(Q-k) \frac{1}{2a+1} \frac{1}{2e^2 n} q,
  \]
  this iteration increases the number of individuals in $\overline A_i$ from $k$ to $k+1$. This estimate is valid for any state of the population as long as there are exactly $k$ individuals in $\overline A_i$. Consequently, the time to go from $k$ to $k+1$ individuals is stochastically dominated (see, e.g., \cite{Doerr19tcs} for some background on stochastic domination) by a geometric random variable with success rate $p_k$, and the time $T'_i$ to go from our arbitrary initial state with at least one individual in $\overline A_i$ to a state with $\overline A_i$ fully contained in the population is stochastically dominated by the sum of independent geometric random variables $X_k$ with success rates $p_k$, $k = 1, \dots, Q-1$, that is, $T'_i \preceq \sum_{k=1}^{Q-1} X_k$.

  We note that for $k \in [1..\lfloor Q/2 \rfloor]$, we have $p_k \ge kQ \frac{1}{2a+1} \frac{1}{4e^2 n} q \eqqcolon p'_k$, whereas for $k \ge Q/2$, we have $p_k \ge Q(Q-k) \frac{1}{2a+1} \frac{1}{4e^2 n} q = p'_{Q-k}$. Let $Y^{(1)}, Y^{(2)}$ be two independent random variables, each being the independent sum of $Q$ geometric random variables with success rates $p'_1, \dots, p'_Q$. Then $T'_i \preceq Y^{(1)} + Y^{(2)}$. We note that the $Y^{(j)}, j = 1,2$, fulfill the assumptions of the Chernoff bound for geometric random variables proven in~\cite[Lemma~4]{DoerrD18} (which is Theorem~1.10.35 in~\cite{Doerr20bookchapter}). Consequently,
  \[\Pr\left[Y^{(j)} \ge (1+\delta) \frac{(2a+1) 4e^2 n}{Q^2 q} Q \ln Q \right] \le Q^{-\delta}
  \]
  for all $j = 1,2$ and $\delta > 0$.
  Using a simple union bound, we derive
  \[
    \Pr\left[T'_i \ge (1+\delta) 8 (2a+1) e^2 n \frac{\ln Q}{Qq}\right] \le 2 Q^{-\delta}.
  \]

  Let $\delta =  \log_Q(8m) = \ln(8m)/\ln(Q)$. Then a union bound over the $T'_i, i \in [-m..m] \setminus \{0\}$ shows that with probability at least $\frac 12$, all $\overline A_i$ are contained in the population after $(1+\delta) 8 (2a+1) e^2 n \frac{\ln Q}{Qq}$ time steps. Repeating this argument, we see that after an expected number of $(1+\delta) 16 (2a+1) e^2 n \frac{\ln Q}{Qq}$ iterations, all $\overline A_i$ are contained in the population, that is, the population contains the Pareto set.

  Putting the two phases together, we see that
  \begin{align*}
    \E[T_2] & \le 2m \frac{2 - c}{1 - c} (2a+1) e^3 n \frac1{Qq}                         \\
            & \qquad + (1+\delta) 16 (2a+1) e^2 n \frac{\ln Q}{Qq}                       \\
            & = C\left(\frac{(2a+1)n}{Qq} (m+ \log(Q))\right)                            \\
            & = C\left(an \max\left\{\frac{\ln(a + 1)}{aq},aq+\ln(a + 1)\right\}\right),
  \end{align*}
  where the last estimate follows from noting, via a case distinction, that $\frac{m+ \log(Q)}{Qq} \leq C(\max\{\frac{\ln(a + 1)}{aq},aq+\ln(a + 1)\})$.
\end{proof}

\begin{proof}[Proof of Theorem~\ref{thm:runTimeET}]
  For either algorithm, let~$T_1$ and~$T_2$ be defined as in Lemmas~\ref{lem:timeToZeroET} and~\ref{lem:timeToFinishET}, respectively, letting~$t_2$ from Lemma~\ref{lem:timeToFinishET} be~$T_1$.
  Then it follows that $T = T_1 + T_2$.
  By linearity of expectation and by the law of total probability, we see that $\E[T \mid x^{(0)}] = \E[T_1 \mid x^{(0)}] + \E[\E[T_2 \mid t_2, x^{(0)}] \mid x^{(0)}]$.
  The result then follows by Lemmas~\ref{lem:timeToZeroET} and~\ref{lem:timeToFinishET}.
\end{proof}

\subsection*{Power-Law Mutation}

\begin{proof}[Proof of Lemma~\ref{lem:powerLawTimeToZeroInsideMagicInterval}]
  We consider the smallest distance of the population to $(0, \dots, 0)^\t$, that is, we consider $(X_t)_{t \in \N}$ where, for all $t \in \N$, we have $X_t = \min_{z \in P^{(t)}} \|z\|_1$.
  Note that~$T_1$ is the first point in time such that $X_{T_1} = 0$.
  We aim to apply the variable drift theorem (Theorem~\ref{thm:vardrift}) to~$X$.

  To this end, consider an iteration $t \in \N$ such that $t < T_1$.
  By Lemma~\ref{lem:noIncreaseInL1Norm}, the minimum L1-norm in the population cannot increase.
  Hence, using the notation of Algorithm~\ref{algo:GSEMO}, we consider the case that we choose~$x^{(t)}$ such that $\|x^{(t)}\|_1 = X_t$ and that the offspring $y^{(t)}$ is such that $\|y^{(t)}\|_1 < \|x^{(t)}\|_1$.
  Let~$A$ denote the event that we choose such an $x^{(t)}$.
  By Lemma~\ref{lem:popsize}, the probability of~$A$ is at least $1/(2a + 1)$.

  We first consider the drift in component $i \in [n]$ of~$x^{(t)}$.
  SEMO chooses to mutate (only) this position independently with probability at least~$1/n$, and GSEMO does so with probability at least $(1/n)(1 - 1/n)^{n - 1} \geq 1/(en)$.
  Afterward, with probability~$1/2$, the change by the mutation has the correct sign such that~$x^{(t)}_i$ moves toward~$0$.
  Any change by $k \in [|x^{(t)}_i|]$ results in an improvement by~$k$ in component~$i$.
  Applying Theorem~\ref{thm:sumsToIntegrals}, the expected change in component~$i$ is then at least
  \begin{align}
    \label{eq:powerLawTimeToZeroInsideMagicInterval:driftInComponent}
    \frac{1}{2en} \frac{1}{\zeta(\beta)} \sum_{k = 1}^{|x^{(t)}_i|} k \cdot k^{-\beta}
     & \geq \frac{1}{2en} \frac{1}{\zeta(\beta)} \int_{1}^{|x^{(t)}_i| + 1} k^{1 - \beta} \d k      \\
    \notag
     & = \frac{1}{2en} \frac{1}{\zeta(\beta)} \frac{(|x^{(t)}_i| + 1)^{2 - \beta} - 1}{2 - \beta} .
  \end{align}
  Since the expected progress in the L1-norm is the sum of the progresses over the components, we obtain
  \begin{align*}
     & \E[X_t - X_{t + 1} \mid X_t; A]                                                                                                      \\
     & \qquad\geq \frac{1}{2en} \frac{1}{\zeta(\beta)} \frac{1}{2 - \beta} \sum_{i \in [n]} \bigl((|x^{(t)}_i| + 1)^{2 - \beta} - 1\bigr) .
  \end{align*}
  We derive a lower bound for the right-hand side.
  Since $2 - \beta \in (0, 1)$, the function $z \mapsto z^{2 - \beta} - 1$ is concave, and since we consider a sum of such concave functions, the sum is also concave.
  A concave function takes its minimum at the border of a bounded set.
  Hence, this expression is minimized if a single component has all the mass, that is, if there is an $i \in [n]$ such that $|x^{(t)}_i| = \|x^{(t)}\|_1$.
  Thus,
  \begin{align*}
     & \E[X_t - X_{t + 1} \mid X_t; A]                                                                                       \\
     & \qquad\geq \frac{1}{2en} \frac{1}{\zeta(\beta)} \frac{1}{2 - \beta} \bigl((\|x^{(t)}\|_1 + 1)^{2 - \beta} - 1\bigr) .
  \end{align*}
  We further bound this expression from below by a case distinction.

  If $\|x^{(t)}\|_1 \geq 2^{1/(2 - \beta)} - 1$, then $(\|x^{(t)}\|_1 + 1)^{2 - \beta} \geq 2$, and thus $(\|x^{(t)}\|_1 + 1)^{2 - \beta} - 1 \geq (1/2)(\|x^{(t)}\|_1 + 1)^{2 - \beta}$.
  As $X_t = \|x^{(t)}\|_1$ and $X_t + 1 \geq X_t$, this results in
  \begin{align*}
    \E[X_t - X_{t + 1} \mid X_t; A] \geq \frac{1}{2en} \frac{1}{\zeta(\beta)} \frac{1}{2 - \beta} \frac{1}{2} X_t^{2 - \beta} .
  \end{align*}

  For $X_t \in [1, 2^{1/(2 - \beta)} - 1) \cap \N$, we consider only the sum in equation~\eqref{eq:powerLawTimeToZeroInsideMagicInterval:driftInComponent} for $k = 1$, resulting in
  \begin{align*}
    \E[X_t - X_{t + 1} \mid X_t; A] \geq \frac{1}{2en} \frac{1}{\zeta(\beta)} .
  \end{align*}

  Since the drift is monotonically non-decreasing in~$X_t$, we apply the variable drift theorem (Theorem~\ref{thm:vardrift}).
  To this end, we apply a case distinction with respect to whether $X_t \geq 2^{1/(2 - \beta)} - 1 \eqqcolon \ell$, we account for the probability of~$A$ derived at the beginning of the proof, and we apply Theorem~\ref{thm:sumsToIntegrals}.
  We obtain
  \begin{align*}
     & \E[T_1 \mid x^{(0)}]                                                                                                                    \\
     & \leq (2a + 1)\sum_{k \in [\lceil\ell\rceil - 1]} 2en\zeta(\beta)                                                                        \\
     & \qquad+ (2a + 1)\sum_{k = \lceil\ell\rceil}^{X_0} 4en\zeta(\beta)(2 - \beta) k^{-(2 - \beta)}                                           \\
     & \leq (2a + 1) \cdot 2en\zeta(\beta) \left(\ell + 2(2 - \beta) \int_{\lceil\ell\rceil - 1}^{X_0} k^{-(2 - \beta)} \d k\right)            \\
     & \leq (2a + 1) 2en\zeta(\beta) \left(\ell + 2(2 - \beta) \frac{X_0^{\beta - 1} - (\lceil\ell\rceil - 1)^{\beta - 1}}{\beta - 1}\right) .
  \end{align*}
  Removing the term $-(\lceil\ell\rceil - 1)^{\beta - 1}$ and noting that $X_0 = \|x^{(0)}\|_1$ concludes the proof.
\end{proof}

\begin{proof}[Proof of Lemma~\ref{lem:powerLawTimeToCompletingParetoFront}]
  If $a = 0$, then $T_2 = 0$.
  Hence, we assume in the following that $a \in \N_{\geq 1}$.
  We consider several steps until at least a half of~$F^*$ is covered.
  Afterward, we consider one more stage such that~$F^*$ is fully covered.

  For step $k \in \N$ with $k \leq \lfloor \log_2(2a + 1) \rfloor - 1$, consider a partition of $[-a .. a]$ into $2^{k + 1}$ consecutive intervals of roughly equal size and such that at least $2^k$ of them contain at least one individual in the current population (called \emph{hit}).
  Formally, let $\ell(k) \coloneqq (2a + 1) / 2^{k + 1}$, noting that~$\ell(k)$ is never integer and that $\ell(k) > 1$.
  We consider the partitioning of $[-a .. a]$ into $\bigcup_{i \in [0 .. 2^{k + 1} - 1]} \{[-a + \lceil i \ell(k) \rceil .. -a + \lfloor (i + 1) \ell(k) \rfloor] \cap [-a .. a]\}$.
  Step~$k$ ends once all $2^{k + 1}$ intervals are hit.
  Note that due to the assumption about~$S$, the constraints for step~$1$ are satisfied in iteration~$S$.

  Consider step $k \in \N$.
  Each interval has a size of at most $\lceil \ell(k) \rceil$.
  Since we halve the intervals from the previous step (or the entire interval if $k = 0$), each interval that is not hit yet is neighboring at least one interval that is already hit.
  Hence, in the worst case, in order to create any individual in an unhit interval, a distance of at least $\lceil \ell(k) \rceil$ needs to be overcome from the neighboring interval that contains an individual.
  Each point in an unhit interval (of which there are at least $\lfloor \ell(k) \rfloor$) is a possible target in order to hit the interval.

  The previous discussion implies that in order to hit an unhit interval~$I$, the following sequence of events is sufficient.
  First, choose a parent from a hit neighbor, which has, by Lemma~\ref{lem:popsize}, a probability of at least $1/(2a + 1)$, and then mutate the first component of the parent, which happens for the SEMO with probability at least~$1/n$ and for the GSEMO with probability at least $(1/n)(1 - 1/n)^{n - 1} \geq 1/(en)$.
  Afterward, the mutation needs to adjust the first component into the correct direction, which happens with probability~$1/2$.
  Since all of these decisions are independent, this results in a probability of at least $(1/(2a + 1))(1/(2en))$ for choosing a specific parent and mutating it into a direction such that it can hit~$I$.
  We call this chain of events~$A$.
  Such an individual can be mutated in at least $\lfloor \ell(k) \rfloor$ ways in its first component, needing to hit a distance of at least $\lceil \ell(k) \rceil$.
  Conditional on~$A$, we bound the probability that the mutation results in an offspring that hits~$I$ from below.
  To this end, we use Theorem~\ref{thm:sumsToIntegrals}, that $\lfloor \ell(k) \rfloor = \lceil \ell(k) \rceil - 1$, as~$\ell(k)$ is not integer, and that $2\lceil \ell(k) \rceil - 1 \geq \frac{3}{2}\lceil \ell(k) \rceil$, as $\ell(k) > 1$.
  We obtain
  \begin{align*}
     & \sum_{j \in [0 .. \lfloor \ell(k) \rfloor - 1]} \frac{1}{\zeta(\beta)} \left(\lceil \ell(k) \rceil + j\right)^{-\beta}                            \\
     & \quad\geq \frac{1}{\zeta(\beta)} \int_{0}^{\lfloor \ell(k) \rfloor} \left(\lceil \ell(k) \rceil + j\right)^{-\beta} \d j                          \\
     & \quad= \frac{1}{\zeta(\beta)} \frac{\lceil \ell(k) \rceil^{1 - \beta} - (\lceil \ell(k) \rceil + \lfloor \ell(k) \rfloor)^{1 - \beta}}{\beta - 1} \\
     & \quad= \frac{1}{\zeta(\beta)} \frac{\lceil \ell(k) \rceil^{1 - \beta} - (2\lceil \ell(k) \rceil - 1)^{1 - \beta}}{\beta - 1}                      \\
     & \quad\geq \frac{1}{\zeta(\beta)} \frac{\lceil \ell(k) \rceil^{1 - \beta} - (\frac{3}{2}\lceil \ell(k) \rceil)^{1 - \beta}}{\beta - 1}             \\
     & \quad\geq \frac{1}{\zeta(\beta)} \frac{1}{\beta - 1} \left(1 - \left(\frac{3}{2}\right)^{1 - \beta}\right) 2^{1 - \beta} \ell(k)^{1 - \beta} .
  \end{align*}
  Overall, accounting for the probability of~$A$, the probability that a mutation covers~$I$ is at least
  \begin{align*}
    \frac{1}{2a + 1} \frac{1}{2en} \frac{1}{\zeta(\beta)} \frac{1}{\beta - 1} \left(1 - \left(\frac{3}{2}\right)^{1 - \beta}\right) 2^{1 - \beta} \ell(k)^{1 - \beta} \\
    \hfill\eqqcolon q(k) .
  \end{align*}

  The probability that interval~$I$ is not covered within $s(k) \coloneqq \ln(2^{k + 1})/q(k)$ iterations is at most $(1 - q(k))^{s(k)} \leq \exp\bigl(-s(k)q(k)\bigr) \leq 2^{-(k + 1)}$.
  Hence, the probability to cover all~$2^k$ uncovered intervals within~$s$ iterations is, by Bernoulli's inequality, at least $(1 - 2^{-(k + 1)})^{2^{k}} \geq 1/2$.
  By a restart argument, step~$k$ lasts in expectation at most $2s(k)$ iterations.

  Summing over all steps from~$0$ to $\lfloor \log_2(2a + 1) \rfloor - 1$, we get that the total expected length of all steps.
  To this end, we use the notation $q' \coloneqq q(k) / \ell(k)^{1 - \beta}$ as well as Lemma~\ref{lem:specialGeometricSeries}.
  We an upper bound of
  \begin{align*}
     & 2 \cdot \sum_{k = 0}^{\lfloor \log_2(2a + 1) \rfloor - 1} s(k)                                                            \\
     & \quad=2 \cdot \sum_{k = 0}^{\lfloor \log_2(2a + 1) \rfloor - 1} \frac{\ln(2^{k + 1})}{q(k)}                               \\
     & \quad\leq \frac{2 \ln(2)}{q'} \cdot \sum_{k = 0}^{\lfloor \log_2(2a + 1) \rfloor - 1} \frac{k + 1}{\ell(k)^{1 - \beta}}   \\
     & \quad\leq \frac{2 \ln(2)}{q'} (2a + 1)^{\beta - 1} \cdot \sum_{k = 0}^{\infty} (k + 1) \left(2^{1 - \beta}\right)^{k + 1} \\
     & \quad= \frac{2 \ln(2)}{q'} (2a + 1)^{\beta - 1} \cdot \frac{2^{1 - \beta}}{\left(1 - 2^{1 - \beta}\right)^2} .
  \end{align*}
  Afterward, at least half of~$F^*$ is covered.

  We continue with the last stage.
  Let $i \in [1, |F^*|/2] \cap \N \subseteq [a + 1]$ denote the number of solutions from~$F^*$ that are still not in the population.
  Note that by the previous sequence of steps, each such solution not in the population has at least one neighbor in the population in distance~$1$.
  Hence, the probability to add a new solution to the population is at least $i/((2a + 1) 2en \zeta(\beta))$.
  The expected time until a new solution is added is the multiplicative inverse of this probability.
  Hence, the total expected time of the last stage is bounded from above by
  \begin{align*}
     & (2a + 1) \cdot 2en \zeta(\beta) \cdot \sum_{i \in [a + 1]} \frac{1}{i}  \\
     & \qquad\leq (2a + 1) \cdot 2en \zeta(\beta) \bigl(\ln(a + 1) + 1\bigr) .
  \end{align*}
  The proof is concluded by noting that $\E[T_2 \mid S, x^{(0)}]$ is bounded from above by the estimate of the two sums above.
\end{proof}

\begin{proof}[Proof of Theorem~\ref{thm:powerLawRunTimeSemoGsemo}]
  For either algorithm, let~$T_1$ and~$T_2$ be defined as in Lemmas~\ref{lem:powerLawTimeToZeroInsideMagicInterval} and~\ref{lem:powerLawTimeToCompletingParetoFront}, respectively, letting~$S$ from Lemma~\ref{lem:powerLawTimeToCompletingParetoFront} be~$T_1$.
  Then it follows that $T = T_1 + T_2$.
  By linearity of expectation and by the law of total probability, we get $\E[T \mid x^{(0)}] = \E[T_1 \mid x^{(0)}] + \E[\E[T_2 \mid T_1, x^{(0)}] \mid x^{(0)}]$.
  The result then follows by Lemmas~\ref{lem:powerLawTimeToZeroInsideMagicInterval} and~\ref{lem:powerLawTimeToCompletingParetoFront}.
\end{proof}

\section*{Empirical Analysis}

We follow the same experimental setup as in the main paper but for the values of $n \in \{4, 10\}$.
For the initial solution, we choose $x^{(0)}_2 = 100a$ and all of the other components of~$x^{(0)}$ as~$0$.

\paragraph{First scenario.}
Our results are depicted in Table~\ref{tab:means+sd_n=4_10}.
The results are qualitatively comparable to those in Table~\ref{tab:means+sd_n=2}.
That is, the step size of $1/q \in \{20, 50\}$ is best in terms of total runtime for the exponential-tail mutation.
For $n = 10$, the relative difference between $1/q = 20$ and $1/q = 50$ is very small, which is why we choose for consistency reasons with $n \in \{2, 4\}$ also $1/q = a/4$ for $n = 10$ for the second scenario.

The runtime of the power-law mutation for either of the two phases we consider is better than any of those of the exponential-tail mutation.
Unit-step mutation performs by far the worst.

\paragraph{Second scenario.}
Our results are depicted in Figure~\ref{fig:additional-plots}.
As in our results for $n = 2$ (Figure~\ref{fig:a-plot}), unit-step mutation performs by far the worst, with exponential-tail and power-law mutation performing both roughly linearly.
The separation between exponential-tail mutation and power-law mutation becomes more distinct with increasing values of~$n$, to the extent where the standard deviations do not intersect anymore for $n = 10$.
This indicates a clear advantage of the power-law mutation for increasing values of~$n$.

\begin{table*}
  \caption{
    Means and standard deviations in percent of 1st hitting time (phase~$1$), Pareto set cover time (phase~$2$), and total runtime (number of function evaluations) for scenario~$1$ for the GSEMO optimizing~$f$ with three different mutation operators: unit-step (U), exponential-tail (E), and power-law (P).
    The column $1/q$ refers to the \emph{step size} chosen for the parameter~$q$ of E.
    For~P, we chose $\beta = \frac{3}{2}$.
    The experiments were run with $a = 200$ and $x^{(0)}_2 = 100 a$ and all other components of~$x^{(0)}$ being~$0$, with~$50$ independent runs per row for $n \in \{4, 10\}$.
  }
  \label{tab:means+sd_n=4_10}
  \begin{minipage}{0.48\linewidth}
    \caption*{Results for $n = 4$.}
    \begin{tabular}{cr*{3}{r@{$\pm$}r}}
                                    & $1/q$ & \multicolumn{2}{r}{1st hit} & \multicolumn{2}{r}{cover} & \multicolumn{2}{r}{total}                         \\
      \toprule
      U                             &       & 850\,395                    & 31                        & 995\,144                  & 34 & 1\,845\,539 & 13 \\ \hdashline
      \vphantom{\rule{0 pt}{1 em}}E & 5     & 164\,192                    & 11                        & 607\,63                   & 33 & 224\,955    & 10 \\
                                    & 10    & 56\,364                     & 10                        & 43\,821                   & 22 & 100\,185    & 10 \\
                                    & 20    & 21\,782                     & 10                        & 38\,192                   & 15 & 59\,974     & 11 \\
                                    & 50    & 10\,701                     & 29                        & 40\,518                   & 17 & 51\,219     & 16 \\
                                    & 100   & 13\,813                     & 47                        & 48\,918                   & 17 & 62\,731     & 18 \\
                                    & 200   & 21\,745                     & 50                        & 65\,441                   & 18 & 87\,186     & 23 \\
                                    & 500   & 48\,866                     & 54                        & 113\,862                  & 17 & 162\,728    & 22 \\ \hdashline
      \vphantom{\rule{0 pt}{1 em}}P &       & 2\,678                      & 38                        & 34\,075                   & 18 & 36\,753     & 17 \\

      \bottomrule
    \end{tabular}
  \end{minipage}
  \begin{minipage}{0.48\linewidth}
    \caption*{Results for $n = 10$.}
    \begin{tabular}{cr*{3}{r@{$\pm$}r}}
                                    & $1/q$ & \multicolumn{2}{r}{1st hit} & \multicolumn{2}{r}{cover} & \multicolumn{2}{r}{total}                         \\
      \toprule
      U                             &       & 1\,792\,117                 & 31                        & 2\,467\,353               & 36 & 4\,259\,470 & 12 \\ \hdashline
      \vphantom{\rule{0 pt}{1 em}}E & 5     & 458\,488                    & 8                         & 167\,105                  & 32 & 625\,593    & 10 \\
                                    & 10    & 162\,492                    & 8                         & 113\,735                  & 20 & 276\,227    & 9  \\
                                    & 20    & 77\,547                     & 12                        & 108\,167                  & 16 & 185\,715    & 9  \\
                                    & 50    & 74\,820                     & 18                        & 123\,049                  & 22 & 197\,869    & 17 \\
                                    & 100   & 113\,681                    & 21                        & 139\,902                  & 17 & 253\,583    & 13 \\
                                    & 200   & 186\,919                    & 25                        & 183\,954                  & 15 & 370\,872    & 16 \\
                                    & 500   & 379\,859                    & 32                        & 321\,510                  & 11 & 701\,369    & 18 \\ \hdashline
      \vphantom{\rule{0 pt}{1 em}}P &       & 8\,516                      & 35                        & 93\,739                   & 17 & 102\,255    & 17 \\
      \bottomrule
    \end{tabular}
  \end{minipage}
\end{table*}

\begin{figure*}
  \begin{subfigure}{0.48\linewidth}
    \includegraphics[width=\columnwidth]{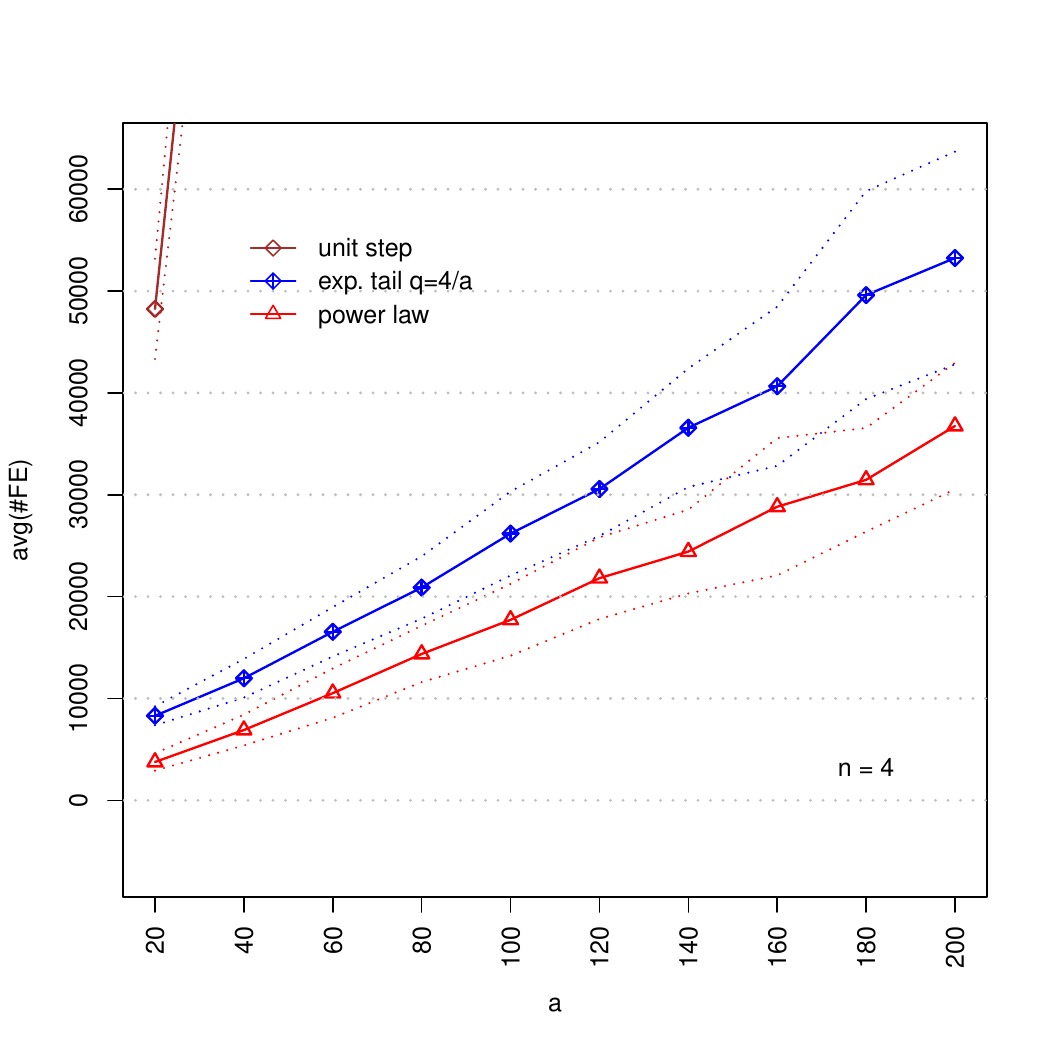}
    \caption{
      Scenario~$2$ for $n = 4$.
    }\label{fig:a-plot-n=4}
  \end{subfigure}
  \hfill
  \begin{subfigure}{0.48\linewidth}
    \includegraphics[width=\columnwidth]{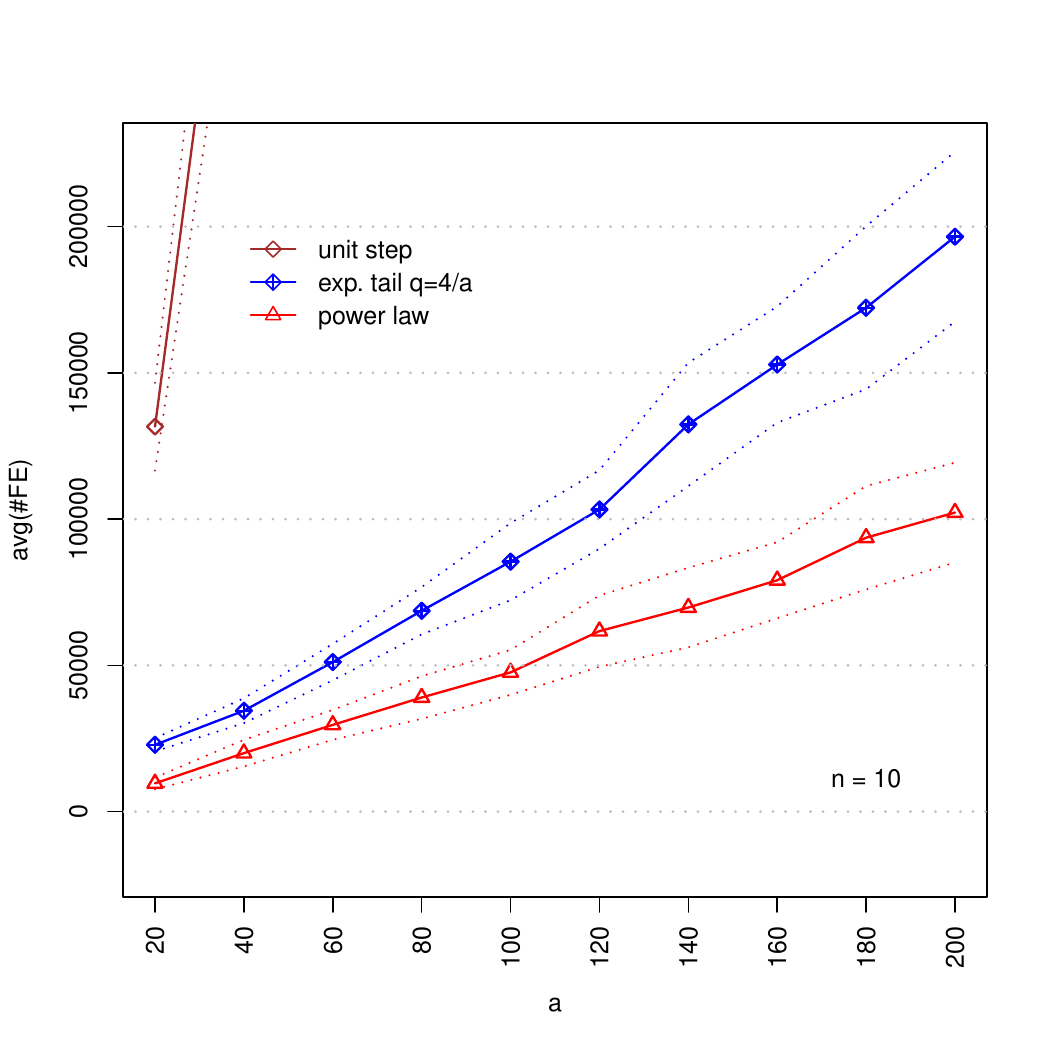}
    \caption{
      Scenario~$2$ for $n = 10$.
    }\label{fig:a-plot-n=10}
  \end{subfigure}
  \caption{
    The results from scenario~$2$ for $n = 4$ (left) and $n = 10$ (right).
    Average function evaluations for varying~$a$ for the GSEMO optimizing~$f$ with three different mutation operators:
    unit-step (brown diamonds), exponential-tail (blue diamonds) with $\frac{1}{q} = \frac{a}{4}$, and power-law (red triangles) with $\beta = \frac{3}{2}$.
    For each point, $50$ independent runs were conducted, choosing~$x^{(0)}_2 = 100a$ and all other components of~$x^{(0)}$ as~$0$.
    The dotted lines depict the standard deviation of each curve.
  }
  \label{fig:additional-plots}
\end{figure*}

\end{document}